\documentclass{article}

 \usepackage[preprint]{neurips_2025}

\usepackage[utf8]{inputenc} 
\usepackage[T1]{fontenc}    
\usepackage{hyperref}       
\usepackage{url}            
\usepackage{booktabs}       
\usepackage{amsfonts}       
\usepackage{nicefrac}       
\usepackage{microtype}      
\usepackage{xcolor}         
\usepackage{algorithm}
\usepackage{algorithmic}
\usepackage{graphicx}
\usepackage{subcaption}

\usepackage{amsmath}
\usepackage{amssymb}
\usepackage{mathtools}
\usepackage{amsthm}
\usepackage{tikz}
\usetikzlibrary{tikzmark}
\usepackage{multirow}
\usepackage{pifont}
\usepackage{adjustbox}
\usepackage[table]{xcolor}
\usepackage[normalem]{ulem}

\makeatletter
\AtBeginDocument{
\renewcommand{\sectionautorefname}{\S\@gobble}

\renewcommand{\subsectionautorefname}{\S\@gobble} 
\renewcommand{\subsubsectionautorefname}{\S\@gobble}
\renewcommand{\appendixautorefname}{Appendix \S\@gobble}

}
\makeatother

\usepackage[capitalize,noabbrev]{cleveref}

\theoremstyle{plain}
\newtheorem{theorem}{Theorem}[section]

\newtheorem{corollary}[theorem]{Corollary}
\theoremstyle{definition}

\theoremstyle{remark}
\newtheorem{remark}[theorem]{Remark}
\usepackage[textsize=tiny]{todonotes}

\definecolor{darkblue}{rgb}{0, 0, 0.5}
\hypersetup{colorlinks=true, citecolor=darkblue, linkcolor=darkblue, urlcolor=darkblue}

\title{Believe Your Model: \\Distribution-Guided Confidence Calibration}

\author{%
  Xizhong Yang$^1$\thanks{Work done during an internship at Kuaishou Technology.}\quad
  Haotian Zhang$^2$\quad
  \setcounter{footnote}{1}
  Huiming Wang$^2$\thanks{Corresponding author: Huiming Wang, Mofei Song.}\quad 
  $\text{Mofei Song}^{1\dag}$ \\[6pt]
  $^1$Southeast University, \quad $^2$Kuaishou Technology\\
  $^\dag$\texttt{huiming\_wang@mymail.sutd.edu.sg}, \quad $^\dag$\texttt{songmf@seu.edu.cn}
}

\begin{document}

\maketitle

\begin{abstract}
    Large Reasoning Models have demonstrated remarkable performance with the advancement of test-time scaling techniques, which enhances prediction accuracy by generating multiple candidate responses and selecting the most reliable answer. While prior work has analyzed that internal model signals like confidence scores can partly indicate response correctness and exhibit a distributional correlation with accuracy, such distributional information has not been fully utilized to guide answer selection.
    Motivated by this, we propose \textit{DistriVoting}, which incorporates distributional priors as another signal alongside confidence during voting. 
    Specifically, our method (1) first decomposes the mixed confidence distribution into positive and negative components using Gaussian Mixture Models, (2) then applies a reject filter based on positive/negative samples from them to mitigate overlap between the two distributions.
    Besides, to further alleviate the overlap from the perspective of distribution itself, we propose \textit{SelfStepConf}, which uses step-level confidence to dynamically adjust inference process, increasing the separation between the two distributions to improve the reliability of confidences in voting.
    Experiments across 16 models and 5 benchmarks demonstrate that our method significantly outperforms state-of-the-art approaches. Code available at \href{https://github.com/yxizhong/SSC}{https://github.com/yxizhong/DistriVoting}.
    
    
\end{abstract}

\section{Introduction}
\label{sec:introduction}
    The advent of techniques like Chain of Thought~\cite{CoT, ReAct} and Test-Time Scaling (TTS)~\cite{TTSSurvey} has led to remarkable performance improvements in Large Reasoning Models (LRMs). However, even though TTS can generate multiple answers or increase token overhead for the same question by increasing test-time computation, enhancing LRMs' test-time performance remains a critical research direction. The core issue is due to the lack of label or reward signals during the test-time phase, which makes it difficult to evaluate the quality of generated answers and dynamically adjust the generation process. To address this problem, some approaches, like s1~\cite{s1}, enhance generation by extending the model's reasoning process without introducing any feedback signals. Other approaches, such as MoB~\cite{MoB} and DORA~\cite{DORA}, utilize external reward models to score multiple generated results and aggregate them to obtain the final answer. In contrast, methods like Self-Consistency~\cite{Self-Consistency}, BoN~\cite{BoN}, Self-Certainty~\cite{Self-Certainty}, and DeepConf~\cite{DeepConf} has demonstrated that internal information can be effective enough to predict the quality of different answers, without introducing additional models. This category of approach balances effectiveness and efficiency, and its benefits become more pronounced with the improvement of the internal information reliability.
    
    Nevertheless, current work on applying internal information to TTS mainly focuses on designing better ways to obtain internal information ~\cite{ConfSurvey}, such as using sentence-level probabilistic measures like perplexity and average log-probability, or using distributional information like Kullback-Leibler divergence and entropy. We provide a more detailed introduction of related works in \autoref{sec:related_works}. 
    While these studies have identified a distributional relationship between internal confidence and answer correctness, where correct and incorrect trajectories typically follow distinct statistical distributions, this observation has only been used to assess the reliability of confidence scores. We argue that explicitly incorporating this distributional prior can further improve the performance of confidence-based voting during the answer selection phase.


    \begin{figure}[t]
        \begin{center}
        \centerline{\includegraphics[width=0.7\columnwidth]{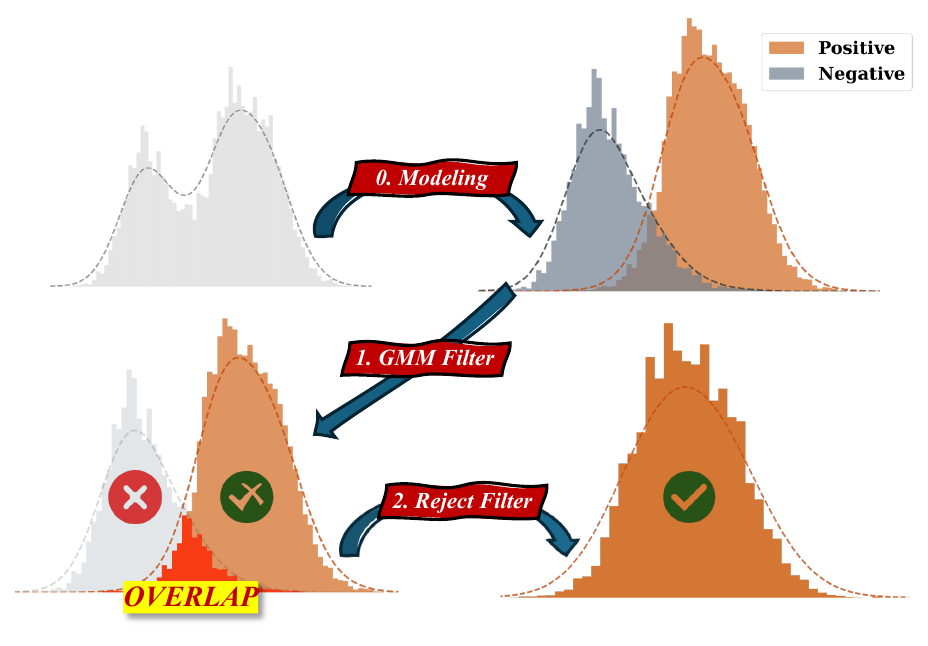}}
        \caption{
            Overview of \textit{DistriVoting}. First model the original distribution using GMM, then filter out negative samples, finally reject false positives from the positive distribution.
        }
        \label{fig:main_fig}
        \end{center}
        \vskip -0.4in
    \end{figure}
    
    
    Motivated by this, we propose \textit{DistriVoting}, which enhances the reliability of confidence in the final voting through a two-step filtering process. Specifically, the process begins by modeling the confidence distribution as a mixture of two Gaussian components using Gaussian Mixture Model (GMM). Subsequently, the distribution is decomposed into positive and negative parts, effectively selecting potential positive answers and filtering out likely incorrect negative answers. However, significant overlap often persists between the two distributions, primarily due to high-confidence incorrect samples and low-confidence correct samples, which may still introduce false positives during the final voting process. 
    To address this issue, we show that the answer voted from negative distribution can be used to reject false positives. 
    Additionally, in order to further alleviate the overlap from a distributional perspective, we propose \textit{SelfStepConf}, which applies confidence to the generation process of individual trajectories, providing real-time supervision signals to intervene in the inference process.


    To evaluate the effectiveness of \textit{DistriVoting} and \textit{SelfStepConf}, we conducted experiments using 16 different models include DeepSeek-Series~\cite{DeepSeek-r1} and Qwen3-Series with thinking and non-thinking modes~\cite{Qwen3}, on 5 reasoning benchmarks include HMMT2025, BRUMO2025~\cite{HMMT&BRUMO}, GPQA-D~\cite{GPQA-D} and AIME~\cite{AIME}. The results show that our proposed \textit{DistriVoting} consistently demonstrates performance gains across different models and benchmarks. Furthermore, \textit{SelfStepConf} increases the separation between the positive and negative distributions, further boosting these improvements.

\section{Preliminaries}
\label{sec:preliminaries}

\subsection{Confidence of Trajectory}


    Prior work shows that LLM's token distributions $\mathbf{P}_i(j)$ can reveal uncertainty and trajectory quality. Following DeepConf~\cite{DeepConf}, we compute confidence via token negative log-probabilities to assess quality during and after inference. For a single trajectory containing $N$ tokens output by the model, we define the trajectory confidence as:
    \begin{equation}
    \label{eq:traj_conf}
    C_{\text{traj}} = -\frac{1}{N_G \times k}\sum_{i \in G}\sum_{j=1}^k \log \mathbf{P}_i(j),
    \end{equation}
    
    where $N_G, k \in \mathbb{N}^+$ and $\mathbf{P} \in \mathbb{R}^{N \times N_v}$.
    Here, $G$ represents the subset of tokens among the $N$ generated tokens that are used to compute the final trajectory-level confidence, which typically corresponds to the last tail step containing the answer. $N_G$ denotes the number of tokens in $G$, and $k$ denotes the number of top-$k$ probabilities from the token logits that participate in computing token-level confidence.

\subsection{Distribution of Confidence}
\label{sec:preliminaries_conf_distribution}
    Following the setting of TTS, we perform multiple repeated sampling for all questions within a single benchmark to form a original distribution. Inspired by previous work on confidence distribution~\cite{Self-Certainty}, we further model it as two normal distributions, where the positive distribution $\mathcal{N}(\mu_{\text{pos}}, \sigma_{\text{pos}}^2)$ with a higher mean $\mu_{\text{pos}}$, and the negative distribution $\mathcal{N}(\mu_{\text{neg}}, \sigma_{\text{neg}}^2)$ with a lower mean $\mu_{\text{neg}}$:
    \begin{equation}
    \begin{aligned}
    X_{\text{pos}} &\sim \mathcal{N}(\mu_{\text{pos}}, \sigma_{\text{pos}}^2), \quad
    X_{\text{neg}} \sim \mathcal{N}(\mu_{\text{neg}}, \sigma_{\text{neg}}^2).
    \end{aligned}
    \end{equation}


\subsection{Distributions Distance and Voting Accuracy}
\label{sec:preliminaries_theorem}
    The primary role of \textit{SelfStepConf} (SSC) in voting is to amplify the separation between the distributions of positive and negative predictions. We formally prove in \autoref{thm:normal_tail_ratio_monotonicity} (proof provided in \autoref{sec:appendix_theorem_proof}) that this enhanced distribution distance directly improves voting accuracy. 
    \begin{theorem}
        \label{thm:normal_tail_ratio_monotonicity}
        Let $f(x) = \frac{1}{\sqrt{2\pi\sigma_1^2}} \exp\left(-\frac{(x-\mu_1)^2}{2\sigma_1^2}\right)$ and $g(x) = \frac{1}{\sqrt{2\pi\sigma_2^2}} \exp\left(-\frac{(x-\mu_2)^2}{2\sigma_2^2}\right)$ be the probability density functions of normal distributions with means $\mu_1, \mu_2$ and variances $\sigma_1^2, \sigma_2^2$ respectively, where $\sigma_1, \sigma_2 > 0$.
        
        Define the integral ratio function:
        \begin{equation}
            R(\mu_1, \mu_2) = \frac{\int_{\frac{\mu_1+\mu_2}{2}}^{\infty} f(x) dx}{\int_{\frac{\mu_1+\mu_2}{2}}^{\infty} g(x) dx},
        \end{equation}
        when $\mu_1 \neq \mu_2$, $R(\mu_1, \mu_2)$ is strictly monotonically increasing with respect to $\delta = \mu_1 - \mu_2$.
    \end{theorem}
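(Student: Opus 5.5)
The plan is to reduce $R$ to an explicit function of $\delta=\mu_1-\mu_2$ alone by standardizing both Gaussian tails, and then read off monotonicity from the monotonicity of the standard normal CDF $\Phi$. Write $m=\frac{\mu_1+\mu_2}{2}$ for the common lower limit of integration. Substituting $z=(x-\mu_1)/\sigma_1$ in the numerator and $z=(x-\mu_2)/\sigma_2$ in the denominator turns each integral into a standard normal tail:
\begin{equation*}
\int_m^\infty f(x)\,dx = 1-\Phi\!\left(\frac{m-\mu_1}{\sigma_1}\right), \qquad \int_m^\infty g(x)\,dx = 1-\Phi\!\left(\frac{m-\mu_2}{\sigma_2}\right).
\end{equation*}

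Next I will note that $m-\mu_1=-\delta/2$ and $m-\mu_2=\delta/2$. Using the symmetry $1-\Phi(-t)=\Phi(t)$, this gives
\begin{equation*}
R(\mu_1,\mu_2)=\frac{\Phi\!\left(\frac{\delta}{2\sigma_1}\right)}{1-\Phi\!\left(\frac{\delta}{2\sigma_2}\right)} =: h(\delta).
\end{equation*}
The key point is that $R$ depends on $(\mu_1,\mu_2)$ only through $\delta$, for fixed $\sigma_1,\sigma_2$. This is what makes the statement ``monotone in $\delta$'' well-posed, and I will state it explicitly.

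For monotonicity, the numerator $\Phi(\delta/(2\sigma_1))$ is strictly increasing and positive, because $\Phi'=\varphi>0$ and $\sigma_1>0$. The denominator $1-\Phi(\delta/(2\sigma_2))$ is strictly decreasing and positive. A ratio of a positive strictly increasing function to a positive strictly decreasing function is strictly increasing. If a derivative-based argument is preferred, compute
\begin{equation*}
h'(\delta)=\frac{\frac{1}{2\sigma_1}\varphi\!\left(\frac{\delta}{2\sigma_1}\right)\left(1-\Phi\!\left(\frac{\delta}{2\sigma_2}\right)\right)+\frac{1}{2\sigma_2}\varphi\!\left(\frac{\delta}{2\sigma_2}\right)\Phi\!\left(\frac{\delta}{2\sigma_1}\right)}{\left(1-\Phi\!\left(\frac{\delta}{2\sigma_2}\right)\right)^2}.
\end{equation*}
Every term in this expression is strictly positive, so $h'(\delta)>0$.

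I do not expect a genuine obstacle here. The only point needing care is the interpretive one above: one must justify treating $R$ as a function of $\delta$ by checking that the midpoint threshold cancels the dependence on $\mu_1+\mu_2$. I will also remark that the argument actually gives strict monotonicity on all of $\mathbb{R}$, including $\delta=0$, so the hypothesis $\mu_1\neq\mu_2$ is not needed for the conclusion. In particular, $R$ increases with the separation $\delta>0$ between the positive and negative components, which is the interpretation used for \textit{SelfStepConf}.
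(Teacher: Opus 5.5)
Your proposal is correct and follows essentially the same route as the paper: standardize both tails to get $R=\Phi(\delta/(2\sigma_1))/\bigl(1-\Phi(\delta/(2\sigma_2))\bigr)$, then show the derivative (the same expression the paper writes down) is positive term by term. Your two extra remarks are also accurate: the midpoint threshold is what makes $R$ a function of $\delta$ alone, and the argument works for every $\delta\in\mathbb{R}$, so neither the paper's ``without loss of generality $\mu_1>\mu_2$'' nor the hypothesis $\mu_1\neq\mu_2$ is actually needed.
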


    \begin{theorem}
        \label{thm:general_voting_monotonicity}
        Under the conditions of \autoref{thm:normal_tail_ratio_monotonicity}, let $\{w_i\}_{i=1}^n$ be non-negative weights of all $n$ samples, and consider $\mathcal{X}^{0} \sim f(x)$ is the correct answer and $\mathcal{X}^{k} \sim g(x)$ ($k=1,...,m, m\le n-1$) are $m$ incorrect answers, all mutually independent. The weighted voting accuracy:
        \begin{equation}
            P_{\mathrm{vote}}(\delta) = \mathbb{P}\left(\sum_{x_i\in \mathcal{X}^{0}} w_ix_i > \max_{1\leq k\leq m} \sum_{x_j \in \mathcal{X}^{k}}w_jx_j\right),
        \end{equation}
        has a lower bound that increases with $\delta = \mu_1 - \mu_2$.
    \end{theorem}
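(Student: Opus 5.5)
The approach is to make the unspecified lower bound explicit with a union bound over the $m$ incorrect answers, and to show that each term is a Gaussian tail whose argument grows with $\delta$.

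\textbf{Step 1 (notation).} For each answer $k\in\{0,\dots,m\}$ let $S_k=\sum_{x_j\in\mathcal{X}^k} w_j x_j$. Write $W_k=\sum_{x_j\in\mathcal{X}^k} w_j$ and $V_k=\sum_{x_j\in\mathcal{X}^k} w_j^2$. Because the samples are independent Gaussians, $S_0\sim\mathcal{N}(W_0\mu_1,V_0\sigma_1^2)$ and $S_k\sim\mathcal{N}(W_k\mu_2,V_k\sigma_2^2)$ for $k\ge 1$, all independent. I assume $W_0>0$, i.e.\ the correct answer has at least one sample with positive weight; otherwise $S_0\equiv 0$ and the claim is degenerate.

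\textbf{Step 2 (pairwise comparisons).} For each $k\ge1$ set $D_k=S_0-S_k$. By independence $D_k$ is Gaussian with
\[
m_k(\delta)=W_0\mu_1-W_k\mu_2,\qquad s_k^2=V_0\sigma_1^2+V_k\sigma_2^2 .
\]
The variance $s_k^2$ does not depend on the means. Hence $\mathbb{P}(S_0\le S_k)=\Phi\bigl(-m_k(\delta)/s_k\bigr)$. The correct answer loses the vote only if $D_k\le 0$ for some $k$. The union bound therefore gives
\[
P_{\mathrm{vote}}(\delta)\;\ge\;L(\delta):=1-\sum_{k=1}^m\Phi\!\left(-\frac{m_k(\delta)}{s_k}\right).
\]
A slightly sharper alternative, in the spirit of \autoref{thm:normal_tail_ratio_monotonicity}, uses the threshold $t=(\mu_1+\mu_2)/2$: if $S_0$ lies above $W_0 t$ and every $S_k$ lies below it, the correct answer wins. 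I keep the union bound as the primary route because it is cleaner.

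\textbf{Step 3 (monotonicity in $\delta$).} I fix the parametrization of a change in $\delta$ in either of the two natural ways.
\begin{itemize}
\item Hold $\mu_2$ fixed and set $\mu_1=\mu_2+\delta$. Then $m_k(\delta)=W_0\delta+(W_0-W_k)\mu_2$.
\item Hold the midpoint $c$ fixed and set $\mu_{1,2}=c\pm\delta/2$. Then $m_k(\delta)=(W_0-W_k)c+\tfrac{\delta}{2}(W_0+W_k)$.
\end{itemize}
In both cases $\partial m_k/\partial\delta>0$ because $W_0>0$ and $W_k\ge0$. Since $s_k$ does not depend on $\delta$, each term $\Phi(-m_k/s_k)$ is strictly decreasing in $\delta$. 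So $L(\delta)$ is strictly increasing, which proves the theorem. One can also record $\lim_{\delta\to\infty}L(\delta)=1$ to show that the bound is not vacuous for large separation.

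\textbf{Main obstacle.} The statement does not specify which lower bound is meant or how $\mu_1,\mu_2$ move as $\delta$ changes. The real work is choosing a bound that is provably monotone and stating the parametrization explicitly. The bound can be negative for small $\delta$ or large $m$; I will note that it is still monotone there, and that one can use $\max(L,0)$ if a bound in $[0,1]$ is wanted.

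If one wants to tie the result to \autoref{thm:normal_tail_ratio_monotonicity}, the threshold route is the one to use. In that case I would verify that the product bound
\[
\mathbb{P}(S_0>W_0t)\prod_{k}\mathbb{P}(S_k<W_0t)
\]
is increasing in $\delta$ when $t$ is the midpoint. This requires each factor to be monotone, which holds under the midpoint parametrization when $W_k\le W_0$; that restriction is why I prefer the union-bound argument.
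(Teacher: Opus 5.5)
Your proof is correct, but it follows a different route from the paper's. The paper does not compare the correct answer with each competitor separately. Instead it lumps the incorrect samples into one aggregate $S_g=\sum_{j\in I_g}w_jX_j$ and asserts $P_{\mathrm{vote}}(\delta)\ge\mathbb{P}(S_f>S_g)$. It then evaluates this as $\Phi\bigl((\delta W_f+\mu_2(W_f-W_g))/\sqrt{\sigma_1^2W_f^{(2)}+\sigma_2^2W_g^{(2)}}\bigr)$ with $\mu_2$ held fixed, which is your first parametrization. The $\delta$-derivative is positive because $W_f>0$.

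The analytic core is the same as your Step 3: a Gaussian difference whose variance does not depend on $\delta$ and whose mean has positive slope. The paper's bound is a single $\Phi$ value, so it always lies in $[0,1]$, whereas your $1-\sum_k\Phi(-m_k/s_k)$ can be negative.

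What your union bound buys is validity. For $m\ge 2$, the inclusion $\{S_f>\sum_k S_k\}\subseteq\{S_f>\max_k S_k\}$ behind the paper's inequality requires every $S_k\ge 0$, and the Gaussian model does not guarantee this. If $S_g$ is read as a single competitor's sum, the inequality actually runs the other way. So the paper's step is rigorous only for $m=1$, where it is an equality. Your Bonferroni bound holds for every $m$ under exactly the stated hypotheses. You also correctly stop at monotonicity of the bound; the paper's closing claim that $P_{\mathrm{vote}}$ itself is strictly increasing does not follow from monotonicity of a lower bound.

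One correction to your closing remark: the midpoint-threshold product bound does not need $W_k\le W_0$. With $t=(\mu_1+\mu_2)/2$, the first factor is $\mathbb{P}(S_0>W_0t)=\Phi\bigl(W_0\delta/(2\sigma_1\sqrt{V_0})\bigr)$. For $W_k>0$, the argument of $\mathbb{P}(S_k<W_0t)=\Phi\bigl((W_0t-W_k\mu_2)/(\sigma_2\sqrt{V_k})\bigr)$ has $\delta$-slope $W_k/(2\sigma_2\sqrt{V_k})$ under the midpoint parametrization and $W_0/(2\sigma_2\sqrt{V_k})$ with $\mu_2$ fixed, both nonnegative. Since these events involve independent variables, that route also gives a valid, monotone lower bound. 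It stays in $[0,1]$ and ties directly to \autoref{thm:normal_tail_ratio_monotonicity}.
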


    Furthermore, \autoref{sec:analysis_ssc_separation_comparison} experimentally validates how \textit{SelfStepConf} achieves superior voting performance through effective distribution separation.

\section{Methodology}
\label{sec:methods}

\subsection{Overview}
\label{sec:methods_overview}

    Follow the sequence of inference and voting,
    we first propose \textit{SelfStepConf} (SSC), a method that dynamically adjusts confidence during reasoning. It monitors step-wise confidence in real-time and triggers self-reflection when confidence declines significantly. 
    After generation, we filter candidate trajectories involved in the final voting via two stages: \textit{GMM Filter} and \textit{Reject Filter}, and use \textit{HierVoting} as the basic voting method for the entire voting process.

\subsection{SelfStepConf}
\label{sec:methods_selfstepconf}
    \paragraph{Reflection Trigger.} Based on the trajectory-level confidence calculation formula in \autoref{eq:traj_conf}, we monitor the confidence of each step in real-time during the inference process. First, for the generated $i$-th token, we calculate the token confidence as:
    \begin{equation}
    C_{\text{token}-i} = -\frac{1}{k}\sum_{j=1}^k \log \mathbf{P}_i(j).
    \end{equation}
    Following each step, we utilize the formula to compute the $m$-th reasoning step confidence, the $G_m$ is defined as:
    \begin{equation}
    G_m = \{ t_i | \text{index}(t_s^{m-1}) < i \leq \text{index}(t_s^m) \}, m \in \mathbb{N},
    \end{equation}
    
    where $t_s^m$ represents the $m$-th step token in the trajectory. 
    Therefore, the current step confidence is expressed as:
    \begin{equation}
    C_{G_m} = -\frac{1}{N_{G_m} \times k}\sum_{i \in G_m}\sum_{j=1}^k \log \mathbf{P}_i(j).
    \end{equation}
    After computing the reasoning step confidence, we evaluate its relative change compared to the dynamically adaptive confidence threshold $\tau_{\text{conf}}$:
    \begin{equation}
    \label{eq:delta_conf}
    \Delta_{\text{conf}} = \frac{C_{G_m}}{\tau_{\text{conf}}}, \quad m \geq 1.
    \end{equation}

    If $\Delta_{\text{conf}} < \delta$, where $\delta$ is a control parameter, the system triggers self-reflection due to poor step quality (without updating $\tau_{\text{conf}}$). Otherwise, $\tau_{\text{conf}}$ is updated via Exponential Moving Average (EMA) using $C_{G_m}$. Furthermore, in real practice, to maintain generation stability, reflection only activates during confidence decline, preventing excessive intervention that could disrupt the reasoning flow. Overall, the self-reflection trigger condition is:
    \begin{equation}
    \label{eq:reflection_trigger}
    \mathbb{I}_{R} = \mathbb{I}[(\Delta_{\text{conf}} < \delta) \wedge (C_{G_m} < C_{G_{m-1}})].
    \end{equation}
    
    The corresponding $\tau_{\text{conf}}$ update method follows:
    \begin{equation}
    \label{eq:conf_threshold_EMA_update}
    \tau_{\text{conf}} := \begin{cases}
    C_{G_m}, & m = 0, \\
    \tau_{\text{conf}}, & m > 0 \wedge \mathbb{I}_{R} = 1, \\
    \alpha \tau_{\text{conf}} + (1-\alpha) C_{G_m}, & m > 0 \wedge \mathbb{I}_{R} = 0.
    \end{cases}
    \end{equation}

    

    \paragraph{Reflection Injection.} To perform reflection, we first define a reflection information token list $T_r = [t_r^1, t_r^2, \ldots, t_r^{N_r}]$. When $C_{G_m}$ significantly decreases, we inject reflection information by forcibly 
    swapping the highest-probability token's probability with that of the corresponding reflection token, and sample with the temperature as 0 for $N_r$ times:
    \begin{equation}
    \label{eq:add_reflection}
    \begin{aligned}
    \mathbf{P}_{r_i} &= [p_1, p_2, \ldots, \tikzmarknode{pk}{p_k^*}, \ldots, \tikzmarknode{pri}{p_{i}}, \ldots, p_{N_v}], \\
    \mathbf{P}_{r_i}' &= [p_1, p_2, \ldots, \tikzmarknode{pri2}{p_{i}}, \ldots, \tikzmarknode{pk2}{p_k^*}, \ldots, p_{N_v}],
    \end{aligned}
    \end{equation}
    \begin{tikzpicture}[overlay, remember picture]
        \draw[->, thick] (pk) to[bend left=15] (pk2);
        \draw[->, thick] (pri) to[bend right=15] (pri2);
    \end{tikzpicture}where $\mathbf{P}_{r_i}$ represents the logits distribution of the $r_i$-th token in the entire generation trajectory, $p_k^*$ represents the highest probability in $\mathbf{P}_{r_i}$, and $p_{i}$ represents the probability corresponding to the $i$-th reflection token $t_r^i$ in $T_{r}$. 

    It is important to note that since the probability swap within $\mathbf{P}_{r_i}$ does not affect its confidence $C_{\text{token}-r_i}$, this reflection injection method does not impact the calculation of step confidence during generation.

\subsection{Distribution Voting}
\label{sec:methods_distribution_voting}
    \paragraph{GMM Modeling.} After inference, for each question's $\text{Budget}$ trajectories $\mathcal{V}$, we compute trajectory-level confidence scores $\mathcal{C}$ via \autoref{eq:traj_conf}, then predict their distribution using GMM.

    Specifically, building on the confidence distributions in \autoref{sec:preliminaries_conf_distribution}, we model the unlabeled trajectory samples using GMM, approximating their bimodal distribution (positive and negative reasoning paths) with two Gaussian components:
    \begin{equation}
    p(x) = \pi_1 \mathcal{N}(x|\mu_1, \sigma_1^2) + \pi_2 \mathcal{N}(x|\mu_2, \sigma_2^2),
    \end{equation}
    where $\pi_1, \pi_2$ are mixing weights satisfying $\pi_1 + \pi_2 = 1$ and $\pi_1, \pi_2 > 0$, and each component $\mathcal{N}(x|\mu_i, \sigma_i^2)$ for $i \in \{1,2\}$ follows a normal distribution:
    \begin{equation}
    \mathcal{N}(x|\mu_i, \sigma_i^2) = \frac{1}{\sqrt{2\pi\sigma_i^2}} \exp\left(-\frac{(x-\mu_i)^2}{2\sigma_i^2}\right).
    \end{equation}
    The mixture distribution form as follow, where $\mu_1, \mu_2$ and $\sigma_1^2, \sigma_2^2$ are the means and variances of the two distributions:
    \begin{equation}
        \begin{aligned}
        p(x) = &\pi_1 \frac{1}{\sqrt{2\pi\sigma_1^2}} \exp\left(-\frac{(x-\mu_1)^2}{2\sigma_1^2}\right) +
        \pi_2 \frac{1}{\sqrt{2\pi\sigma_2^2}} \exp\left(-\frac{(x-\mu_2)^2}{2\sigma_2^2}\right).
        \end{aligned}
    \end{equation}

    

    \paragraph{GMM Filter.} To assign correctness categories, we apply the mean-based mapping $\Phi(\cdot)$ that associates the higher-mean component to the positive distribution and the lower-mean component to the negative distribution:
    \begin{equation}
    \begin{aligned}
    \label{eq:gmm_split}
    X_{\text{pos}} &\sim \mathcal{N}\left(\underset{i}{\arg\max}(\mu_i), \sigma_i^2\right), \quad
     X_{\text{neg}} \sim \mathcal{N}\left(\underset{i}{\arg\min}(\mu_i), \sigma_i^2\right).
    \end{aligned}
    \end{equation}
    
    After modeling the two distributions, we obtain the potentially correct trajectories $\mathcal{V}_{\text{pos}}$ and potentially incorrect trajectories $\mathcal{V}_{\text{neg}}$, along with their confidence sets $\mathcal{C}_{\text{pos}}$ and $\mathcal{C}_{\text{neg}}$. We initially form the candidate voting pool using $\mathcal{V}_{\text{pos}}$:
    \begin{equation}
        \label{eq:get_traj}
        \begin{gathered}
            \mathcal{V}_{\text{pos}} = \{V_\text{traj} \mid V_\text{traj} \in \mathcal{N}(\mu_{\text{pos}}, \sigma_{\text{pos}}^2)\}, \quad
            \mathcal{V}_{\text{neg}} = \{V_\text{traj} \mid V_\text{traj} \in \mathcal{N}(\mu_{\text{neg}}, \sigma_{\text{neg}}^2)\}.
        \end{gathered}
    \end{equation}
    \paragraph{Reject Filter.} Although the GMM initially separates the positive and negative distributions, there is often overlap between them. To address this issue, we aim to use $\mathcal{V}_{\text{neg}}$ to further eliminate false positive samples from the candidate voting pool. Specifically, we first aim to select the most likely incorrect negative answers by using the negative values of $\mathcal{C}_{\text{neg}}$ as weights in the voting process:
    \begin{equation}
    A_{\text{neg}} = f_{\text{vote}}(\mathcal{V}_{\text{neg}}, -\mathcal{C}_{\text{neg}}),
    \end{equation}
    where $f_{\text{vote}}$ can be any voting methods. To prevent low-quality filtering, we further propose \textit{HierVoting} as $f_{\text{HierV}}$ detailed in the next part. We show that all voting methods benefit from the reject-filtering setup, while using \textit{HierVoting} significantly stronger performance as analyzed in  \autoref{sec:appendix_complete_main_results}.

    Afterward, we further filter the candidate voting pool based on this negative answer when $A_{\text{pos}} \neq A_{\text{neg}}$ (note that there is a one-to-many relationship between correct and incorrect answers), to prevent eliminating the true positive answer:
    \begin{equation}
    \label{eq:reject_filer}
    \hat{\mathcal{V}}_{\text{pos}} = \{V_\text{traj} \mid V_\text{traj} \in \mathcal{N}(\hat{\mu}_{\text{pos}}, \hat{\sigma}_{\text{pos}}^2)\},
    \end{equation}
    where $\mathcal{N}(\hat{\mu}_{\text{pos}}, \hat{\sigma}_{\text{pos}}^2)$ is obtained using the filtered trajectories $
    \Phi(\text{GMM}(\mathcal{C}_{A_{\text{traj}} \neq A_{\text{neg}}}))$. After applying the negative answer reject filter, we use the final positive pool $\hat{\mathcal{V}}_{\text{pos}}$ to vote for the final answer:
    \begin{equation}
    A_{\text{final}} = f_{\text{HierV}}(\hat{\mathcal{V}}_{\text{pos}}, \hat{\mathcal{C}}_{\text{pos}}).
    \end{equation}
    \paragraph{Hierarchical Voting.} In the basic voting process, we input a set of trajectories $\mathcal{V}$ and their confidences $\mathcal{C}$. Considering that the ratio of correct to incorrect answers varies across different confidence intervals, we adopt a \textit{hierarchical voting} approach in voting. 
    First, we divide the confidences into $N_{\mathcal{C}}$ sub-intervals:
    \begin{equation}
    \label{eq:divide_conf2interval}
        \mathcal{C}^i = [c_{\min} + (i-1)h, c_{\min} + ih], \quad i \in [1, N_{\mathcal{C}}],
    \end{equation}
    where $c_{\min} = \min(\mathcal{C})$ and $h = \frac{\max(\mathcal{C}) - \min(\mathcal{C})}{N_{\mathcal{C}}}$. Then perform $f_{\text{WMaj}}$ within each interval to select an interval answer:
    \begin{equation}
    \label{eq:voting_interval}
    \mathcal{A}_{\text{sub}} = f_{\text{WMaj}}(\{V_\text{traj}, C_\text{traj} \mid \inf_{\mathcal{C}^i} < C_{\text{traj}} \leq \sup_{\mathcal{C}^i}\}).
    \end{equation}
    Finally, we perform weighted majority voting on multiple interval answers to select the voting answer:
    \begin{equation}
    \label{eq:voting_final}
    f_{\text{HierV}}(\mathcal{V}, \mathcal{C}) = f_{\text{WMaj}}(\{V_\text{sub}^i, \mathcal{\bar{C}}^i_{A_{\text{traj}}=A_{\text{sub}}^i} \mid A_{\text{sub}}^i \in \mathcal{A}_{\text{sub}}\}),
    \end{equation}
    where $f_{\text{WMaj}}(\cdot)$ represents the \textit{Weighted Majority Voting}:
    \begin{equation}
    \label{eq:voting_maj}
    f_{\text{WMaj}}(\mathcal{V}, \mathcal{C}) = \arg\max_{\text{ans}} \sum_{\text{traj} \in \mathcal{V}} \mathbb{I}(A_{\text{traj}} = \text{ans}) \cdot C_{\text{traj}}.
    \end{equation}
    In conclusion, by removing true negative and false positive answers through the two filtering processes, we have enhanced the reliability of the confidence in the final voting.


\section{Experiment}
\label{sec:experiment}

\subsection{Experimental Description}
    In this section, we primarily validate the effectiveness of the proposed voting methods, including \textit{DistriVoting} and \textit{SelfStepConf} (SSC). For \textit{DistriVoting}, we primarily compare its effectiveness with existing test-time scaling voting methods, such as Self-Consistency (SC)~\cite{Self-Consistency}, BoN~\cite{BoN}, MoB~\cite{MoB}, Weighted-SC (WSC)~\cite{WSC}, and DeepConf (WSC-Top50)~\cite{DeepConf}. We further evaluate the superiority of the \textit{GMM Filter} component over the Top50 Filter in this section, while the analysis of \textit{Reject Filter} and \textit{HierVoting} is provided in \autoref{sec:analysis_distribution_voting},  \autoref{sec:appendix_parameters_analysis_Nc} and \autoref{sec:appendix_complete_main_results}. For \textit{SelfStepConf}, we focus on its contribution to voting accuracy, while other detailed analysis presented in \autoref{sec:analysis_ssc_separation_comparison}, \autoref{sec:analysis_ssc_pass@k} and \autoref{sec:analysis_ssc_token}. Additionally, for the Qwen3 model series, we employ \textit{thinking mode} by default, with asterisk-marked \textbf{models*} denoting \textit{non-thinking} variants. All experiments use temperature $t=0.6$ unless otherwise noted, except for non-thinking models which use $t=0.7$. Detailed descriptions provided in \autoref{sec:exp_descrip}, pseudo code provided in \autoref{sec:appendix_pseudo_code}, parameter analysis provided in \autoref{sec:appendix_parameters_analysis}, step split methods and reflection information provided in \autoref{sec:supp_exp}.

\subsection{Main Results}
\label{sec:experiment_main_results}
    \begin{table*}[t]
    \caption{
    Main results of \textit{SelfStepConf} (SSC) and \textit{DistriVoting} across benchmarks. Budget is set to 128 with 64 repetitions. SC denotes Self-Consistency (Majority Voting), WSC represents Weighted Self-Consistency (Weighted MajVoting), BoN indicates Best of N. The * marks answers generated using SSC, \textbf{bold} and \uline{underline} respectively indicate the optimal and suboptimal results, and $\pm$ represents the variation range of multiple repetitions.
    Additional experiments and ablation provided in
    \autoref{tab:appendix_complete_main_results_thinking}, \autoref{tab:appendix_complete_main_results_nonthinking}, \autoref{tab:appendix_complete_main_results_ablation} and 
    \autoref{tab:appendix_complete_main_results_ablation_qwen3_32B} in \autoref{sec:appendix_complete_main_results}.}
    \label{tab:main_results}
    \begin{center}
    \resizebox{0.97\textwidth}{!}{
        \begin{tabular}{llcccccc}
        \toprule
        Model & Method & HMMT2025 & GPQA-D & AIME2024 & AIME2025 & BRUMO2025 & Avg.\\
        \midrule
        \multirow{12}{*}{DeepSeek-R1-8B}
        & SC                & 69.11{\scriptsize$\pm$0.23} & 67.50{\scriptsize$\pm$0.07} & 86.67{\scriptsize$\pm$0.00} & 80.36{\scriptsize$\pm$0.13} & 93.07{\scriptsize$\pm$0.13} & 73.09{\scriptsize$\pm$0.06} \\
        & WSC               & 69.69{\scriptsize$\pm$0.18} & 67.65{\scriptsize$\pm$0.04} & 86.67{\scriptsize$\pm$0.00} & 80.78{\scriptsize$\pm$0.13} & 93.33{\scriptsize$\pm$0.00} & 73.30{\scriptsize$\pm$0.03} \\
        & BoN               & 70.05{\scriptsize$\pm$0.34} & 67.91{\scriptsize$\pm$0.17} & 90.52{\scriptsize$\pm$0.11} & 77.34{\scriptsize$\pm$0.30} & 92.24{\scriptsize$\pm$0.19} & 73.43{\scriptsize$\pm$0.13} \\
        & MoB-Adaptive      & 76.98{\scriptsize$\pm$0.76} & 68.70{\scriptsize$\pm$0.23} & 89.95{\scriptsize$\pm$0.84} & 84.53{\scriptsize$\pm$0.37} & 93.33{\scriptsize$\pm$0.39} & 75.30{\scriptsize$\pm$0.33} \\
        \cmidrule(lr){2-8}
        & WSC-Top50         & 73.80{\scriptsize$\pm$0.44} & 68.58{\scriptsize$\pm$0.04} & 90.00{\scriptsize$\pm$0.18} & 82.55{\scriptsize$\pm$0.34} & 93.33{\scriptsize$\pm$0.00} & 74.75{\scriptsize$\pm$0.07} \\ 
        & \cellcolor{yellow!20}WSC-GMM           & \cellcolor{yellow!20}82.50{\scriptsize$\pm$0.49} & \cellcolor{yellow!20}69.58{\scriptsize$\pm$0.13} & \cellcolor{yellow!20}93.13{\scriptsize$\pm$0.15} & \cellcolor{yellow!20}83.91{\scriptsize$\pm$0.99} & \cellcolor{yellow!20}93.59{\scriptsize$\pm$0.08} & \cellcolor{yellow!20}76.64{\scriptsize$\pm$0.11} \\
        & \cellcolor{yellow!30}WSC-GMM*          & \cellcolor{yellow!30}\uline{84.17{\scriptsize$\pm$0.54}} & \cellcolor{yellow!30}\uline{70.11{\scriptsize$\pm$0.07}} & \cellcolor{yellow!30}\textbf{93.33{\scriptsize$\pm$0.36}} & \cellcolor{yellow!30}84.38{\scriptsize$\pm$0.59} & \cellcolor{yellow!30}\uline{94.17{\scriptsize$\pm$0.44}} & \cellcolor{yellow!30}\uline{77.24{\scriptsize$\pm$0.14}} \\
        \cmidrule(lr){2-8}
        & \cellcolor{orange!10}DIS-Top50         & \cellcolor{orange!10}79.27{\scriptsize$\pm$0.65} & \cellcolor{orange!10}69.52{\scriptsize$\pm$0.19} & \cellcolor{orange!10}93.18{\scriptsize$\pm$0.13} & \cellcolor{orange!10}84.43{\scriptsize$\pm$0.23} & \cellcolor{orange!10}93.28{\scriptsize$\pm$0.18} & \cellcolor{orange!10}76.32{\scriptsize$\pm$0.05} \\
        & \cellcolor{orange!20}DIS-GMM           & \cellcolor{orange!20}82.55{\scriptsize$\pm$0.31} & \cellcolor{orange!20}69.82{\scriptsize$\pm$0.09} & \cellcolor{orange!20}93.13{\scriptsize$\pm$0.18} & \cellcolor{orange!20}\uline{85.52{\scriptsize$\pm$0.60}} & \cellcolor{orange!20}93.70{\scriptsize$\pm$0.10} & \cellcolor{orange!20}76.95{\scriptsize$\pm$0.10} \\
        & \cellcolor{orange!30}DIS-GMM*     & \cellcolor{orange!30}\textbf{84.95{\scriptsize$\pm$0.86}} & \cellcolor{orange!30}\textbf{70.63{\scriptsize$\pm$0.17}} & \cellcolor{orange!30}\uline{93.23{\scriptsize$\pm$0.05}} & \cellcolor{orange!30}\textbf{86.64{\scriptsize$\pm$0.55}} & \cellcolor{orange!30}\textbf{94.27{\scriptsize$\pm$0.17}} & \cellcolor{orange!30}\textbf{77.84{\scriptsize$\pm$0.28}} \\
        \midrule
        \midrule
        \multirow{12}{*}{Qwen3-32B}
        & SC                & 62.08{\scriptsize$\pm$0.31} & 70.30{\scriptsize$\pm$0.20} & 86.46{\scriptsize$\pm$0.33} & 76.98{\scriptsize$\pm$0.26} & 93.33{\scriptsize$\pm$0.00} & 73.85{\scriptsize$\pm$0.21} \\
        & WSC               & 62.24{\scriptsize$\pm$0.39} & 70.55{\scriptsize$\pm$0.14} & 86.88{\scriptsize$\pm$0.67} & 77.08{\scriptsize$\pm$0.10} & 93.33{\scriptsize$\pm$0.00} & 74.07{\scriptsize$\pm$0.16} \\
        & BoN               & 59.48{\scriptsize$\pm$0.46} & 72.00{\scriptsize$\pm$0.14} & 87.29{\scriptsize$\pm$0.28} & 74.01{\scriptsize$\pm$0.35} & 87.08{\scriptsize$\pm$0.32} & 73.87{\scriptsize$\pm$0.31} \\
        & MoB-Adaptive      & 63.85{\scriptsize$\pm$0.31} & 72.08{\scriptsize$\pm$0.53} & 88.44{\scriptsize$\pm$0.38} & 78.13{\scriptsize$\pm$0.40} & 92.66{\scriptsize$\pm$0.13} & 75.36{\scriptsize$\pm$0.04} \\
        \cmidrule(lr){2-8}
        & WSC-Top50         & 63.96{\scriptsize$\pm$0.29} & 72.03{\scriptsize$\pm$0.17} & 88.28{\scriptsize$\pm$0.31} & 76.93{\scriptsize$\pm$0.26} & 92.71{\scriptsize$\pm$0.00} & 75.22{\scriptsize$\pm$0.12} \\
        & \cellcolor{yellow!20}WSC-GMM           & \cellcolor{yellow!20}64.84{\scriptsize$\pm$0.23} & \cellcolor{yellow!20}72.42{\scriptsize$\pm$0.11} & \cellcolor{yellow!20}88.65{\scriptsize$\pm$0.31} & \cellcolor{yellow!20}79.22{\scriptsize$\pm$0.34} & \cellcolor{yellow!20}92.71{\scriptsize$\pm$0.03} & \cellcolor{yellow!20}75.79{\scriptsize$\pm$0.10} \\
        & \cellcolor{yellow!30}WSC-GMM*          & \cellcolor{yellow!30}\uline{65.21{\scriptsize$\pm$0.57}} & \cellcolor{yellow!30}72.43{\scriptsize$\pm$0.25} & \cellcolor{yellow!30}\textbf{90.16{\scriptsize$\pm$0.21}} & \cellcolor{yellow!30}\uline{80.00{\scriptsize$\pm$0.42}} & \cellcolor{yellow!30}\uline{93.28{\scriptsize$\pm$0.18}} & \cellcolor{yellow!30}\uline{76.10{\scriptsize$\pm$0.18}} \\
        \cmidrule(lr){2-8}
        & \cellcolor{orange!10}DIS-Top50         & \cellcolor{orange!10}64.95{\scriptsize$\pm$0.24} & \cellcolor{orange!10}72.33{\scriptsize$\pm$0.33} & \cellcolor{orange!10}88.70{\scriptsize$\pm$0.52} & \cellcolor{orange!10}79.06{\scriptsize$\pm$0.34} & \cellcolor{orange!10}\uline{93.28{\scriptsize$\pm$0.03}} & \cellcolor{orange!10}75.79{\scriptsize$\pm$0.20} \\
        & \cellcolor{orange!20}DIS-GMM           & \cellcolor{orange!20}64.43{\scriptsize$\pm$0.21} & \cellcolor{orange!20}\uline{72.74{\scriptsize$\pm$0.25}} & \cellcolor{orange!20}89.01{\scriptsize$\pm$0.39} & \cellcolor{orange!20}78.70{\scriptsize$\pm$0.23} & \cellcolor{orange!20}93.23{\scriptsize$\pm$0.08} & \cellcolor{orange!20}75.99{\scriptsize$\pm$0.09} \\
        & \cellcolor{orange!30}DIS-GMM*          & \cellcolor{orange!30}\textbf{65.73{\scriptsize$\pm$0.05}} & \cellcolor{orange!30}\textbf{73.18{\scriptsize$\pm$0.02}} & \cellcolor{orange!30}\uline{89.11{\scriptsize$\pm$0.50}} & \cellcolor{orange!30}\textbf{80.05{\scriptsize$\pm$0.44}} & \cellcolor{orange!30}\textbf{93.33{\scriptsize$\pm$0.08}} & \cellcolor{orange!30}\textbf{76.53{\scriptsize$\pm$0.08}} \\
        \bottomrule
        \end{tabular}
    }
    \end{center}
    \vskip -0.1in
\end{table*}
    In our main experiments, we primarily evaluated two models: DeepSeek-R1-8B and Qwen3-32B, testing their voting performance on five mathematical reasoning benchmarks. Beyond comparing with other TTS methods, including SC, BoN, MoB, and WSC, we demonstrate the effectiveness improvements brought by \textit{DistriVoting} 
    and \textit{SelfStepConf}.

    Specifically, as shown in \autoref{tab:main_results}, we first compared the adaptive \textit{GMM Filter} (WSC/DIS-GMM) with the fixed top-threshold filter (WSC/DIS-Top50, with the rationale for Top50 selection analyzed in \autoref{sec:analysis_top50_selection}). The results show that across the two models, GMM improved performance from 74.75\% and 75.22\% to 76.64\% and 75.79\%, respectively, compared to Top50 using WSC, and improved performance from 76.32\% and 75.79\% to 76.95\% and 75.99\% using DIS.
    
    Building upon the filtering mechanism, we then evaluated our proposed \textit{DistriVoting} against the naive weighted voting method WSC. The results demonstrate that \textit{DistriVoting} consistently delivers superior performance compared to WSC across all three models under both Top50 filter and \textit{GMM Filter} settings, showcasing the effectiveness of incorporating distribution-aware voting mechanisms. 
    
    To strengthen confidence distributions discrimination, we investigated the impact of \textit{SSC} by comparing GMM and \textbf{GMM*} variants. The results show that \textit{SSC} provides substantial and consistent performance gains across all models for both WSC and \textit{DistriVoting} approaches, validating the benefit of enhanced confidence differentiation in the generation process (see \autoref{sec:analysis_ssc_separation_comparison} for detailed analysis).

\subsection{Ablation Study}
\label{sec:experiment_ablation}

    \paragraph{GMM.} \textit{DistriVoting} is based on the premise that the confidences of correct and incorrect trajectories follow a bimodal normal distribution, leveraging GMM to partition confidences for predicting trajectory correctness. Essentially, GMM is treated as a distribution-based clustering method, which can be replaced by other clustering approaches to achieve the same objective. Here, we compare clustering methods including K-Means and MeanShift, analyzing their impact on final \textit{DistriVoting} results, prediction accuracy, and computational efficiency. Additionally, we evaluate the quality of confidences within the positive intervals obtained by different clustering methods using metrics: AUROC, $\text{Acc}=\frac{1}{{|\mathcal{V}|}}\sum_{\text{traj} \in \mathcal{V}} \mathbb{I}(A_{\text{traj}}=A_{\text{gt}})$ and $\text{WAcc} = \frac{1}{{|\mathcal{V}|}}\sum_{\text{traj} \in \mathcal{V}} \mathbb{I}(A_{\text{traj}}=A_{\text{gt}})\cdot C_{\text{traj}}$ (Weighted Acc).

    
    
    \begin{table}[ht]
  \caption{
  Ablation of clustering methods using DeepSeek-R1-8B, sampling 128 trajectories/query via \textit{DistriVoting} (64 repeats).
  Complete results provided in \autoref{tab:appendix_complete_gmm_ablation} of \autoref{sec:appendix_complete_gmm_ablation}.}
  \label{tab:ablation_gmm}
  \begin{center}
    \resizebox{0.7\textwidth}{!}{
    \begin{tabular}{lccc>{\columncolor{gray!30}}c}
      \toprule
      Method    & Top50 & K-Means & MeanShift & GMM \\
      \midrule
      Acc (\%)              & 74.47  & 75.29  & \uline{76.55}  & \textbf{77.60} \\
      WAcc (\%)             & 74.61  & 75.32  & \uline{76.57}  & \textbf{77.68} \\
      AUROC ($\uparrow$)    & 0.5117 & \uline{0.5204} & 0.5158 & \textbf{0.5831} \\
      \midrule
      Voting Acc (\%)       & 75.10  & 75.19  & \uline{75.50}  & \textbf{76.95}  \\
      Predict Acc (\%)      & 53.64  & 56.19  & \uline{59.34}  & \textbf{60.46}  \\
      Predict Time (ms/it)  & \textbf{0.0935} & 0.6014 & 1.8492 & \uline{0.3369} \\
      \bottomrule
    \end{tabular}
    }
  \end{center}
\end{table}


    As shown in \autoref{tab:ablation_gmm}, GMM achieves $1.78\times$ the efficiency of K-Means and $5.49\times$ that of MeanShift, while significantly outperforming both in trajectory correctness prediction accuracy, leading to superior voting performance. The confidences quality evaluation metrics further highlight GMM's outstanding performance, demonstrating its suitability for prediction tasks involving bimodal normal distributions. Despite this, clustering methods like K-Means and MeanShift still offer clear advantages over a fixed Top50 filter.

    \paragraph{Budget.}

    \vspace{-6pt}
    Budget is a crucial parameter for voting methods. We evaluate six budget settings (8, 16, 32, 64, 128, 256) for our proposed SSC and \textit{DistriVoting} approaches. As shown in \autoref{tab:ablation_budget}, SSC consistently outperforms BasicInference across all budgets, confirming that SSC improves voting by enhancing distribution separation. Additionally, the adaptive \textit{GMM Filter} also consistently surpasses fixed Top50 filtering in both WSC and \textit{DistriVoting} settings.


    \begin{table}[ht]
    \caption{
    Ablation results for varying $\text{Budget}$ using DeepSeek-R1-8B, sampling $\text{B}$ trajectories/question (64 repeats). \textbf{* denotes \textit{SSC}-generated.}
    Complete results provided in \autoref{tab:appendix_complete_budget_ablation} of ~\autoref{sec:appendix_complete_budget_ablation}.}
    \label{tab:ablation_budget}
    \begin{center}
    \resizebox{0.7\textwidth}{!}{
    \begin{tabular}{lcccccc}
      \toprule
      Method    & 8 & 16 & 32 & 64 & 128 & 256  \\
      \midrule
      WSC-Top50 & 73.17 & 73.86 & 74.56 & 74.62 & 74.75 & 74.79\\
      \rowcolor{yellow!20} WSC-GMM   & \uline{73.18} & 74.53 & 75.83 & 76.30 & 76.64 & 77.04\\
      \rowcolor{yellow!30} WSC-GMM*  & \textbf{73.22} & 74.68 & \uline{75.84} & \uline{76.78} & \uline{77.24} & \uline{77.56}\\
      \midrule
      \rowcolor{orange!10} DIS-Top50 & 72.95 & 73.72 & 74.64 & 75.71 & 76.32 & 76.75\\
      \rowcolor{orange!20} DIS-GMM   & 73.12 & \uline{74.73} & 75.71 & 76.34 & 76.95 & 77.53\\
      \rowcolor{orange!30} DIS-GMM*  & \uline{73.18} & \textbf{74.74} & \textbf{75.90} & \textbf{76.87} & \textbf{77.84} & \textbf{78.18}\\
      \bottomrule
    \end{tabular}
    }
    \end{center}
\end{table}

    Furthermore, \textit{DistriVoting} shows significant advantages over conventional methods when Budget $\geq$ 16, while maintaining comparable performance at smaller budgets. This stems from its reliance on distributional information: small sample sizes yield noisy distributions, while larger samples provide more reliable discriminative information, allowing \textit{DistriVoting} to fully utilize its distributional advantages.

\section{Analysis}
\label{sec:analysis}

\subsection{The Effectiveness of DistriVoting}
\label{sec:analysis_distribution_voting}
    To further analyze the impact of \textit{GMM Filter} and \textit{Reject Filter} on voting effectiveness, we calculated Acc and WAcc defined in \autoref{sec:experiment_ablation} across three stages: \textbf{(\uppercase\expandafter{\romannumeral1})} Before filter (all samples); \textbf{(\uppercase\expandafter{\romannumeral2})} After \textit{GMM Filter} (candidate positive samples); \textbf{(\uppercase\expandafter{\romannumeral3})} After \textit{Reject Filter} (final positive samples).

    \begin{table}[ht]
  \caption{
  Effectiveness Analysis of \textit{DistriVoting} using DeepSeek-R1-8B,
  sampling 128 trajectories/question (64 repeats).
  Complete results are provided in \autoref{tab:appendix_complete_filter_analysis} of \autoref{sec:appendix_complete_filter_analysis}.}
  \label{tab:analysis_distribution_voting}
  \begin{center}
    \resizebox{0.6\textwidth}{!}{
    \begin{tabular}{llc>{\columncolor{gray!15}}c>{\columncolor{gray!30}}c}
      \toprule
      Metric & Benchmark & Stage \uppercase\expandafter{\romannumeral1}  & Stage \uppercase\expandafter{\romannumeral2} & Stage \uppercase\expandafter{\romannumeral3} \\
      \midrule
      \multirow{7}{*}{Acc}
      & HMMT2025  & 60.36 & \uline{76.71}  & \textbf{77.43} \\
      & GPQA-D    & 64.54 & \uline{71.69}  & \textbf{75.86} \\
      & AIME2024  & 86.83 & \uline{93.90}  & \textbf{94.08} \\
      & AIME2025  & 79.92 & \uline{87.79}  & \textbf{88.75} \\
      & BRUMO2025 & 81.22 & \uline{91.06}  & \textbf{91.34} \\
      \cmidrule{2-5}
      & Avg.      & 69.27 & \uline{77.60}  & \textbf{80.41} \\
      \midrule
      \multirow{7}{*}{WAcc}
      & HMMT2025  & 61.59 & \uline{76.92} & \textbf{77.63} \\
      & GPQA-D    & 64.73 & \uline{71.72} & \textbf{75.89} \\
      & AIME2024  & 87.40 & \uline{94.00} & \textbf{94.17} \\
      & AIME2025  & 80.70 & \uline{87.86} & \textbf{88.83} \\
      & BRUMO2025 & 82.31 & \uline{91.24} & \textbf{91.52} \\
      \cmidrule{2-5}
      & Avg.      & 69.74 & \uline{77.68} & \textbf{80.48} \\
      \bottomrule
    \end{tabular}
    }
  \end{center}
  \vskip -0.1in
\end{table}
    As shown in \autoref{tab:analysis_distribution_voting}, both Acc and WAcc increase progressively across the three stages, indicating that \textit{\textbf{\textit{GMM Filter} (Stage \uppercase\expandafter{\romannumeral1} $\to$ \uppercase\expandafter{\romannumeral2}) and \textit{Reject Filter} (Stage \uppercase\expandafter{\romannumeral2} $\to$ \uppercase\expandafter{\romannumeral3}) respectively improve the correct sample ratio in their trajectory pools, thereby boosting the final voting accuracy.}}

\subsection{SSC's Role in Enhancing Voting Effectiveness}
\label{sec:analysis_ssc_separation_comparison}

    From the experimental results, we observe that \textit{SSC} significantly enhances voting performance. To analyze the source of this performance gain, we computed the confidence distributions for \textit{SelfStepConf} and \textit{BasicInference}. As shown in \autoref{fig:analysis_ssc_separation_comparison} (Up), \textbf{\textit{the SSC distribution exhibits greater separation than the BasicInference, with less overlap, indicating that SSC amplifies the distance between distributions.}}

    \begin{figure}[ht]
        \begin{center}
        \centerline{\includegraphics[width=0.8\columnwidth]{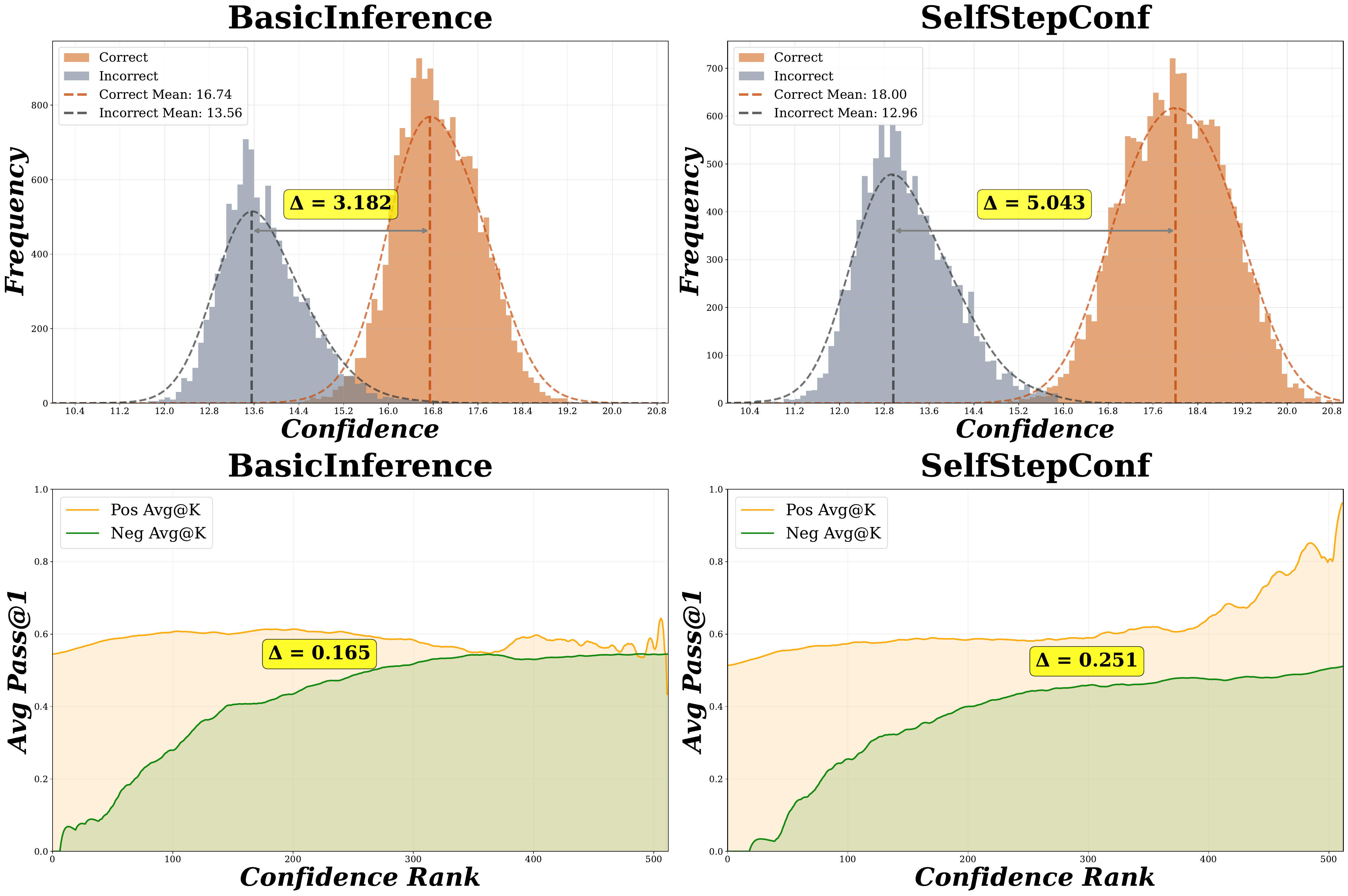}}
        \caption{
          Confidence distribution (HMMT2025) via DeepSeek-R1-8B, sampling 512 trajectories/query. \textbf{Up:} Confidence frequency histogram; \textbf{Down:} Accuracy curves for positive/negative intervals.
          Complete results are provided in \autoref{tab:appendix_complete_ssc_distribution} of \autoref{sec:appendix_complete_ssc_distribution}.
        }
        \label{fig:analysis_ssc_separation_comparison}
        \end{center}
        \vskip -0.2in
    \end{figure}

    Furthermore, in \autoref{sec:preliminaries_theorem}, we theoretically prove that increasing the difference between $\mu_{\text{pos}}$ and $\mu_{\text{neg}}$ leads to a higher proportion of correct samples among trajectories after \textit{GMM Filter}, and this proportional change results in improved final answer accuracy during the voting process.
    

    As shown in \autoref{fig:analysis_ssc_separation_comparison} (Down), to illustrate this effect intuitively, we analyzed confidence-sorted trajectories by calculating average correctness rates in both the positive interval (right side, predicted incorrect by \textit{GMM Filter}, shown in orange) and negative interval (left side, predicted correct by \textit{GMM Filter}, shown in green). Higher correctness density indicates more reliable voting information. The difference in densities reflects the correctness advantage of the positive interval, with larger differences indicating more effective \textit{Reject Filter} performance. Specifically, SSC achieves a higher correctness density in the positive interval at high confidence ranks compared to the BasicInference, increasing the density difference from 0.165 to 0.251, thus enhancing voting effectiveness through better distribution separation.

\subsection{Adaptive Threshold Selection vs. Top-50 Filtering}
\label{sec:analysis_top50_selection}
    In the main results section, we compare two filter voting methods: GMM and Top-50\% (i.e, DeepConf). To analyze the rationality of this setting, we combined the description of top-threshold in DeepConf and tested the optimal partition thresholds under different models and benchmarks.

    \begin{figure}[ht]
        \begin{center}
        \centerline{\includegraphics[width=0.8\textwidth]{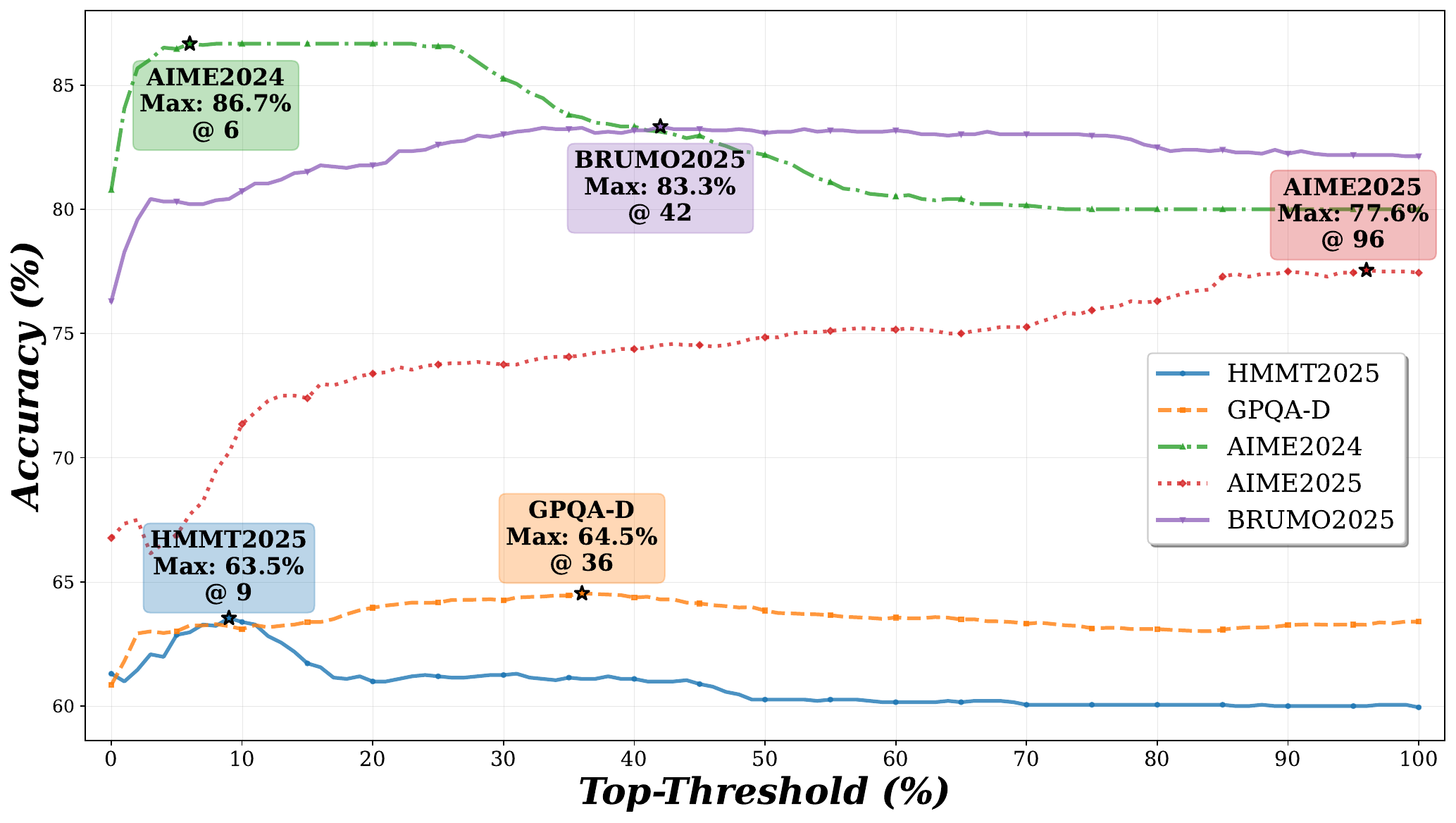}}
        \caption{
          Optimal top-threshold traversal results using Qwen3-8B, sampling 256 responses/question (64 repeats).
          Complete results are provided in \autoref{fig:appendix_complete_top50_analysis} of \autoref{sec:appendix_complete_top50_analysis}.
        }
        \label{fig:analysis_top50_selection}
        \end{center}
    \end{figure}


    As shown in \autoref{fig:analysis_top50_selection}, we evaluated WSC-TopK with top-thresholds at 1\% intervals, applying each threshold uniformly to all questions in the corresponding benchmark. The results showed optimal thresholds varied significantly across benchmarks (6\%, 9\%, 36\%, 42\%, and 96\%) and models (see \autoref{sec:appendix_parameters_analysis_Nc} and \autoref{sec:appendix_complete_top50_analysis}). Therefore, we uniformly use 50\% threshold as the representative baseline for fixed-filter voting methods. In contrast, our proposed \textit{DistriVoting} can \textbf{\textit{adaptively select optimal trajectories for voting at both benchmark and individual question levels}}. Moreover, as demonstrated in our supplementary results, this question-level adaptive approach can outperform even the best-performing benchmark-level fixed-threshold voting methods (see  \autoref{fig:appendix_parameters_analysis_Nc} in \autoref{sec:appendix_parameters_analysis_Nc}).

\subsection{Interpretation of Gaussian Components in GMM}
\label{sec:analysis_gmm_components}

    \textit{DistriVoting} uses GMM to decompose confidences, selecting trajectories with higher correct probabilities for final voting. This builds on our confidence distribution analysis, where GMM's Gaussian components correspond to correct and incorrect distributions. Furthermore, we explored the relationship between confidences and answers, treating each answer as a Gaussian distribution.


    \begin{figure}[ht]
        \begin{center}
        \centerline{\includegraphics[width=0.8\columnwidth]{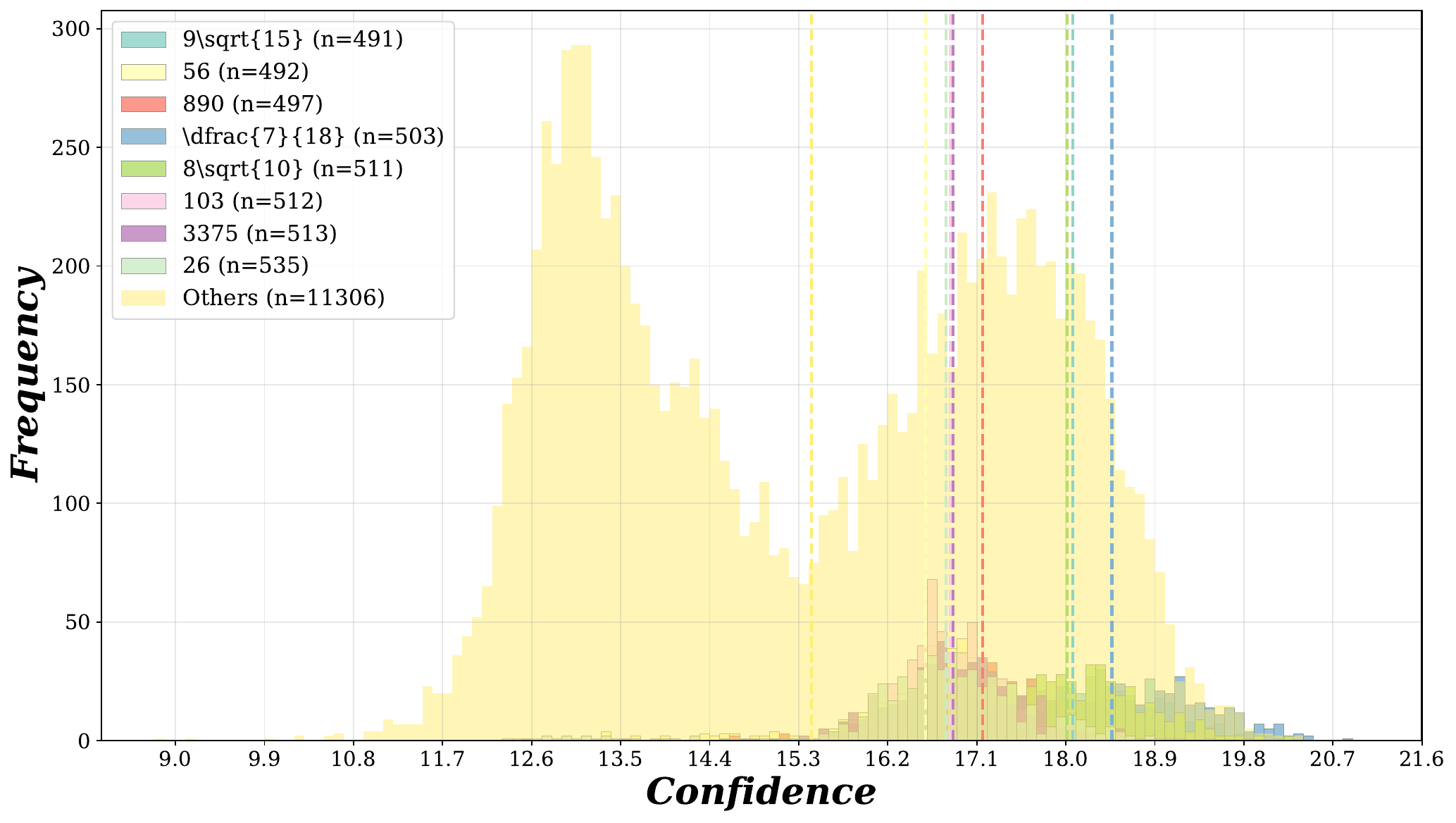}}
        \caption{
          Visualizing answer distribution as Gaussian components in GMM using DeepSeek-R1-8B, sampling 512 responses/query on HMMT2025.
          Complete results provided in \autoref{fig:appendix_complete_gmm_components_analysis} of \autoref{sec:appendix_complete_gmm_components_analysis}.
        }
        \label{fig:analysis_gmm_components}
        \end{center}
        \vskip -0.3in
    \end{figure}

    Specifically, instead of labeling confidence with correctness, we use the trajectory's answer. As shown in \autoref{fig:analysis_gmm_components}, \textbf{\textit{the top 8 frequent answers in HMMT2025 follow normal distributions but overlap significantly.}} For example, the means of the top 1 and 2 answers differ by only 0.022, making it hard to distinguish answers using confidence alone. Since correctness is more critical, using it directly as the label for clustering avoids mapping confidence to answers and then to correctness, reducing information loss and improving the reliability of final voting.


\subsection{SSC's Impact on Model Sampling Behavior}
\label{sec:analysis_ssc_pass@k}
    This section examines SSC's effect on trajectory-level confidence changes to understand its influence on sampling distribution. First, we calculate the trajectory-level confidence of results generated by \textit{SelfStepConf} (SSC) and BasicInference. As shown in \autoref{fig:analysis_ssc_pass@k_confidence}, SSC significantly improves confidence across all benchmarks, generating trajectories with higher correctness probability. And combined with the analysis in \autoref{sec:analysis_ssc_separation_comparison}, it can be known that this improvement mainly comes from the correct samples.


    \begin{figure}[ht]
        \begin{center}
        \centerline{\includegraphics[width=0.8\columnwidth]{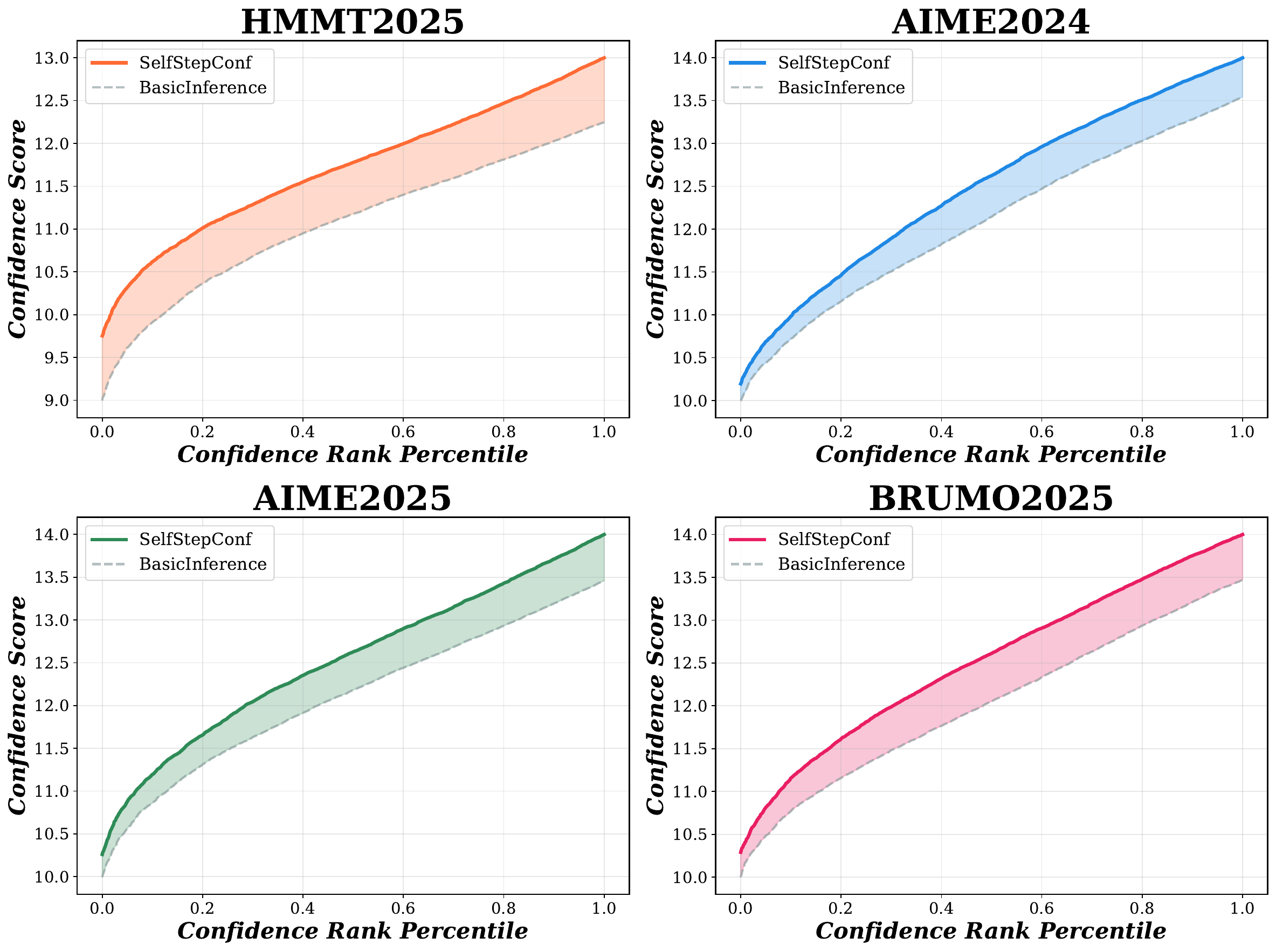}}
        \caption{
            Comparing trajectory-level confidence between SSC and BasicInference using Qwen3-14B-NonThinking, sampling 512 responses/query.
            Complete results provided in \autoref{fig:appendix_complete_trajlevel_conf_analysis} of \autoref{sec:appendix_complete_trajlevel_conf_analysis}.
        }
        \label{fig:analysis_ssc_pass@k_confidence}
        \end{center}
        \vskip -0.2in
    \end{figure}


    \begin{figure}[ht]
        \begin{center}
        \centerline{\includegraphics[width=0.8\columnwidth]{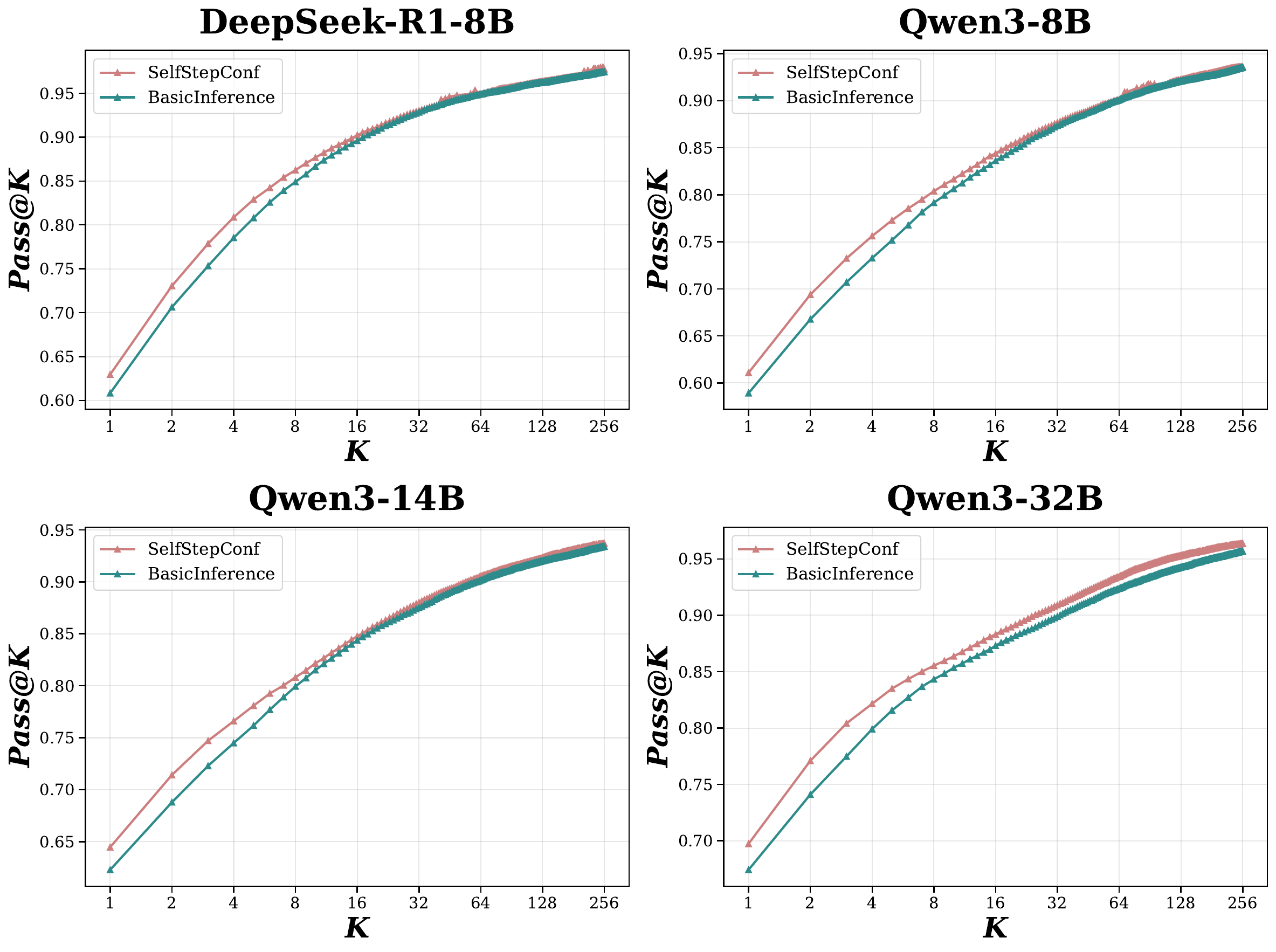}}
        \caption{
            Pass@K comparison between SSC and BasicInference using different models, sampling 256 responses/query on GPQA-D (64 repeats).
            Complete results provided in \autoref{fig:appendix_complete_pass@k} of \autoref{sec:appendix_complete_pass@k}.
        }
        \label{fig:analysis_ssc_pass@k_pass@k}
        \end{center}
        \vskip -0.2in
    \end{figure}

    Furthermore, \autoref{fig:analysis_ssc_pass@k_pass@k} illustrates the trend of pass@K performance as the number of samples increases. At K=1, SSC achieves notably higher pass@1 values than the BasicInference. However, as K increases, the pass@K performance of SSC and the BasicInference converges. This pattern aligns with the conclusion from ~\cite{Nips6666-Does} that \textit{RL improves sampling efficiency but not reasoning limits.} Similarly, SSC, as a depth-oriented test-time scaling approach, \textbf{\textit{improves sampling efficiency without expanding fundamental reasoning limits.}} Regarding SSC's improvement in sampling efficiency, \autoref{fig:analysis_ssc_pass@k_pass@1} shows that the enhancement primarily appears in moderately performing base models, exhibiting an arched growth pattern. For underperforming models with weaker reasoning capabilities, reflection injection is less effective. For high-performing models, the improvement is relatively weaker due to diminishing returns.

    \begin{figure}[ht]
        \vskip -0.1in
        \begin{center}
        \centerline{\includegraphics[width=0.75\columnwidth]{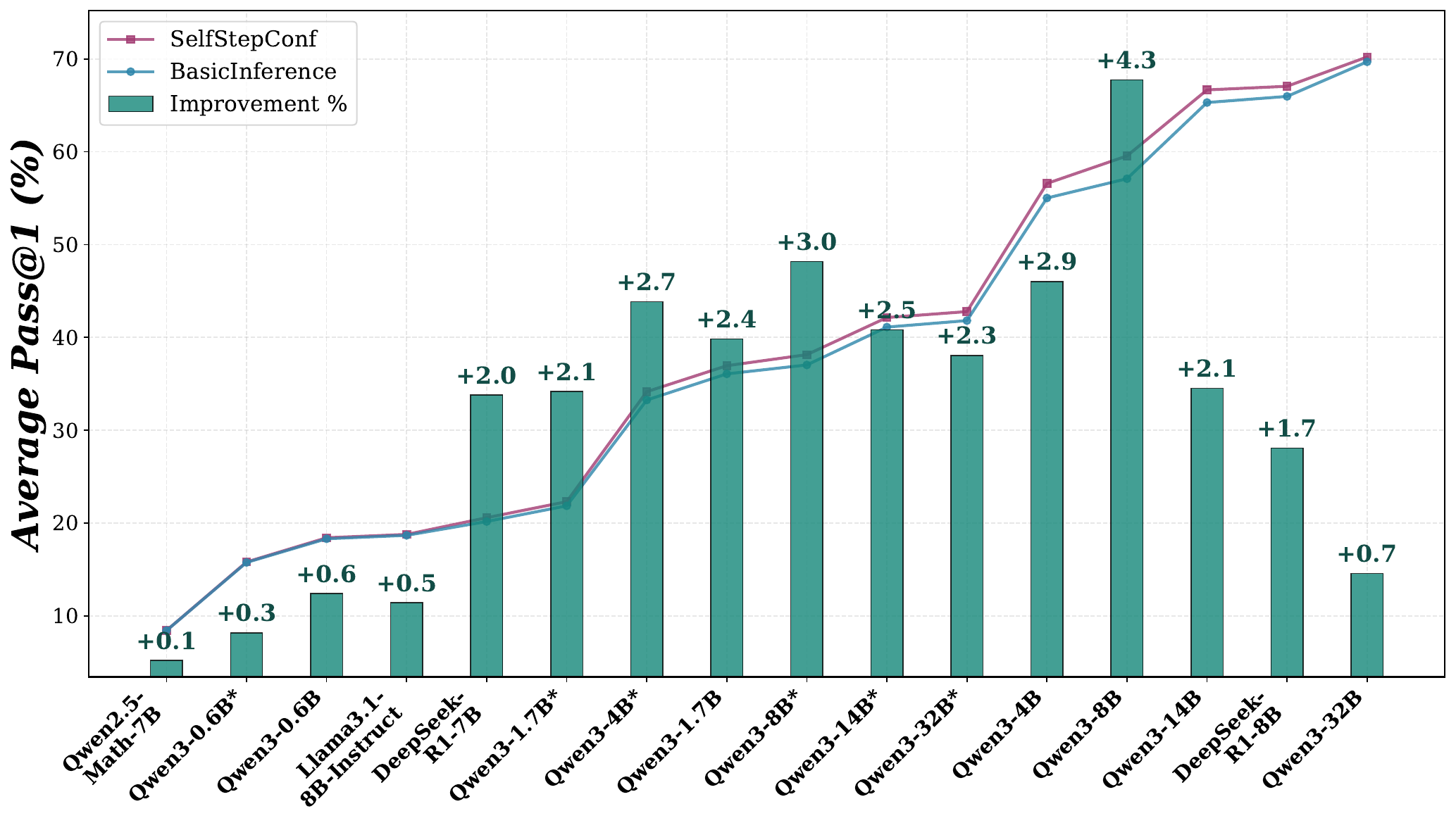}}
        \vskip -0.1in
        \caption{
            Pass@1 comparison between SSC and BasicInference using different models, sampling 1 response/query (64 repeats).
            Complete results are provided in \autoref{tab:appendix_complete_pass@1} of \autoref{sec:appendix_complete_pass@1}.
        }
        \label{fig:analysis_ssc_pass@k_pass@1}
        \end{center}
        \vskip -0.2in
    \end{figure}

\subsection{SelfStepConf Effects on Inference Dynamics}
\label{sec:analysis_ssc_token}


    During model inference process, SSC monitors step confidence in real-time and intervenes when reflection is triggered, altering the trajectory compared to the BasicInference, particularly in confidence and response length. To assess this impact, we set temperature=0 to ensure SSC's trajectory aligns with the BasicInference before intervention occurs. We analyzed the similarities and differences between SSC and BasicInference trajectories, recording confidence changes and reflection trigger points, and comparing these with the BasicInference's inference process.

    \begin{figure}[ht]
        \begin{center}
        \centerline{\includegraphics[width=0.75\columnwidth]{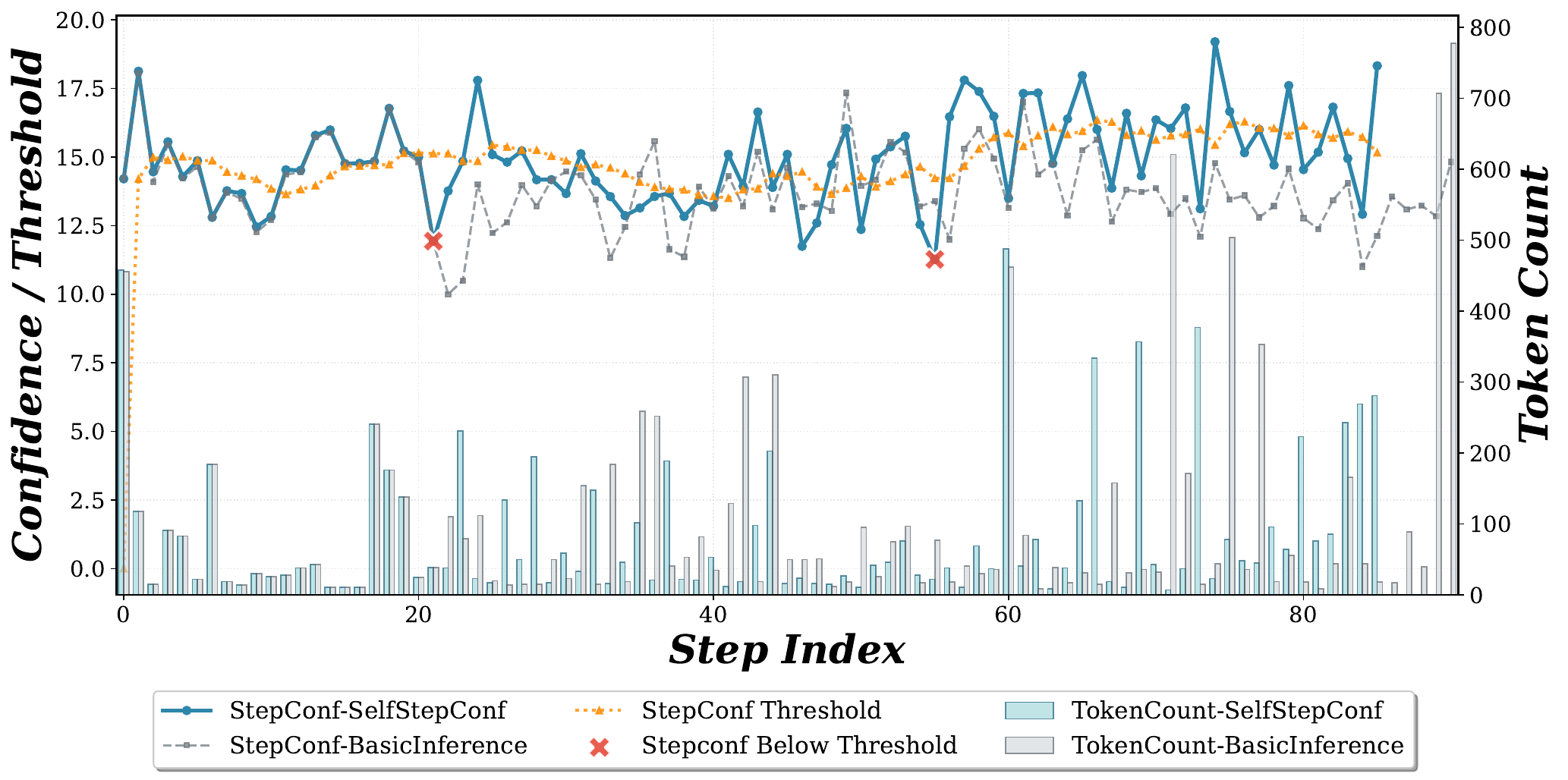}}
        \caption{
          Confidence and token count trends of the SSC and BasicInference. The trajectory is generated by DeepSeek-R1-8B on the 188th question of GPQA-D \textbf{at temperature=0}.
        }
        \label{fig:analysis_ssc_token}
        \end{center}
        \vskip -0.3in
    \end{figure}


    As shown in \autoref{fig:analysis_ssc_token}, two reflection triggers occurred (step 21 and 55). Before the first reflection, both models' trajectories were consistent. However, after reflection, significant differences in confidence and token count emerged. \textbf{\textit{Reflection enhanced subsequent confidence, while BasicInference's confidence declined.}} After the second reflection, SSC maintained high confidence, leading to a correct answer, while the BasicInference resulted in an incorrect one.


    Regarding token count, SSC matched the BasicInference before reflection, with differences appearing upon reflection. In \autoref{fig:analysis_ssc_token}, the BasicInference produced 91 steps with 9,472 tokens (104.1 tokens/step), while SSC generated 86 steps with 7,750 tokens (87.8 tokens/step). Furthermore, as shown in \autoref{tab:analysis_ssc_token}, SSC generally showed fewer steps and tokens than the BasicInference across benchmarks, indicating that reflection improves trajectory-level confidence without increasing response length. Notably, while SSC doesn't increase response length, it performs additional checks per token, slightly raising time complexity from $O(n)$ to $O(n+k)$, where $k$ depends on step splitting (\autoref{sec:supp_exp_step_split_ablation}). Using ``\textbackslash n\textbackslash n'' as the delimiter, SSC's runtime increased by only 2.31\% compared to the BasicInference.


    \begin{table}[ht]
  \vskip -0.1in
  \caption{
  Response length changes between SSC and BasicInference using DeepSeek-R1-8B \textbf{at temperature=0}, sampling 1 response/query.
  Complete results provided in \autoref{tab:appendix_complete_token_analysis} of \autoref{sec:appendix_complete_token_analysis}.}
  \label{tab:analysis_ssc_token}
  \begin{center}
    \resizebox{0.8\textwidth}{!}{
    \begin{tabular}{lccccccc}
      \toprule
      \multirow{2}{*}{Benchmark} & \multicolumn{3}{c}{\textit{BasicInference}} & \multicolumn{3}{c}{\textit{SelfStepConf}} \\
      \cmidrule(lr){2-4} \cmidrule(lr){5-7}
                & Step & Token & Confs. & Step & Token & Confs. \\
      \midrule
      HMMT2025  & 154.00 & 28266.73 & 17.03 & 154.40 & 28604.80 & 17.20 \\
      GPQA-D    & 30.80  & 9560.65  & 13.31 & 29.27  & 9411.71  & 13.33 \\
      AIME2024  & 88.63  & 21239.20 & 17.96 &  83.33 & 20733.97 & 17.92 \\
      AIME2025  & 123.23 & 26673.87 & 17.87 & 128.10 & 26280.73 & 17.91 \\
      BRUMO2025 & 135.40 & 23137.50 & 17.44 & 124.50 & 22291.60 & 17.45 \\
      \midrule
      Avg.      & 66.47  & 15322.42 & 14.92 & 64.48 & 15097.02  & 14.95 \\
      \cmidrule(lr){2-4} \cmidrule(lr){5-7}
      Time (ms/it)      & \multicolumn{3}{c}{207.70} & \multicolumn{3}{c}{212.51} \\
      \bottomrule
    \end{tabular}
    }
  \end{center}
\end{table}


\section{Conclusion}
    This paper addresses the issue of confidently wrong predictions when using model-inherent confidence for voting in test-time scaling. Specifically, we propose \textit{DistriVoting}, which leverages the prior information of confidence distribution to optimize the voting process. It uses \textit{GMM Filter} to remove true negative samples and \textit{Reject Filter} to eliminate false positive samples from the original sampling distribution. Additionally, \textit{HierVoting} is employed to compensate for performance deficiencies when filter quality is low. Besides, \textit{SelfStepConf} (SSC) from the  of distribution, dynamically adjusts the model's inference process, increasing the distance between distributions to further enhance the reliability of distribution information. Detailed experiments and analyses are conducted to qualitatively and quantitatively demonstrate the effectiveness of \textit{DistriVoting} and SSC.
    

\section*{Impact Statement}
    This is propose a confidence-based test-time scaling method that enhances voting accuracy using solely model-internal information, whose goal is to advance the field of Machine Learning. There are many potential societal consequences of our work, none which we feel must be specifically highlighted here.

\bibliography{stepconf}
\bibliographystyle{unsrt}

\appendix
\newpage
\appendix
\onecolumn

\section{Related Works}
\label{sec:related_works}

\paragraph{Test-Time Scaling (TTS).} Current TTS research primarily focuses on increasing the inference depth of single sampling or improving information utilization among multiple sampling results to enhance the accuracy of final answers~\cite{ReasoningWithSampling,ArtOfScaling,ValueGuidedSearch}. The s1~\cite{s1} study investigated the scaling relationship between response length and accuracy in single sampling results, using \textit{BudgetForcing} to specify upper and lower bounds for thinking tokens, or forcibly replacing end tokens with wait tokens to extend inference length. Self-Consistency~\cite{Self-Consistency} leverages consistency information among multiple repeated sampling results to select the final consensus answer. Building upon this foundation, works such as MoB~\cite{MoB} and DORA~\cite{DORA} further utilize Reward Models to provide quality assessments for each sampling result, thereby refining the weights among different results and optimizing the voting methods through this weight information. In comparison, while our work similarly leverages weight information among different answers to improve final answer accuracy, we employ model-intrinsic information for \textbf{self-weighted}. Furthermore, we not only utilize this verification information to optimize voting schemes but also leverage it to dynamically adjust the model's generation process, achieving \textbf{self-enhancement} through self-weighted.

\paragraph{Intrinsic Information of LLMs.} The widespread application of RL has enhanced the reasoning capabilities of LLMs, leading to the emergence of Large Reasoning Models (LRMs)~\cite{DeepSeek-v3}. An increasing number of studies suggest that intrinsic information generated during the generation process of LRMs can, to some extent, reflect the quality of model responses~\cite{RevisitingUncertainty}. This information can be utilized not only in the training process to further extend the application scenarios of RLVR ~\cite{EndoRM, TTRL}, but also in TTS to evaluate the quality of multiple answers and leverage this quality assessment to achieve serial, parallel, and tree-based answer search optimization. Average Log-Probability and Perplexity evaluate model response confidence by utilizing sentence-level probability information~\cite{PerplexityConf,PerplexityConf-Adherence,PerplexityConf-Prompt, VerbalizedConf-MultiAgent, VerbalizedConf-MultiStep}. Entropy and Self-Certainty~\cite{Self-Certainty} further employ distribution-level confidence to assess relative confidence among multiple answers. Although existing work has identified that different confidence calculation methods correspond to distinct quality distribution characteristics, they have not further applied the discriminative information between these distributions to voting mechanisms. We leverage the confidence distribution differences between correct and incorrect answers, employing adaptive filtering and reject sampling to improve the reliability of answers participating in final voting, thereby enhancing voting results.

\newpage
\section{Theorem Proof}
\label{sec:appendix_theorem_proof}

\subsection{Proof of \autoref{thm:normal_tail_ratio_monotonicity}}
\begin{proof}
    Without loss of generality, assume $\mu_1 > \mu_2$, i.e., $\delta = \mu_1 - \mu_2 \geq 0$.
    
    Let $\Phi(z) = \int_{-\infty}^{z} \frac{1}{\sqrt{2\pi}} e^{-t^2/2} dt$ be the cumulative distribution function of the standard normal distribution. For a normal distribution $N(\mu, \sigma^2)$, we have:
    \begin{equation}
    \int_a^{\infty} \frac{1}{\sqrt{2\pi\sigma^2}} \exp\left(-\frac{(x-\mu)^2}{2\sigma^2}\right) dx = 1 - \Phi\left(\frac{a-\mu}{\sigma}\right).
    \end{equation}
    
    Therefore:
    \begin{equation}
        \begin{aligned}
        I_1(\delta) &= \int_{\frac{\mu_1+\mu_2}{2}}^{\infty} f(x) dx = 1 - \Phi\left(\frac{\frac{\mu_1+\mu_2}{2} - \mu_1}{\sigma_1}\right)
        = 1 - \Phi\left(\frac{\mu_2-\mu_1}{2\sigma_1}\right) \\
        &= 1 - \Phi\left(-\frac{\delta}{2\sigma_1}\right) 
        =  1 - \left(1 - \Phi\left(\frac{\delta}{2\sigma_1}\right)\right) = \Phi\left(\frac{\delta}{2\sigma_1}\right),
        \end{aligned}
    \end{equation}
    
    \begin{equation}
        \begin{aligned}
        I_2(\delta) &= \int_{\frac{\mu_1+\mu_2}{2}}^{\infty} g(x) dx = 1 - \Phi\left(\frac{\frac{\mu_1+\mu_2}{2} - \mu_2}{\sigma_2}\right)= 1 - \Phi\left(\frac{\mu_1-\mu_2}{2\sigma_1}\right) \\
        &= 1 - \Phi\left(\frac{\delta}{2\sigma_2}\right).
        \end{aligned}
    \end{equation}

    The ratio function can be expressed as:
    \begin{equation}
        R(\delta) = \frac{\Phi\left(\frac{\delta}{2\sigma_1}\right)}{1 - \Phi\left(\frac{\delta}{2\sigma_2}\right)}.
    \end{equation}
    
    Let $u = \frac{\delta}{2\sigma_1}$, $v = \frac{\delta}{2\sigma_2}$, and $\phi(z) = \frac{1}{\sqrt{2\pi}}e^{-z^2/2}$ be the probability density function of the standard normal distribution. Then:
    
    \begin{align}
    \frac{dR}{d\delta} &= \frac{1}{(1-\Phi(v))^2} \left[\frac{\phi(u)(1-\Phi(v))}{2\sigma_1} + \frac{\Phi(u)\phi(v)}{2\sigma_2}\right].
    \end{align}
    
    Since $\phi(z) > 0$ for all real $z$, $1-\Phi(v) > 0$ when $v$ is finite, and $\Phi(u) \geq 0$, we have $\frac{dR}{d\delta} > 0$ for all $\delta \geq 0$.
    
    Therefore, $R(\mu_1, \mu_2)$ is strictly monotonically increasing with respect to $\delta = \mu_1 - \mu_2$.
\end{proof}

Using \autoref{thm:normal_tail_ratio_monotonicity} we immediately get the following result:
\begin{corollary}
    Under the conditions of \autoref{thm:normal_tail_ratio_monotonicity}, if $\sigma_1 = \sigma_2 = \sigma$, then:
    \begin{equation}
    R(\delta) = \frac{\Phi\left(\frac{\delta}{2\sigma}\right)}{1 - \Phi\left(\frac{\delta}{2\sigma}\right)},
    \end{equation}
    and $\frac{dR}{d\delta} = \frac{\phi\left(\frac{\delta}{2\sigma}\right)}{2\sigma\left[1 - \Phi\left(\frac{\delta}{2\sigma}\right)\right]^2} > 0$.
\end{corollary}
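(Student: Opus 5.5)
The plan is to specialize the computation already carried out in the proof of \autoref{thm:normal_tail_ratio_monotonicity} to the case $\sigma_1=\sigma_2=\sigma$, and then differentiate directly. No new probabilistic input is needed: everything reduces to the identities $1-\Phi(-z)=\Phi(z)$ and $\Phi'=\phi$.

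\textbf{Step 1: the closed form for $R$.} From that proof, the two tail integrals from the midpoint $\frac{\mu_1+\mu_2}{2}$ are
\[
I_1(\delta)=\Phi\!\left(\tfrac{\delta}{2\sigma_1}\right),\qquad I_2(\delta)=1-\Phi\!\left(\tfrac{\delta}{2\sigma_2}\right).
\]
These use only the standardization $x\mapsto (x-\mu)/\sigma$ and the symmetry of $\phi$, and no sign assumption on $\delta$. In the second integral the standardized lower limit is $\frac{\mu_1-\mu_2}{2\sigma_2}$; the $\sigma_1$ appearing in one intermediate expression there is a typo that does not affect the end result. Setting $\sigma_1=\sigma_2=\sigma$ and $t=\delta/(2\sigma)$ gives
\[
R(\delta)=\frac{\Phi(t)}{1-\Phi(t)} .
\]
This is well defined for every real $\delta$, since $0<\Phi(t)<1$.

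\textbf{Step 2: the derivative.} Apply the quotient rule together with the chain rule $\frac{dt}{d\delta}=\frac{1}{2\sigma}$. The numerator of the quotient rule is
\[
\phi(t)\bigl(1-\Phi(t)\bigr)+\Phi(t)\,\phi(t)=\phi(t),
\]
so the two $\Phi$ terms cancel. Hence
\[
\frac{dR}{d\delta}=\frac{\phi(t)}{2\sigma\,[1-\Phi(t)]^2}.
\]
The same formula also follows from the general derivative displayed in the theorem's proof by substituting $u=v=t$.

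\textbf{Step 3: positivity.} Since $\phi(t)>0$, $\sigma>0$, and $1-\Phi(t)\in(0,1)$ for finite $t$, the derivative is strictly positive for all $\delta$. In particular $R$ is strictly increasing in $\delta$, consistent with \autoref{thm:normal_tail_ratio_monotonicity}.

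There is no real obstacle in this argument. The only point that needs care is making sure the correct variance is used in $I_2$, so that the simplification in Step 2 is legitimate.
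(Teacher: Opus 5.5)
Your proposal is correct and follows the paper's route: the corollary comes from specializing the closed forms $I_1=\Phi(\delta/(2\sigma_1))$ and $I_2=1-\Phi(\delta/(2\sigma_2))$, together with the general derivative in the proof of \autoref{thm:normal_tail_ratio_monotonicity}, to $\sigma_1=\sigma_2=\sigma$, where the $\Phi$ terms in the quotient-rule numerator cancel to leave $\phi(t)$. You are also right that the $\sigma_1$ in the intermediate expression for $I_2$ is a typo, and that neither the formula nor its positivity needs a sign assumption on $\delta$.
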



\begin{remark}
    \cref{thm:normal_tail_ratio_monotonicity} shows that as the difference between the means of two normal distributions increases, the ratio of their right-tail integrals (with the midpoint of the means as the boundary) monotonically increases.
\end{remark}

\subsection{Proof of \autoref{thm:general_voting_monotonicity}}
\begin{proof}
    Consider the weighted sums for correct ($S_f$) and incorrect ($S_g$) classifiers:
    \begin{equation}
        \begin{aligned}
            S_f &= \sum_{i\in I_f} w_i X_i \sim \mathcal{N}\left(\mu_1 W_f, \sigma_1^2 W_f^{(2)}\right), \\
            S_g &= \sum_{j\in I_g} w_j X_j \sim \mathcal{N}\left(\mu_2 W_g, \sigma_2^2 W_g^{(2)}\right).
        \end{aligned}
    \end{equation}
    where:
    \begin{itemize}
    \item $W_f = \sum_{i\in I_f} w_i$, $W_g = \sum_{j\in I_g} w_j$ are the total weights,
    \item $W_f^{(2)} = \sum_{i\in I_f} w_i^2$, $W_g^{(2)} = \sum_{j\in I_g} w_j^2$ are the squared weight sums,
    \item $\mu_1 = \mu + \delta$, $\mu_2 = \mu$ are the means with $\delta > 0$,
    \item $\sigma_1^2$, $\sigma_2^2$ are the variances.
    \end{itemize}
    
    The voting accuracy is bounded below by:
    \begin{equation}
    P_{\mathrm{vote}}(\delta) \geq \mathbb{P}(S_f > S_g) =: P_{\mathrm{lower}}(\delta),
    \end{equation}
    
    where \(\mathbb{P}(S_f > S_g)\) serves as a lower bound for \(P_{\mathrm{vote}}(\delta)\). The difference distribution is:
    \begin{equation}
    S_f - S_g \sim \mathcal{N}\left(\delta W_f + \mu_2(W_f - W_g), \sigma_1^2 W_f^{(2)} + \sigma_2^2 W_g^{(2)}\right).
    \end{equation}
    
    Expressed via the standard normal Cumulative Distribution Function (CDF):
    \begin{equation}
    P_{\mathrm{lower}}(\delta) = \Phi\left(\frac{\delta W_f + \mu_2(W_f - W_g)}{\sqrt{\sigma_1^2 W_f^{(2)} + \sigma_2^2 W_g^{(2)}}}\right).
    \end{equation}
    
    The derivative with respect to $\delta$ is:
    \begin{equation}
    \frac{d}{d\delta}P_{\mathrm{lower}}(\delta) = \phi\left(\frac{\delta W_f + \mu_2(W_f - W_g)}{\sqrt{\sigma_1^2 W_f^{(2)} + \sigma_2^2 W_g^{(2)}}}\right)\cdot\frac{W_f}{\sqrt{\sigma_1^2 W_f^{(2)} + \sigma_2^2 W_g^{(2)}}} > 0.
    \end{equation}
    where $\phi(\cdot)>0$ is the standard normal PDF (positive everywhere), $W_f>0$ (non-trivial weights), and the denominator is positive ($\sigma_1,\sigma_2>0$).
    
    Since $P_{\mathrm{vote}}(\delta) \geq P_{\mathrm{lower}}(\delta)$ and the lower bound is strictly increasing in $\delta$, the voting accuracy $P_{\mathrm{vote}}(\delta)$ must also be strictly increasing in $\delta$.
\end{proof}

\newpage
\section{Experiment Description}
\label{sec:exp_descrip}

\subsection{Baseline and Competitors}
\label{sec:experiment_baseline}
    The core contributions of our proposed method lie in three key aspects: confidence computation, dynamic adjustment mechanisms within single question inference, and distribution-based voting methods. For confidence computation, we adopt the negative average log-probability from Self-Certainty~\cite{Self-Certainty} and the token group concept from DeepConf~\cite{DeepConf}, selecting the tail group containing the answer to calculate the overall trajectory confidence. However, our approach differs in that we do not use fixed window groups, but instead partition semantic steps according to reasoning blocks. The experimental section analyzes the advantages of this approach (\autoref{fig:supp_exp_step_split_ablation} in \autoref{sec:supp_exp_step_split_ablation}). Regarding the single inference process adjustment mechanism, since dynamic adjustment of reasoning processes at test-time has not been extensively explored, we primarily compare against basic inference methods. For the voting method, we use weighted majority voting as our baseline and compare with other test-time scaling voting approaches, including Self-Consistency (SC)~\cite{Self-Consistency}, BoN~\cite{BoN}, MoB (using confidence as the reward)~\cite{MoB}, Weighted-SC (WSC)~\cite{WSC}, DeepConf (WSC-Top50)~\cite{DeepConf}.

\subsection{Implementation Details}
\label{sec:experiment_implement_detail}
    Regarding the implementation details of our methods, for trajectory step segmentation, we select ``\textbackslash n\textbackslash n'' as the reasoning block signal, which has been thoroughly studied in StepPruner~\cite{StepPruner}. Regarding reflection information, we simply used ``wait'' as the predefined reflection token and supplemented more comparative experiments in \autoref{tab:ablation_reflection_token} of \autoref{sec:supp_exp_reflection_info_ablation}. For the EMA update parameter $\alpha$ controlling $\tau_{\text{conf}}$ in \textit{SelfStepConf} and the parameter $\delta$ controlling reflection trigger conditions, we set both to 0.8. For $N_{\mathcal{C}}$ in base voting, we simply set it to 10. These parameters are analyzed accordingly in the \autoref{sec:appendix_parameters_analysis}. For the device used in the experiment, unless otherwise specified, all experiments were conducted on NVIDIA H-Series GPU. In addition, for the parameter top-$p$ in the inference process, we set it to 0.95.

    \paragraph{Prompt Template.} In our experiments, we used two different prompt templates to handle different benchmark. For GPQA-D, the format of the prompt is:
    \begin{small}
    \begin{verbatim}
    {
        "role": "user",
        "content": "Return your final response within \\boxed{} and only 
                    include the letter choice (A, B, C, or D) as your 
                    final response. {Question}"
    }
    \end{verbatim}
    \end{small}
    
    For HMMT2025, AIME2024, AIME2025, and BRUMO2025, the specific format is:
    \begin{small}
    \begin{verbatim}
    {
        "role": "user",
        "content": "{Question}\nPlease reason step by step, 
                    and put your final answer within \\boxed{}."
    }
    \end{verbatim}
    \end{small}

    \paragraph{Evaluation.}For the main experimental setup, we employ the Qwen3~\cite{Qwen3} series models ranging from 0.6B to 32B (operating in thinking mode by defaul, and models marked with * use non-thinking mode), along with DeepSeek-R1-0528-Qwen3-8B and DeepSeek-R1-Distill-Qwen-7B models~\cite{DeepSeek-r1} (referred to as DeepSeek-R1-8B and DeepSeek-R1-7B, respectively). In addition, we also used the Qwen2.5-Math-7B~\cite{Qwen2.5-Math} and Llama-3.1-8B-Instruct~\cite{Llama3.1} in analysis. We evaluate our approach on five mathematical reasoning benchmarks: HMMT2025~\cite{HMMT&BRUMO}, GPQA-D~\cite{GPQA-D}, AIME2024~\cite{AIME}, AIME2025~\cite{AIME}, and BRUMO2025~\cite{HMMT&BRUMO}. Following baseline configurations, we set the context length to 32k for Qwen3 series models and 64k for DeepSeek-Distill models. Unless otherwise specified, the temperature is configured as 0.7 for Qwen3 non-thinking mode and 0.6 for all other settings. All pass@1 metrics are computed by averaging results across 64 independent evaluations, while voting experiments use a $\text{Budget}$ of 128 trajectories by default. In the experiments, all average results are computed using weighted averages based on the number of questions in each benchmark. 

\newpage
\section{Pseudo Code}
\label{sec:appendix_pseudo_code}

\begin{algorithm}[htb]
\caption{SelfStepConf with Voting}
\label{alg:selfstepconf}
\begin{algorithmic}
\STATE {\bfseries Input:} Question $q$, Budget $B$, Reflection tokens $T_r$, Threshold $\delta$, EMA factor $\alpha$
\STATE {\bfseries Output:} Final answer $A_{\text{final}}$
\STATE
\STATE Initialize $\mathcal{V} = \emptyset$, $\mathcal{C} = \emptyset$
\STATE
\STATE {\color{gray} \textit{// SelfStepConf}}
\FOR{$b = 1$ {\bfseries to} $B$}
    \STATE Initialize $\tau_{\text{conf}} = 0$, $m = 0$
    \WHILE{not end of generation}
        \STATE Generate token $t_i$ and compute $C_{\text{token}-i}$
        \IF{step boundary detected}
            \STATE $m \leftarrow m + 1$, compute $C_{G_m}$
            \STATE Compute $\Delta_{\text{conf}}$ and $\mathbb{I}_{R}$ using Equations~\ref{eq:delta_conf} and~\ref{eq:reflection_trigger}
            \IF{$\mathbb{I}_{R} = 1$}
                \STATE Inject reflection tokens using Equation~\ref{eq:add_reflection}
            \ELSE
                \STATE Update $\tau_{\text{conf}}$ using Equation~\ref{eq:conf_threshold_EMA_update}
            \ENDIF
        \ENDIF
    \ENDWHILE
    \STATE Compute trajectory confidence and add to $\mathcal{V}$, $\mathcal{C}$
\ENDFOR
\STATE
\STATE {\color{gray} \textit{// GMM Filter}}
\STATE Fit GMM and map to correct/incorrect using Equation~\ref{eq:gmm_split}
\STATE Filter $\mathcal{V}_{\text{pos}}$ and $\mathcal{V}_{\text{neg}}$ using Equation ~\ref{eq:get_traj}
\STATE {\color{gray} \textit{// Reject Filter}}
\STATE Get $A_{\text{neg}} = f_{\text{HierV}}(\mathcal{V}_{\text{neg}}, -\mathcal{C}_{\text{neg}})$
\STATE Get $A_{\text{pos}} = f_{\text{HierV}}(\mathcal{V}_{\text{pos}}, \mathcal{C}_{\text{pos}})$
\IF{$A_{\text{pos}} \neq A_{\text{neg}}$}
    \STATE \textit{Reject Filter} trajectories and re-fit GMM to get $\hat{\mathcal{V}}_{\text{pos}}$ using Equation~\ref{eq:reject_filer}
    \STATE $A_{\text{final}} = f_{\text{HierV}}(\hat{\mathcal{V}}_{\text{pos}}, \hat{\mathcal{C}}_{\text{pos}})$
\ELSE
    \STATE $A_{\text{final}} = A_{\text{pos}}$
\ENDIF
\STATE \textbf{return} $A_{\text{final}}$
\end{algorithmic}
\end{algorithm}

\begin{algorithm}[hb]
\caption{Base Voting with Hierarchical Strategy}
\label{alg:basevoting}
\begin{algorithmic}
\STATE {\bfseries Input:} Trajectory set $\mathcal{V}$, Confidence set $\mathcal{C}$, Number of intervals $N_{\mathcal{C}}$
\STATE {\bfseries Output:} Voting result $A$
\STATE
\STATE Divide confidence range into $N_{\mathcal{C}}$ intervals using Equation~\ref{eq:divide_conf2interval}
\FOR{$i = 1$ {\bfseries to} $N_{\mathcal{C}}$}
    \STATE Get $\mathcal{V}^i$ for interval $\mathcal{C}^i$
    \IF{$\mathcal{V}^i \neq \emptyset$}
        \STATE Compute $A_{\text{sub}}^i = f_{\text{WMaj}}(\mathcal{V}^i, \mathcal{C}^i)$ using Equation~\ref{eq:voting_interval}
        \STATE Add $(A_{\text{sub}}^i, \mathcal{\bar{C}}^i_{A_{\text{traj}}=A_{\text{sub}}^i})$ to $\mathcal{A}_{\text{sub}}$
    \ENDIF
\ENDFOR
\STATE $A = f_{\text{WMaj}}(\mathcal{A}_{\text{sub}})$ using Equation~\ref{eq:voting_final}
\STATE \textbf{return} $A$
\end{algorithmic}
\end{algorithm}

\begin{algorithm}[hb]
\caption{Weighted Majority Voting}
\label{alg:wmaj}
\begin{algorithmic}
\STATE {\bfseries Input:} Trajectory set $\mathcal{V}$, Weight set $\mathcal{W}$
\STATE {\bfseries Output:} Selected answer $A$
\STATE
\STATE Extract unique answers $\mathcal{A} = \{A_{\text{traj}} \mid \text{traj} \in \mathcal{V}\}$
\FOR{each $\text{ans} \in \mathcal{A}$}
    \STATE Compute $\text{scores}[\text{ans}]$ using Equation~\ref{eq:voting_maj}
\ENDFOR
\STATE $A = \arg\max_{\text{ans} \in \mathcal{A}} \text{scores}[\text{ans}]$
\STATE \textbf{return} $A$
\end{algorithmic}
\end{algorithm}

\clearpage
\newpage
\section{Parameters Analysis}
\label{sec:appendix_parameters_analysis}

According to the implementation logic of \textit{SelfStepConf} and \textit{DistriVoting}, the main parameters include: \textbf{(1)} top-$k$ for computing token-level confidence; \textbf{(2)} EMA factor $\alpha$ for dynamically updating confidence threshold in SSC; \textbf{(3)} $\delta$ for determining reflection trigger conditions in SSC; \textbf{(4)} the number of intervals $N_{\mathcal{C}}$ specified in \textit{HierVoting}.

\subsection{Analysis of top-$k$}
\label{sec:appendix_parameters_analysis_k}
First, for top-k, we follow the setting from prior work~\cite{DeepConf}, setting $k=20$. This choice is empirically justified by the probability distribution characteristics of modern LLMs. Through analysis of randomly sampled inference traces, we observe that at over 99.7\% of token positions, the top-20 tokens account for more than 99\% of the cumulative probability mass. This demonstrates that $K=20$ effectively captures nearly all meaningful probability signal while maintaining computational efficiency. Additionally, this setting aligns well with practical deployment considerations, as standard inference engines like vLLM typically provide at most 20 log-probabilities by default.

\subsection{Analysis of $\alpha$}
\label{sec:appendix_parameters_analysis_alpha}
For $\alpha$, $\delta$, and $N_{\mathcal{C}}$, we primarily analyze their robustness through experiments. The EMA factor $\alpha$ mainly determines the proportion of the previous threshold when updating the threshold, corresponding to the case where $\mathbb{I}_R=0$ in Equation~\ref{eq:conf_threshold_EMA_update}. A larger $\alpha$ gives more weight to the confidence from previous steps, resulting in a smoother threshold curve, while a smaller $\alpha$ makes the threshold tend toward the step confidence $G_m$.

    \begin{figure}[ht]
        \begin{center}
        \begin{subfigure}[b]{0.39\textwidth}
                \centering
                \includegraphics[width=\textwidth]{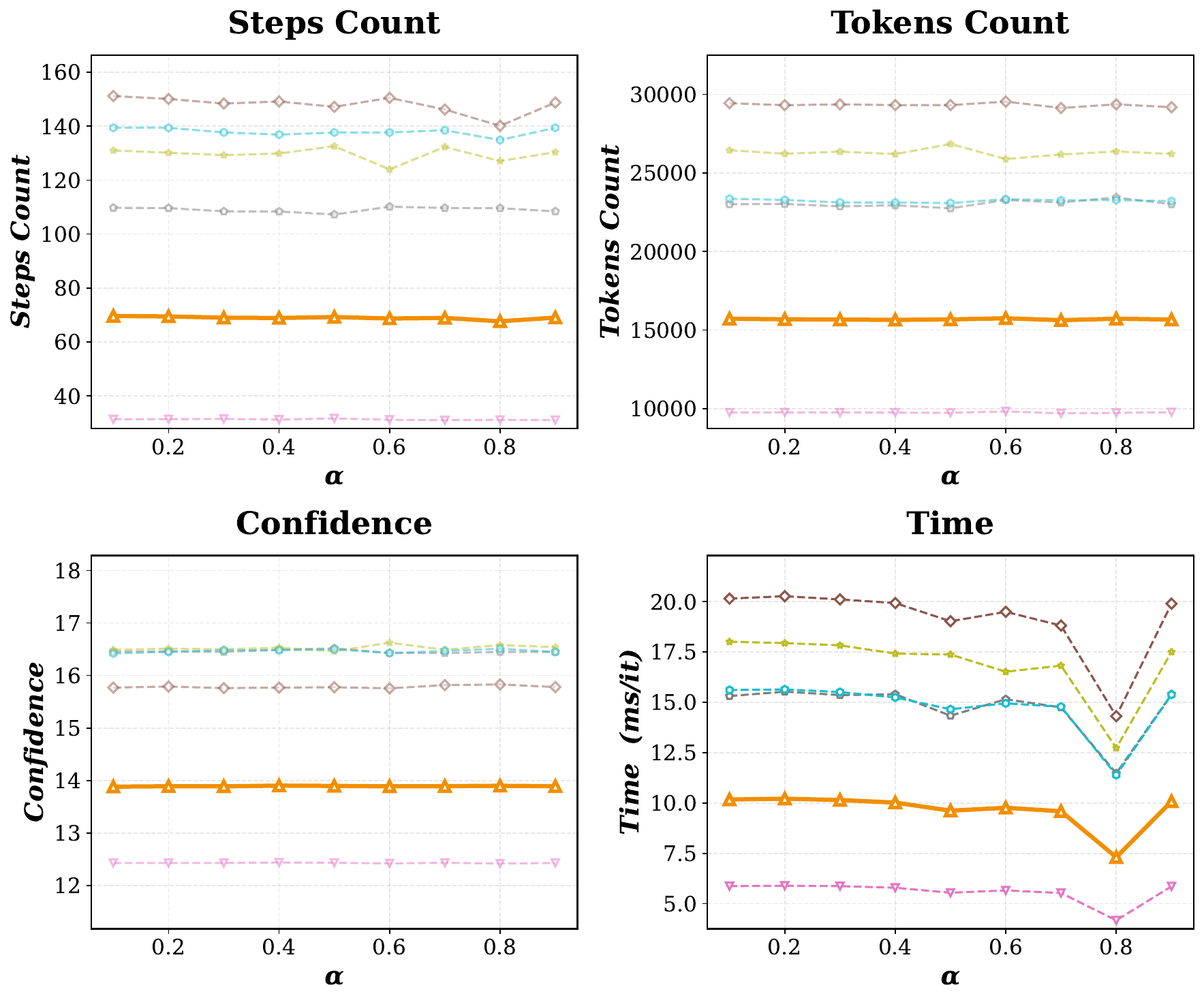}
                \caption{Analysis Metrics}
        \end{subfigure}
        \begin{subfigure}[b]{0.51\textwidth}
                \centering
                \includegraphics[width=\textwidth]{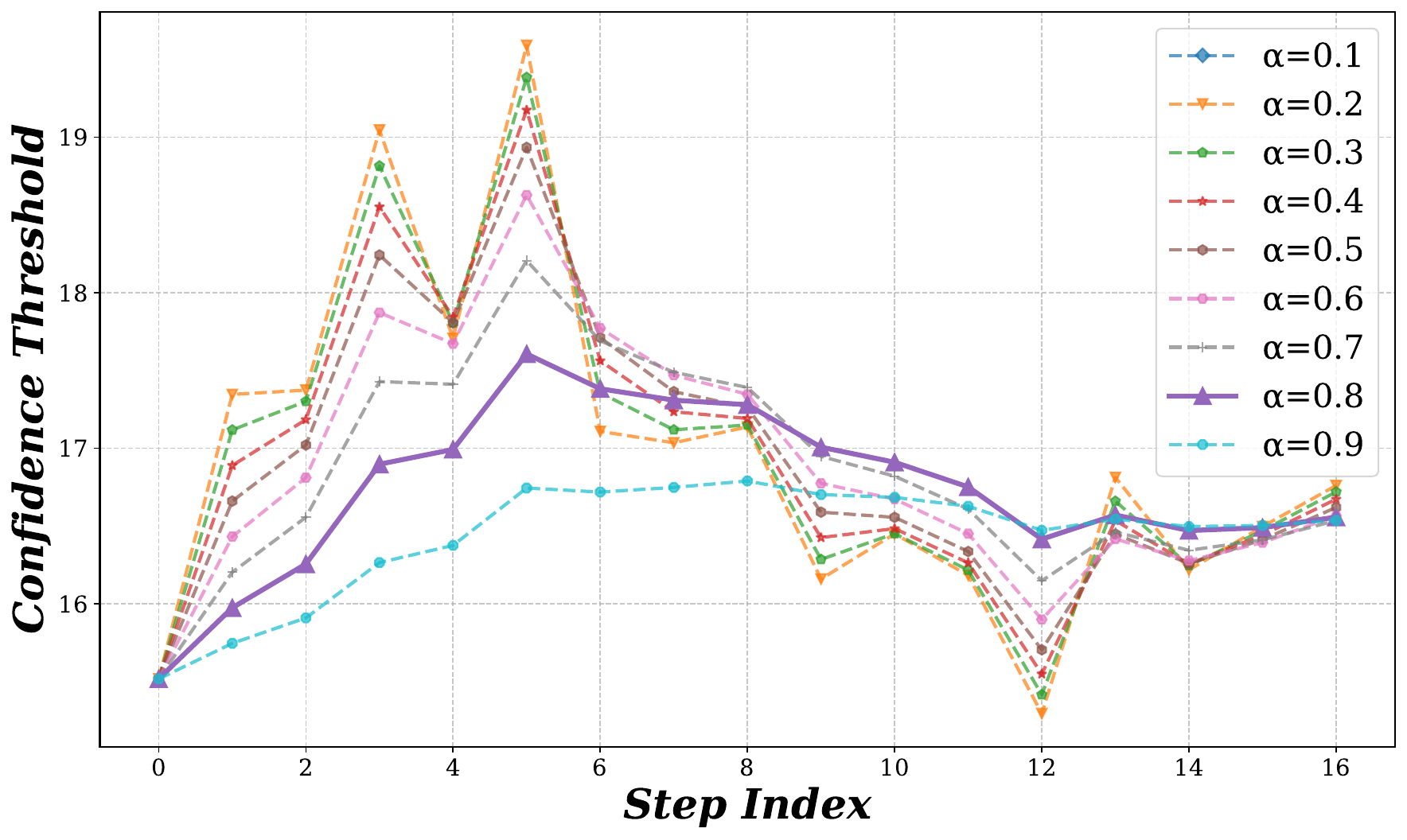}
                \caption{Variation of adaptive confidence threshold $\tau_{\text{conf}}$ with $\alpha$}
        \end{subfigure}
        \vskip 0.1in
        \begin{subfigure}[b]{0.85\textwidth}
                \centering
                \includegraphics[width=\textwidth]{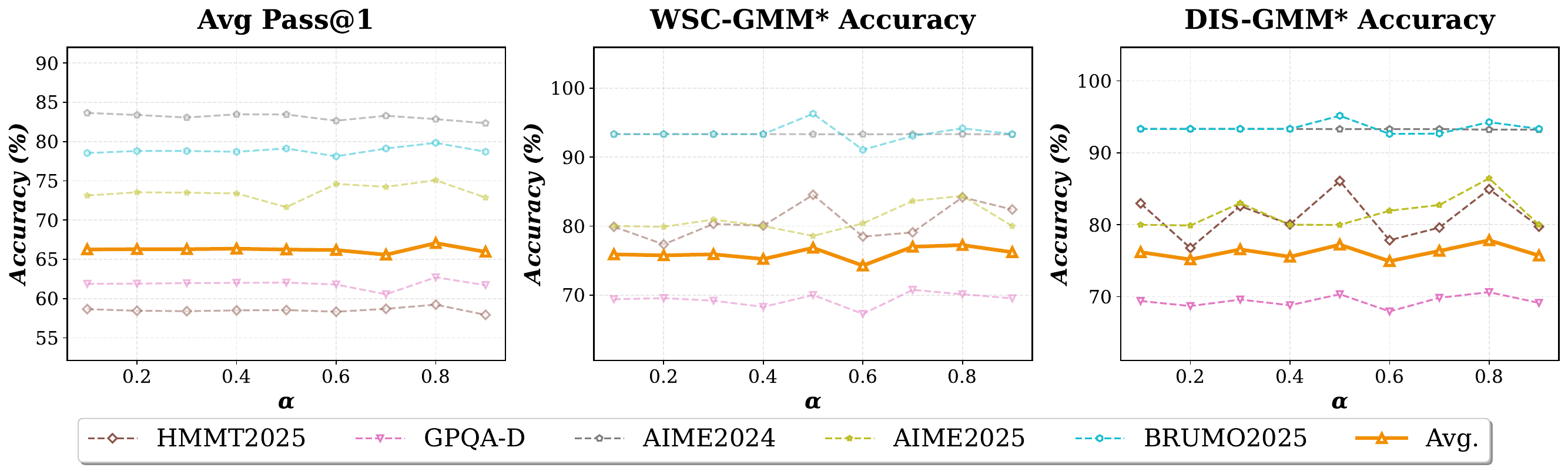}
                \caption{Accuracy Metrics}
        \end{subfigure}
        \caption{
          Parameter sensitivity analysis of $\alpha$ under different metrics. Using DeepSeek-R1-8B to generate 128 trajectories per query, with experiments repeated 64 times. Complete results are provided in \autoref{tab:appendix_complete_parameters_alpha_analysis} of \autoref{sec:appendix_complete_parameters_alpha_analysis}.
        }
        \label{fig:appendix_parameters_analysis_alpha}
        \end{center}
        \vskip -0.1in
    \end{figure}

We analyzed the sensitivity of the parameter $\alpha$ by examining both Analysis Metrics (Steps Count, Tokens Count, Confidence, and Time) and Accuracy Metrics (Avg Pass@1, WSC-GMM* Accuracy, and DIS-GMM* Accuracy). Note that there are some discrepancies between the Analysis Metrics and the results in \autoref{tab:analysis_ssc_token} of \autoref{sec:analysis_ssc_token}, as we used a temperature of 0.6 and repeated the process 64 times here (NVIDIA H-Series GPU), whereas the main text used a temperature of 0 (NVIDIA A-Series GPU). As shown in \autoref{fig:appendix_parameters_analysis_alpha} (a) and (c), within the range of $\alpha$ from 0.1 to 0.9, all metrics except time show minimal variation. This indicates that our proposed \textit{SelfStepConf} and \textit{DistriVoting} are not sensitive to the EMA update factor of the confidence threshold. Therefore, considering both efficiency and effectiveness, we opted for a setting of $\alpha=0.8$.

Additionally, to visually demonstrate the impact of different $\alpha$ values on \textit{SelfStepConf}, we plotted the change in confidence threshold over steps for the same query under different $\alpha$ values in \autoref{fig:appendix_parameters_analysis_alpha} (b) (with temperature set to 0 and sample 1 trajectory for Question 7 of HMMT2025). It is evident that as $\alpha$ increases, the variation in the confidence threshold decreases, which aligns with the role of $\alpha$ as described in \autoref{eq:conf_threshold_EMA_update}.

\subsection{Analysis of $\delta$}
\label{sec:appendix_parameters_analysis_delta}
For $\delta$ in Equation~\ref{eq:reflection_trigger}, it determines how much the current step confidence needs to drop relative to the confidence threshold to trigger reflection. This value affects the strictness of the confidence checking mechanism: a larger $\delta$ means that even a slight drop in step confidence relative to the threshold will trigger reflection, while a smaller $\delta$ shows greater tolerance for drops in step confidence.

    \begin{figure}[ht]
        \begin{center}
        \begin{subfigure}[b]{0.38\textwidth}
                \centering
                \includegraphics[width=\textwidth]{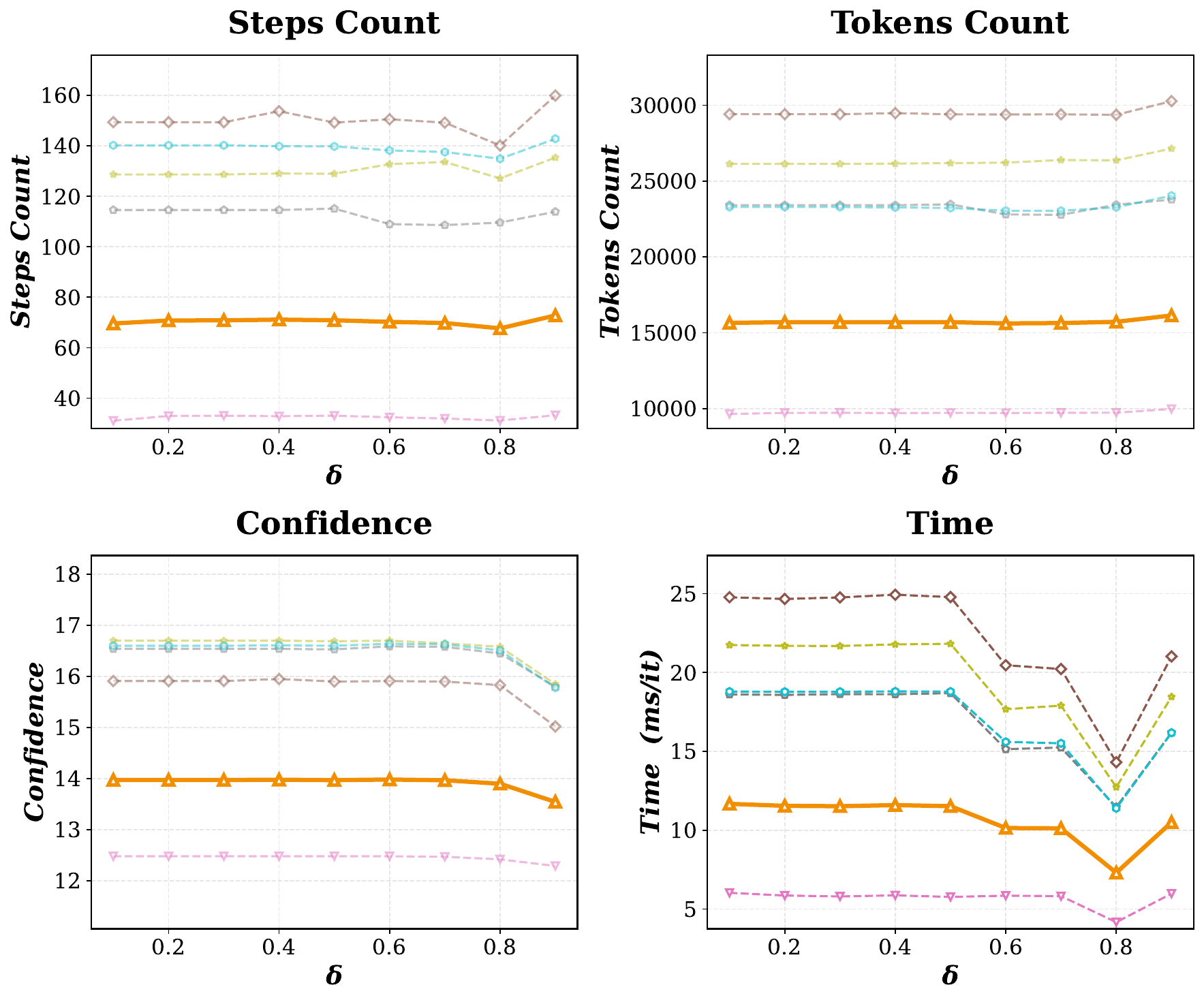}
                \caption{Analysis Metrics}
        \end{subfigure}
        \begin{subfigure}[b]{0.52\textwidth}
                \centering
                \includegraphics[width=\textwidth]{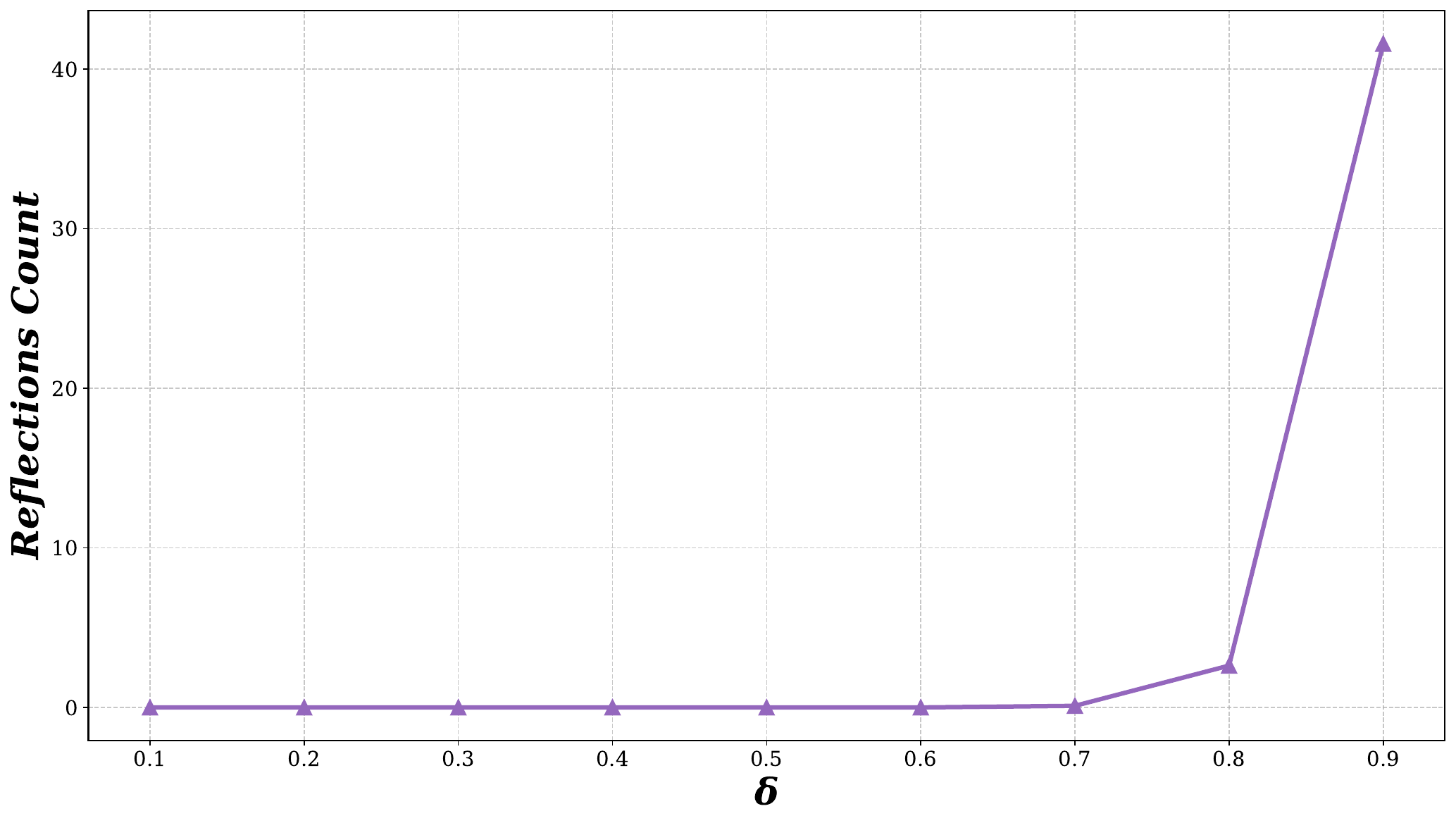}
                \caption{Variation of adaptive reflection triggers counts with $\delta$}
        \end{subfigure}
        \vskip 0.1in
        \begin{subfigure}[b]{0.85\textwidth}
                \centering
                \includegraphics[width=\textwidth]{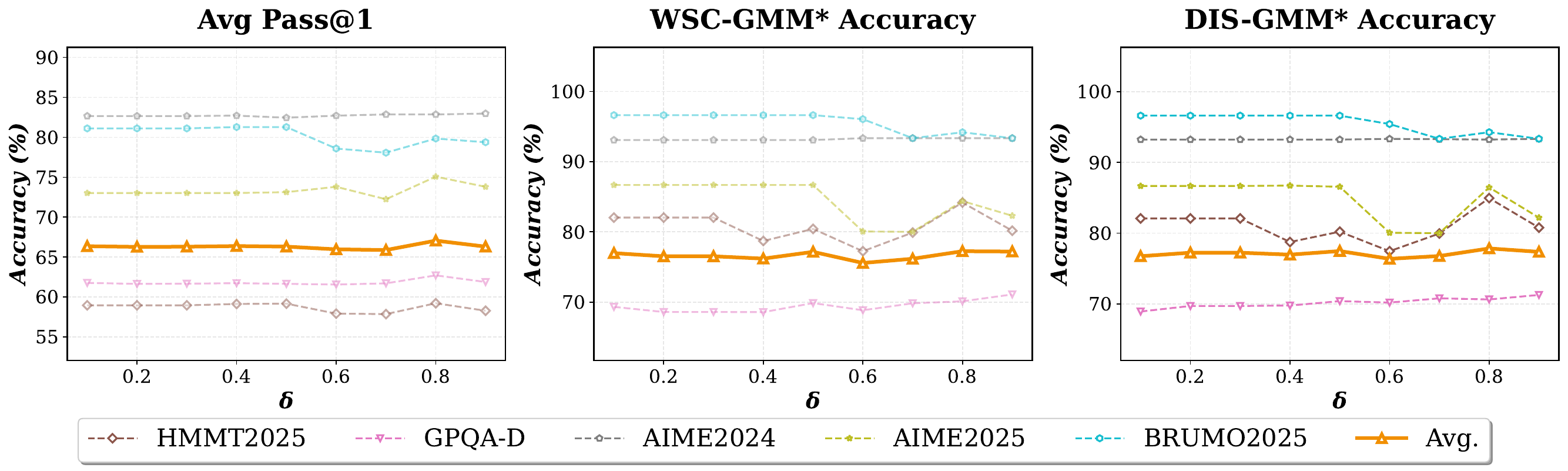}
                \caption{Accuracy Metrics}
        \end{subfigure}
        \caption{
          Parameter sensitivity analysis of $\delta$ under different metrics. Using DeepSeek-R1-8B to generate 128 trajectories per query, with experiments repeated 64 times. Complete results are provided in \autoref{tab:appendix_complete_parameters_delta_analysis} of \autoref{sec:appendix_complete_parameters_delta_analysis}.
        }
        \label{fig:appendix_parameters_analysis_delta}
        \end{center}
        \vskip -0.1in
    \end{figure}

Similar to the analysis of parameter $\alpha$, we examined the sensitivity of parameter $\delta$ from the perspectives of Analysis Metrics (Steps Count, Tokens Count, Confidence, and Time) and Accuracy Metrics (Avg Pass@1, WSC-GMM* Accuracy, and DIS-GMM* Accuracy), maintaining the same experimental setup. As shown in \autoref{fig:appendix_parameters_analysis_delta} (a) and (c), parameter $\delta$ exhibits similar sensitivity to $\alpha$, being insensitive to all metrics except time. Therefore, considering both efficiency and effectiveness, we chose $\delta=0.8$.

Additionally, the role of $\delta$ primarily involves controlling the strictness of reflection triggers in \autoref{eq:reflection_trigger}. To visually demonstrate its impact on \textit{SelfStepConf}, we calculated the average number of reflection triggers for all questions in HMMT2025 under different $\delta$ settings with temperature set to 0. As shown in \autoref{fig:appendix_parameters_analysis_delta} (b), the number of reflection triggers increases with $\delta$, indicating that $\delta$ is directly proportional to the strictness of the reflection trigger conditions. Furthermore, when $\delta \leq 0.7$, the number of reflections drops to zero, causing \textit{SelfStepConf} to degrade to BasicInference. Thus, from the perspective of reflection trigger conditions, $\delta=0.8$ is also an appropriate choice.

\subsubsection{Analysis of $\alpha \times \delta$}
\label{sec:appendix_parameters_analysis_alphaDelta}

To further analyze the sensitivity of parameters $\alpha$ and $\delta$ in SelfStepConf, we conducted a joint analysis using HMMT2025. Specifically, we set the range of $\alpha$ and $\beta$ to $[0.1, 0.9]$, and then calculated the Analysis Metrics (Steps Count, Tokens Count, Confidence, Reflections Count, and Time) and Accuracy Metrics (Avg Pass@1, WSC-GMM* Accuracy, and DIS-GMM* Accuracy) under the group parameters. As shown in \autoref{fig:appendix_parameters_analysis_alphaDelta}, it can be observed that under joint analysis, $\alpha=0.8$ and $\delta=0.8$ remain a preferable choice. Furthermore, when $\delta$ is small ($< 0.2$), it can be observed that the various metrics show almost no change, which further proves that SelfStepConf degrades to Basic Inference in this case.

    \begin{figure}[ht]
        \begin{center}
        \centerline{\includegraphics[width=0.95\columnwidth]{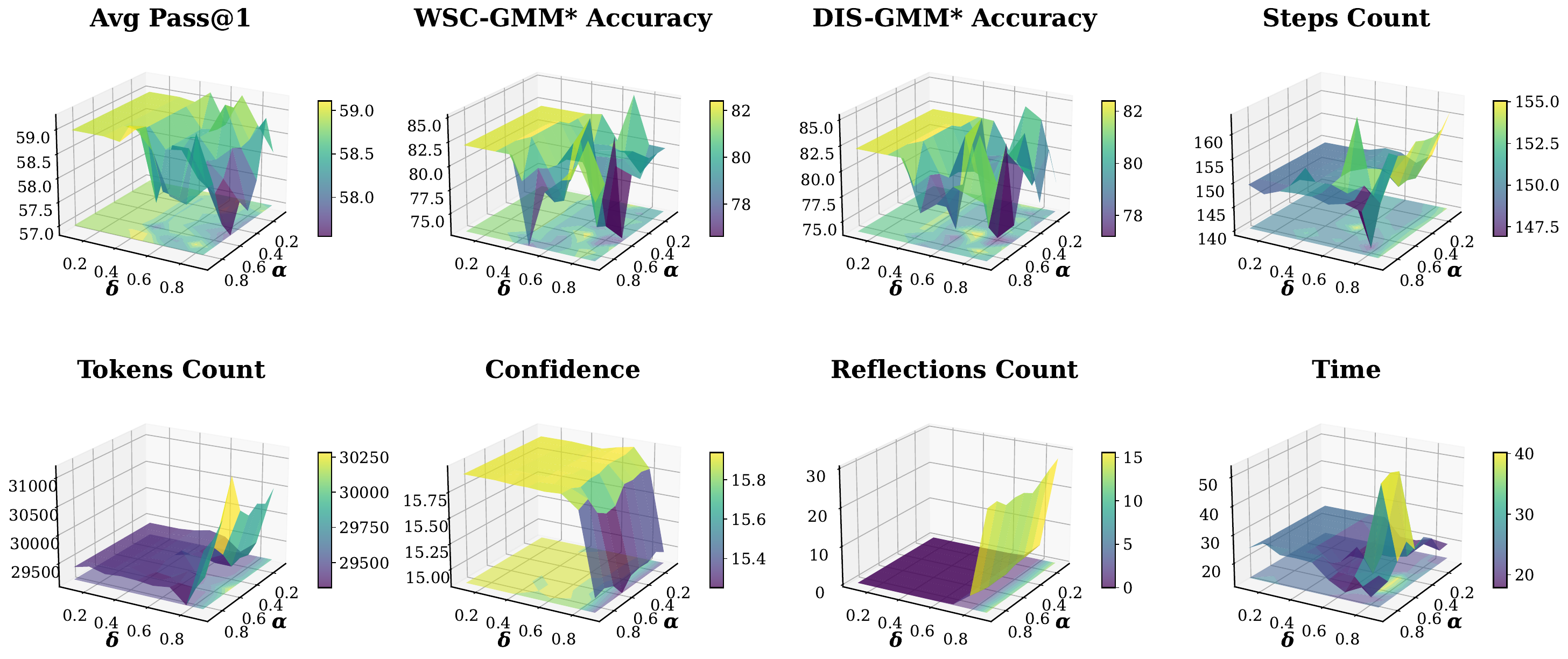}}
        \caption{
          Parameter sensitivity analysis of $\alpha \times \delta$ under different metrics. Using DeepSeek-R1-8B to generate 128 trajectories per query of HMMT2025, with experiments repeated 64 times. Complete results are provided in \autoref{tab:appendix_complete_parameters_alphaDelta_analysis_1} and \autoref{tab:appendix_complete_parameters_alphaDelta_analysis_2} of \autoref{sec:appendix_complete_parameters_alphaDelta_analysis}.
        }
        \label{fig:appendix_parameters_analysis_alphaDelta}
        \end{center}
        \vskip -0.1in
    \end{figure}

\subsection{Analysis of $N_{\mathcal{C}}$}
\label{sec:appendix_parameters_analysis_Nc}
Finally, $N_{\mathcal{C}}$ in Equation~\ref{eq:divide_conf2interval} determines how many sub-intervals the confidence should be divided into during the BaseVoting process for Weighted Majority Voting. We evaluated three BaseVoting-based methods (DIS-Top50, DIS-GMM, and DIS-GMM*) across $N_\mathcal{C} \in [1,20]$, as shown in Figure~\ref{fig:appendix_parameters_analysis_Nc}. The results show that except for DIS-Top50 at $N_\mathcal{C}=1$ which exhibits notably different performance, all methods achieve similar results across different values of $N_\mathcal{C}$. This indicates that BaseVoting demonstrates considerable robustness for $N_\mathcal{C}>1$. For simplicity, we adopt the default setting of $N_\mathcal{C}=10$ throughout all experiments.

Based on our experimental results, we observe an interesting phenomenon: \textbf{stratified voting strategies ($N_\mathcal{C} > 1$) significantly improve performance only for the DIS-Top50 method, while showing limited effectiveness for \textit{GMM Filter} approaches}. The fundamental reason why \textit{GMM Filter} is insensitive to stratification operations lies in its powerful quality filtering capability, which has already addressed the problems that stratification strategies attempt to solve. Specifically, \textit{GMM Filter} uses probabilistic modeling to precisely identify high-quality trajectories, resulting in relatively uniform quality across different confidence intervals after filtering, thereby reducing the necessity for stratification. Unlike DIS-Top50's simple threshold selection, \textit{GMM Filter} has already removed most low-quality trajectories globally, making the voting results of remaining samples inherently stable. When trajectory quality has already reached a high level through \textit{GMM Filter}, further stratification operations provide limited performance gains, explaining why GMM and GMM* maintain relatively stable performance across different $N_\mathcal{C}$ settings. This observation is further supported by the trend shown in Figure~\ref{fig:appendix_parameters_analysis_Nc} (b), where the performance difference between $N_c=1$ and $N_C=2$ decreases as the top-threshold increases. This finding demonstrates that \textbf{high-quality pre-filtering mechanisms can significantly reduce dependence on complex voting strategies}, which is consistent with the ablation analysis of  \textit{HierVoting} discussed in \autoref{tab:appendix_complete_main_results_ablation} of \autoref{sec:appendix_complete_main_results}.

    \begin{figure}[ht]
        \vskip 0.2in
        \begin{center}
            \begin{subfigure}[b]{0.48\textwidth}
                \centering
                \includegraphics[width=\textwidth]{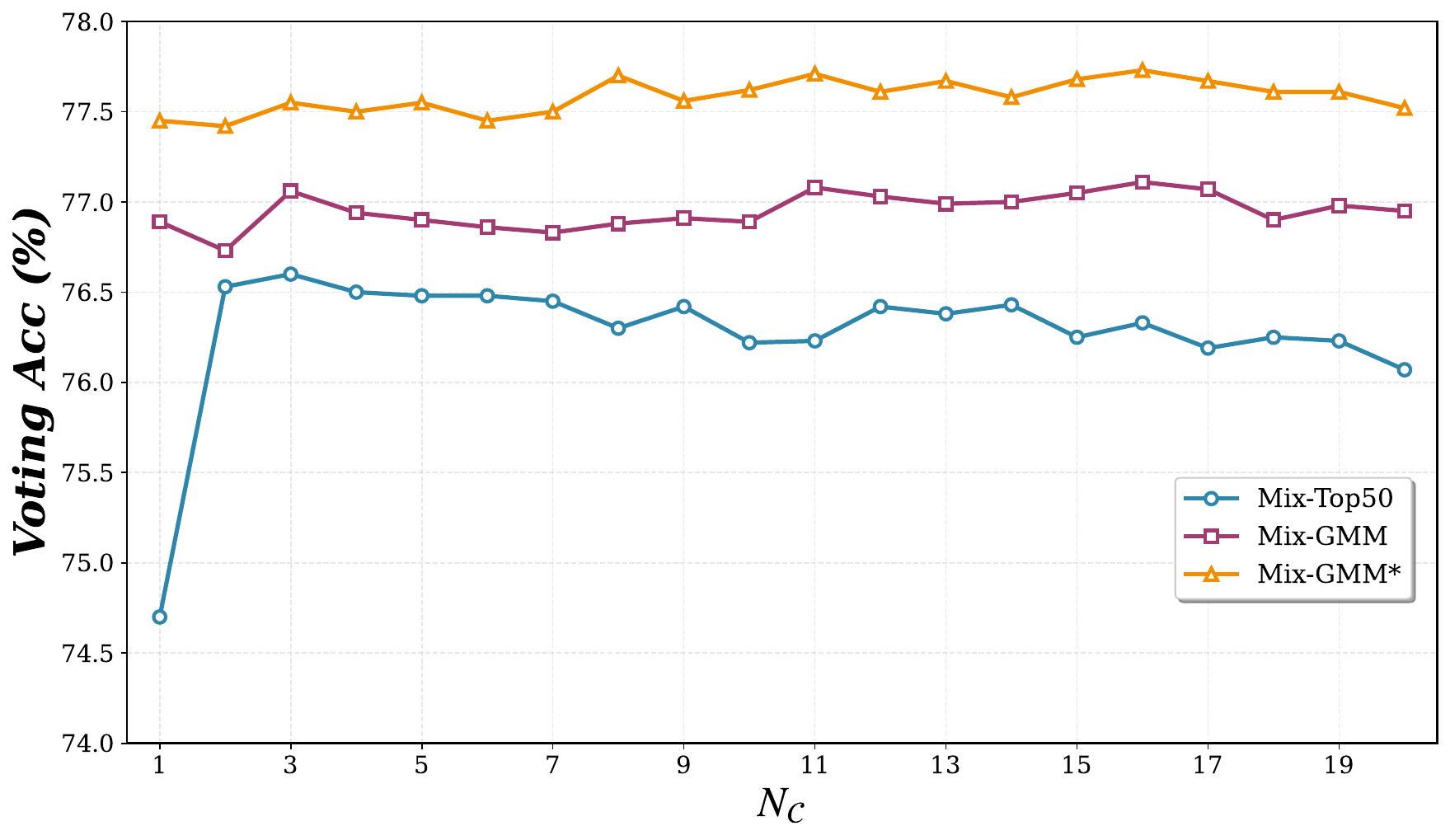}
                \caption{$N_\mathcal{C}$ sensitivity under different \textit{DistriVoting} methods}
            \end{subfigure}
            \hfill
            \begin{subfigure}[b]{0.48\textwidth}
                \centering
                \includegraphics[width=\textwidth]{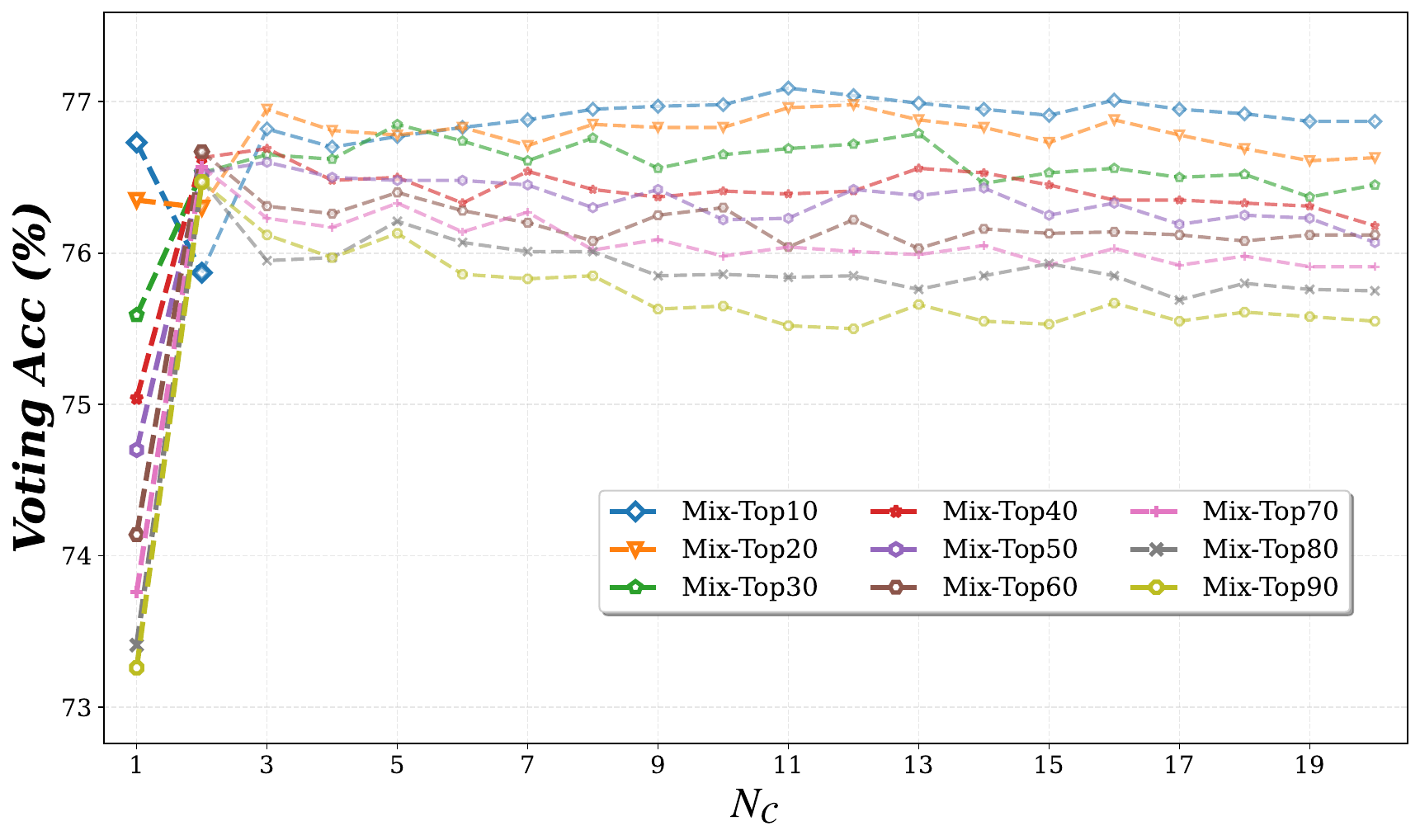}
                \caption{$N_\mathcal{C}$ sensitivity under different Top-Threshold}
            \end{subfigure}
        \caption{
          Parameter sensitivity analysis of $N_\mathcal{C}$ under different voting methods. Using DeepSeek-R1-8B to generate 128 trajectories per query for voting, with experiments repeated 64 times. Complete results are provided in \autoref{tab:appendix_complete_top50_analysis} of \autoref{sec:appendix_complete_top50_analysis} and \autoref{tab:appendix_complete_parameters_Nc_analysis_1}, \autoref{tab:appendix_complete_parameters_Nc_analysis_2} of \autoref{sec:appendix_complete_parameters_Nc_analysis}.
          }
        \label{fig:appendix_parameters_analysis_Nc}
        \vskip -0.5in
        \end{center}
    \end{figure}

\section{Supplementary experiment}
\label{sec:supp_exp}

\subsection{Reflection Information Ablation Study}
\label{sec:supp_exp_reflection_info_ablation}
    For \textit{SelfStepConf}'s operation of injecting additional reflection information at steps where confidence significantly decreases during inference, we primarily use ``wait'' as the critical fork token to trigger model reflection for simplicity. Additionally, we experimented with other reflection information as shown in \autoref{tab:ablation_reflection_token}. It can be observed that under the condition of not introducing additional parameters (i.e., without using other models for judgment or learning how to reflect through training), the performance differences across different tokens are not significant when only injecting reflection prompt tokens.

    \begin{table}[ht]
  \caption{Ablation study results of different reflection tokens. Using DeepSeek-R1-8B to generate 64 trajectories for each question across 5 benchmarks, computing the average pass@1.}
  \label{tab:ablation_reflection_token}
  \begin{center}
    \begin{tabular}{lcccc}
      \toprule
      Benchmark & ``wait'' & ``Wait'' & ``Hmm'' & ``Alternatively'' \\
      \midrule
      HMMT2025  & 59.22 & 58.91 & 59.53 & 58.75	\\
      GPQA-D    & 62.71 & 62.34 & 62.46 & 62.98 \\
      AIME2024  & 82.86 & 85.45 & 84.85 & 83.02 \\
      AIME2025  & 75.07 & 73.80 & 73.00 & 72.97 \\
      BRUMO2025 & 79.84 & 79.27 & 78.23 & 78.18 \\
      \midrule
      Avg.      & 67.06 & 66.87 & 66.78 & 66.85 \\
      \bottomrule
    \end{tabular}
  \end{center}
  \vskip -0.1in
\end{table}

\subsection{Step Split Ablation Study}
\label{sec:supp_exp_step_split_ablation}
    A key challenge in implementing \textit{SelfStepConf} lies in how to reasonably partition steps. Previous works have proposed partitioning by fixed-size windows~\cite{Re-Schedule, DeepConf, DORA}, by ``\textbackslash n''~\cite{GPO}, and by complete Thought-Action-Observation cycles in multi-turn dialogues~\cite{TreeGRPO}. Unlike these approaches, we need to dynamically adjust steps during test-time inference, thus we believe using ``\textbackslash n\textbackslash n'' to maintain paragraph-level logical integrity is more appropriate in reasoning tasks.
    
    Additionally, we refer to~\cite{Beyond80/20} and believe that high entropy tokens in trajectories may be potential fork tokens, thus conducting step checks at these critical positions provides decisive assistance for the overall reasoning direction. Specifically, we adopt the high entropy threshold of 0.672 from~\cite{Beyond80/20}, performing confidence checks when token entropy exceeds this value. However, during experiments, we found that directly splitting steps only according to high-entropy tokens results in excessively short step lengths. Therefore, we further set a lower bound of 200 for step length. As shown in \autoref{tab:ablation_step_split} and \autoref{fig:supp_exp_step_split_ablation}, Similar to the evaluation methods of \autoref{sec:appendix_parameters_analysis_alpha} and \autoref{sec:appendix_parameters_analysis_delta}, we compared six indicators such as Avg Pass@1, Steps Count, Tokens Count, Confidence, Reflections Count and Time. Compared with fixed-size window-level (256/512/1024/2048), dynamic sentence-level (``\textbackslash n''), and entropy-level approaches (HET, High-Entropy Token), paragraph-level partitioning (``\textbackslash n\textbackslash n'') is more suitable for SSC. Furthermore, in terms of efficiency, it can be seen that the influence of different step splits on inference time is relatively obvious. Combining steps count, tokens count, confidence, and reflection count, it can be known that this influence is mainly related to the number of times reflection is triggered, and is also affected by the other several indicators.

    \begin{table}[ht]
  \caption{Ablation study results of different step splitting methods. Using DeepSeek-R1-8B to generate 64 trajectories for each question across 5 benchmarks, computing the \textbf{average pass@1}. HET denotes High-Entropy Token.}
  \label{tab:ablation_step_split}
  \begin{center}
    \begin{tabular}{lccccccc}
      \toprule
      \multirow{2}{*}{Benchmark} & \multirow{2}{*}{``\textbackslash n\textbackslash n''} & \multirow{2}{*}{``\textbackslash n''} & \multirow{2}{*}{HET} & \multicolumn{4}{c}{Fixed Window} \\
      \cmidrule{5-8}
      & & & & 256 & 512 & 1024 & 2048 \\
      \midrule
      HMMT2025  & \textbf{59.22} & 58.85 & 58.91 & 58.39 & 58.75 & 58.06 & 57.76 \\
      GPQA-D    & \textbf{62.71} & 61.43 & 57.37 & 56.69 & 56.32 & 60.11 & 57.07 \\
      AIME2024  & 82.86 & 82.76 & 81.61 & 82.24 & \textbf{82.97} & 82.76 & 81.41 \\
      AIME2025  & \textbf{75.07} & 73.23 & 70.68 & 73.70 & 73.54 & 73.02 & 70.83 \\
      BRUMO2025 & \textbf{79.84} & 78.75 & 78.59 & 79.01 & 77.76 & 78.28 & 77.97 \\
      \midrule
      Avg.      & \textbf{67.06} & 65.95 & 63.06 & 62.97 & 62.71 & 64.99 & 62.70 \\
      \bottomrule
    \end{tabular}
  \end{center}
\end{table}

    \begin{figure}[ht]
        \vskip 0.2in
        \begin{center}
            \centering
            \includegraphics[width=0.95\columnwidth]{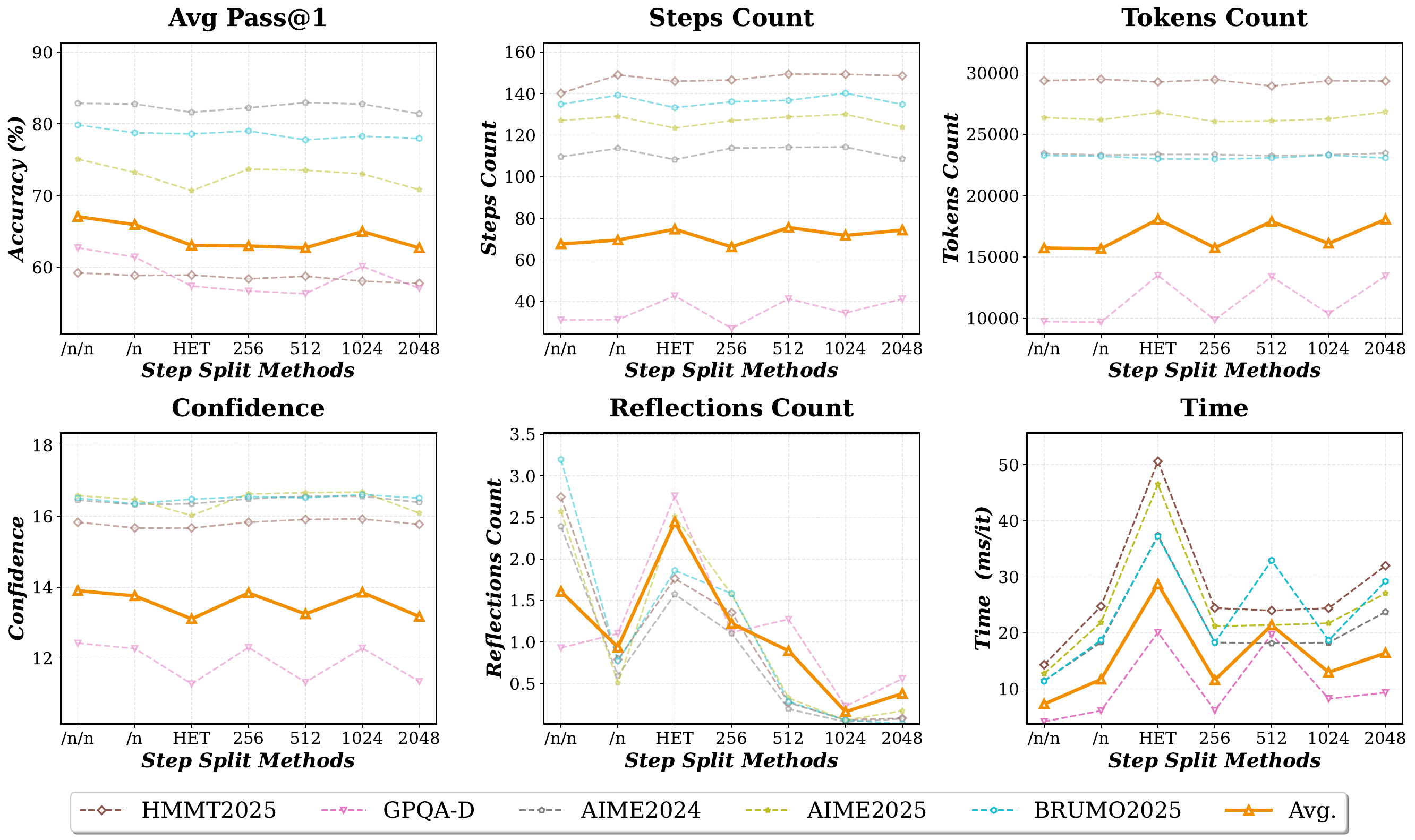}
        \caption{
          Ablation study results of different step splitting methods. Using DeepSeek-R1-8B to generate 64 trajectories for each question across 5 benchmarks. HET denotes High-Entropy Token. Complete results are provided in \autoref{tab:ablation_step_split_metric} of \autoref{sec:appendix_complete_step_split_ablation}.
          }
        \label{fig:supp_exp_step_split_ablation}
        \vskip -0.5in
        \end{center}
    \end{figure}

\newpage
\section{Complete experimental results}
\label{sec:appendix_complete_res}

\subsection{Main Experiments and Ablation Results on More Models}
\label{sec:appendix_complete_main_results}

    \paragraph{More Main Results.} In addition to the main experiments in the text involving the DeepSeek-R1 and Qwen3-32B models (\autoref{tab:main_results} in \autoref{sec:experiment_main_results}), We also tested the voting performance of \textit{SelfStepConf} and \textit{DistriVoting} on Qwen3 models ranging from Qwen3-0.6B to Qwen3-32B, including both thinking mode and nonthinking mode. The results shown in \autoref{tab:appendix_complete_main_results_thinking} and \autoref{tab:appendix_complete_main_results_nonthinking} are consistent with the analysis in the main text, showing that \textit{DistriVoting} consistently outperforms WSCVoting under different filter strategies. Moreover, both \textit{DistriVoting} and WSCVoting demonstrate that the \textit{GMM Filter} significantly outperforms the Top50 filter. The \textit{SelfStepConf} generation method, indicated by *, further amplifies this improvement. Notably, the voting performance improvement of SSC on models with weaker and stronger reasoning abilities is not as pronounced, this observation aligns with our pass@1 analysis results of \autoref{fig:analysis_ssc_pass@k_pass@1} in \autoref{sec:analysis_ssc_pass@k} of the main text.

    \begin{table*}[ht]
  \caption{All main results of \textit{SelfStepConf} (SSC) and \textit{DistriVoting} on various benchmarks using \textbf{Thinking Mode} of Qwen3-Series models. Set $\text{Budget}$ to 128 and repeat 64 times. SC denotes Self-Consistency (i.e., Majority Voting), WSC represents Weighted Self-Consistency (i.e., Weighted Majority Voting), BoN represents Best of N, and \textbf{* indicates answers generated using the SSC approach}.}
  \label{tab:appendix_complete_main_results_thinking}
  \begin{center}
    \resizebox{0.98\textwidth}{!}{
      \begin{tabular}{llccccccccccc}
        \toprule
        \multirow{2}{*}{Model} & \multirow{2}{*}{Benchmark} & \multirow{2}{*}{Pass@1} & \multirow{2}{*}{SC} & \multirow{2}{*}{WSC} & \multirow{2}{*}{BoN} & \multirow{2}{*}{MoB} & \multicolumn{3}{c}{WSC} & \multicolumn{3}{c}{DIS} \\
        \cmidrule(lr){8-10} \cmidrule(lr){11-13}
        & & & & & & & Top50 & GMM & GMM* & Top50 & GMM & GMM* \\
        \midrule
        \multirow{7}{*}{Qwen3-0.6B}
        & HMMT2025 & 8.89 & 18.97 & 20.37 & 17.24 & 17.22 & 17.54 & 14.51 & 16.65 & 17.98 & 20.74 & 14.21 \\
        & GPQA-D & 21.31 & 26.62 & 26.61 & 28.78 & 27.72 & 24.09 & 28.52 & 27.83 & 26.73 & 27.54 & 29.21 \\
        & AIME2024 & 10.73 & 17.08 & 17.55 & 13.44 & 23.95 & 29.58 & 23.84 & 29.74 & 26.20 & 28.45 & 27.22 \\
        & AIME2025 & 16.78 & 30.00 & 30.00 & 29.95 & 33.45 & 33.39 & 33.47 & 33.33 & 30.24 & 33.49 & 35.21 \\
        & BRUMO2025 & 16.93 & 23.13 & 23.33 & 23.39 & 30.25 & 29.79 & 28.83 & 28.96 & 30.45 & 29.66 & 27.11 \\
        \cmidrule(lr){2-13}
        & Avg. & 18.30 & 24.99 & 25.18 & 25.85 & 27.16 & 25.40 & 27.25 & 27.58 & 26.54 & 27.75 & 27.97 \\
        \midrule
        \midrule
        \multirow{7}{*}{Qwen3-1.7B}
        & HMMT2025 & 22.29 & 29.64 & 29.48 & 36.61 & 26.82 & 28.12 & 32.30 & 32.93 & 27.14 & 32.81 & 35.77 \\
        & GPQA-D & 34.91 & 43.22 & 43.94 & 42.91 & 43.51 & 43.60 & 44.74 & 48.36 & 45.51 & 45.85 & 48.26 \\
        & AIME2024 & 47.60 & 73.33 & 73.33 & 70.00 & 76.51 & 76.27 & 75.42 & 76.98 & 76.97 & 77.27 & 78.05 \\
        & AIME2025 & 34.48 & 46.67 & 47.29 & 39.95 & 50.00 & 50.08 & 55.25 & 53.07 & 52.19 & 54.85 & 51.94 \\
        & BRUMO2025 & 47.60 & 60.00 & 60.00 & 56.46 & 63.91 & 63.70 & 66.17 & 67.18 & 67.15 & 61.84 & 70.38 \\
        \cmidrule(lr){2-13}
        & Avg. & 36.07 & 46.69 & 47.18 & 45.87 & 47.59 & 47.73 & 49.48 & 51.82 & 49.42 & 49.94 & 52.33 \\
        \midrule
        \midrule
        \multirow{7}{*}{Qwen3-4B}
        & HMMT2025 & 42.34 & 53.33 & 53.33 & 43.39 & 53.78 & 53.42 & 53.33 & 56.57 & 56.76 & 53.75 & 55.27 \\
        & GPQA-D & 52.23 & 55.13 & 55.56 & 58.06 & 58.94 & 58.84 & 59.08 & 62.36 & 58.69 & 59.47 & 62.82 \\
        & AIME2024 & 71.30 & 80.00 & 80.00 & 76.72 & 83.35 & 83.47 & 83.33 & 82.41 & 84.01 & 83.41 & 83.91 \\
        & AIME2025 & 62.24 & 73.02 & 76.46 & 66.77 & 75.78 & 76.79 & 76.51 & 76.89 & 77.18 & 76.38 & 77.09 \\
        & BRUMO2025 & 62.60 & 67.55 & 70.00 & 73.39 & 76.83 & 75.07 & 76.04 & 77.10 & 77.13 & 76.22 & 76.67 \\
        \cmidrule(lr){2-13}
        & Avg. & 55.02 & 60.17 & 60.99 & 60.70 & 64.03 & 63.88 & 64.07 & 66.47 & 64.38 & 64.36 & 66.75 \\
        \midrule
        \midrule
        \multirow{7}{*}{Qwen3-8B}
        & HMMT2025 & 41.46 & 59.95 & 60.00 & 54.27 & 61.72 & 60.16 & 60.99 & 63.75 & 61.20 & 62.34 & 62.60 \\
        & GPQA-D & 54.70 & 63.83 & 63.80 & 63.22 & 64.11 & 64.35 & 64.48 & 65.37 & 64.29 & 64.43 & 66.31 \\
        & AIME2024 & 73.02 & 80.68 & 81.15 & 84.69 & 85.63 & 84.22 & 87.08 & 86.51 & 87.66 & 87.08 & 86.61 \\
        & AIME2025 & 62.55 & 77.19 & 77.55 & 64.79 & 74.32 & 74.95 & 74.22 & 72.45 & 73.65 & 73.59 & 74.11 \\
        & BRUMO2025 & 67.19 & 80.21 & 80.57 & 76.15 & 81.93 & 82.97 & 82.71 & 81.88 & 82.40 & 82.55 & 82.97 \\
        \cmidrule(lr){2-13}
        & Avg. & 57.10 & 67.86 & 67.96 & 65.77 & 68.56 & 68.59 & 68.92 & 69.44 & 68.79 & 68.94 & 70.18 \\
        \midrule
        \midrule
        \multirow{7}{*}{Qwen3-14B}
        & HMMT2025 & 49.32 & 61.67 & 62.14 & 52.92 & 63.23 & 63.33 & 63.70 & 64.43 & 63.75 & 63.96 & 64.74 \\
        & GPQA-D & 63.68 & 65.25 & 65.52 & 66.75 & 66.83 & 66.60 & 66.99 & 67.32 & 66.86 & 67.08 & 67.26 \\
        & AIME2024 & 78.59 & 85.00 & 85.42 & 85.99 & 86.20 & 84.74 & 87.76 & 85.68 & 88.39 & 88.28 & 86.82 \\
        & AIME2025 & 68.91 & 76.72 & 76.67 & 72.40 & 77.08 & 77.03 & 77.50 & 77.50 & 77.14 & 77.34 & 77.45 \\
        & BRUMO2025 & 75.16 & 80.52 & 80.83 & 83.70 & 86.41 & 85.26 & 86.25 & 85.31 & 86.41 & 86.61 & 85.89 \\
        \cmidrule(lr){2-13}
        & Avg. & 65.31 & 69.30 & 69.58 & 69.39 & 71.13 & 70.74 & 71.44 & 71.45 & 71.41 & 71.59 & 71.60 \\
        \bottomrule
      \end{tabular}
    }
  \end{center}
\end{table*}
    \begin{table*}[ht]
    \caption{All main results of \textit{SelfStepConf} (SSC) and \textit{DistriVoting} on various benchmarks using \textbf{NonThinking Mode} of Qwen3-Series models. Set $\text{Budget}$ to 128 and repeat 64 times. SC denotes Self-Consistency (i.e., Majority Voting), WSC represents Weighted SC (i.e., Weighted Majority Voting), BoN represents Best of N, and \textbf{* indicates answers generated using the SSC approach}.}
    \label{tab:appendix_complete_main_results_nonthinking}
    \begin{center}
    \resizebox{0.95\textwidth}{!}{
      \begin{tabular}{llccccccccccc}
        \toprule
        \multirow{2}{*}{Model} & \multirow{2}{*}{Benchmark} & \multirow{2}{*}{Pass@1} & \multirow{2}{*}{SC} & \multirow{2}{*}{WSC} & \multirow{2}{*}{BoN} & \multirow{2}{*}{MoB} & \multicolumn{3}{c}{WSC} & \multicolumn{3}{c}{DIS} \\
        \cmidrule(lr){8-10} \cmidrule(lr){11-13}
        & & & & & & & Top50 & GMM & GMM* & Top50 & GMM & GMM* \\
        \midrule
        \multirow{7}{*}{Qwen3-0.6B}
        & HMMT2025 & 0.97 & 3.34 & 3.83 & 3.50 & 6.91 & 6.95 & 4.94 & 6.94 & 5.27 & 6.90 & 5.70 \\
        & GPQA-D & 23.14 & 23.39 & 23.17 & 23.79 & 22.12 & 20.45 & 23.92 & 24.09 & 24.24 & 24.32 & 24.94 \\
        & AIME2024 & 2.66 & 3.33 & 3.33 & 3.39 & 3.34 & 3.41 & 4.67 & 6.77 & 6.62 & 3.33 & 5.36 \\
        & AIME2025 & 2.45 & 6.67 & 6.67 & 3.44 & 7.83 & 6.61 & 7.51 & 9.96 & 10.03 & 9.69 & 8.40 \\
        & BRUMO2025 & 8.12 & 16.51 & 16.67 & 16.46 & 16.59 & 17.32 & 20.53 & 21.34 & 16.96 & 19.74 & 20.06 \\
        \cmidrule(lr){2-13}
        & Avg. & 15.75 & 17.38 & 17.30 & 17.34 & 17.05 & 15.97 & 18.45 & 19.25 & 18.76 & 18.88 & 19.26 \\
        \midrule
        \midrule
        \multirow{7}{*}{Qwen3-1.7B}
        & HMMT2025 & 5.68 & 6.67 & 6.67 & 6.56 & 10.40 & 11.34 & 10.08 & 13.13 & 11.80 & 10.23 & 11.92 \\
        & GPQA-D & 28.23 & 37.19 & 36.34 & 31.83 & 37.78 & 33.06 & 37.56 & 37.82 & 32.63 & 37.95 & 38.73 \\
        & AIME2024 & 12.92 & 23.49 & 23.65 & 10.10 & 25.73 & 26.84 & 29.87 & 32.97 & 26.85 & 30.26 & 32.97 \\
        & AIME2025 & 9.32 & 16.67 & 16.67 & 23.13 & 16.81 & 17.75 & 16.80 & 16.67 & 20.30 & 16.69 & 16.80 \\
        & BRUMO2025 & 17.40 & 23.33 & 23.33 & 19.95 & 25.26 & 30.53 & 25.62 & 26.41 & 31.09 & 26.61 & 25.36 \\
        \cmidrule(lr){2-13}
        & Avg. & 21.85 & 29.78 & 29.26 & 25.45 & 30.90 & 28.74 & 31.15 & 31.96 & 28.81 & 31.53 & 32.33 \\
        \midrule
        \midrule
        \multirow{7}{*}{Qwen3-4B}
        & HMMT2025 & 11.67 & 16.67 & 16.67 & 19.84 & 16.67 & 16.80 & 16.91 & 17.23 & 16.69 & 16.80 & 16.47 \\
        & GPQA-D & 41.26 & 44.79 & 46.69 & 46.48 & 47.17 & 45.35 & 47.85 & 48.36 & 46.97 & 48.13 & 48.46 \\
        & AIME2024 & 22.45 & 36.41 & 37.03 & 30.10 & 40.26 & 46.78 & 45.17 & 37.61 & 43.41 & 47.03 & 41.43 \\
        & AIME2025 & 18.39 & 20.36 & 23.18 & 23.28 & 33.28 & 33.39 & 33.14 & 31.20 & 33.23 & 32.44 & 35.40 \\
        & BRUMO2025 & 27.60 & 36.72 & 37.19 & 29.95 & 39.06 & 41.95 & 40.60 & 51.05 & 39.70 & 41.73 & 51.29 \\
        \cmidrule(lr){2-13}
        & Avg. & 33.25 & 38.28 & 39.83 & 38.67 & 41.57 & 41.34 & 42.60 & 43.04 & 41.80 & 42.98 & 43.81 \\
        \midrule
        \midrule
        \multirow{7}{*}{Qwen3-8B}
        & HMMT2025 & 10.36 & 13.33 & 13.33 & 13.44 & 16.85 & 16.75 & 14.64 & 19.70 & 13.62 & 22.60 & 16.48 \\
        & GPQA-D & 46.32 & 56.00 & 55.97 & 45.53 & 54.71 & 54.53 & 55.29 & 54.64 & 55.43 & 55.59 & 57.08 \\
        & AIME2024 & 28.23 & 46.51 & 46.30 & 23.44 & 43.68 & 40.23 & 40.16 & 44.70 & 43.51 & 48.51 & 44.25 \\
        & AIME2025 & 20.16 & 30.00 & 30.00 & 23.39 & 27.77 & 30.67 & 33.49 & 35.35 & 32.92 & 34.61 & 33.39 \\
        & BRUMO2025 & 28.02 & 41.35 & 43.13 & 33.39 & 45.15 & 46.37 & 50.00 & 43.54 & 49.93 & 39.49 & 44.99 \\
        \cmidrule(lr){2-13}
        & Avg. & 37.03 & 47.24 & 47.38 & 37.18 & 46.65 & 46.60 & 47.47 & 47.54 & 47.72 & 48.31 & 48.66 \\
        \midrule
        \midrule
        \multirow{7}{*}{Qwen3-14B}
        & HMMT2025 & 9.81 & 21.82 & 20.26 & 20.53 & 20.69 & 25.61 & 22.16 & 25.54 & 25.84 & 26.51 & 21.94 \\
        & GPQA-D & 52.38 & 55.47 & 58.51 & 54.06 & 58.07 & 56.04 & 59.45 & 56.28 & 55.98 & 56.45 & 60.71 \\
        & AIME2024 & 27.45 & 47.34 & 40.00 & 29.95 & 40.10 & 48.91 & 40.31 & 50.68 & 48.96 & 50.05 & 43.63 \\
        & AIME2025 & 21.09 & 35.51 & 33.07 & 30.10 & 33.33 & 38.58 & 34.17 & 39.60 & 37.82 & 38.31 & 35.49 \\
        & BRUMO2025 & 31.67 & 49.79 & 43.70 & 36.51 & 49.58 & 49.32 & 46.51 & 52.10 & 50.26 & 51.89 & 46.94 \\
        \cmidrule(lr){2-13}
        & Avg. & 41.11 & 49.11 & 49.36 & 44.71 & 49.71 & 50.22 & 50.52 & 50.88 & 50.22 & 50.88 & 51.77 \\
        \midrule
        \midrule
        \multirow{7}{*}{Qwen3-32B}
        & HMMT2025 & 12.18 & 19.34 & 24.14 & 21.37 & 25.50 & 20.72 & 25.39 & 20.18 & 24.28 & 17.17 & 25.84 \\
        & GPQA-D & 51.55 & 58.74 & 57.07 & 52.92 & 56.57 & 59.07 & 56.45 & 60.36 & 58.93 & 60.84 & 56.92 \\
        & AIME2024 & 30.05 & 40.00 & 46.67 & 34.22 & 50.16 & 40.07 & 49.64 & 44.63 & 40.91 & 49.63 & 50.89 \\
        & AIME2025 & 24.95 & 31.04 & 36.30 & 35.94 & 36.64 & 33.39 & 39.12 & 33.51 & 34.98 & 33.09 & 46.61 \\
        & BRUMO2025 & 35.73 & 43.49 & 50.00 & 34.74 & 49.64 & 49.51 & 50.63 & 43.68 & 46.22 & 45.80 & 52.24 \\
        \cmidrule(lr){2-13}
        & Avg. & 41.81 & 49.21 & 50.36 & 44.86 & 50.50 & 50.34 & 50.69 & 50.98 & 50.50 & 51.62 & 52.00 \\
        \bottomrule
      \end{tabular}
    }
    \end{center}
    \vskip -0.1in
\end{table*}
    
    \newpage
    \paragraph{Methods Ablation.} We conducted detailed ablation experiments on the key components of the proposed methods: \textit{SelfStepConf} and \textit{DistriVoting} (\textit{GMM Filter} + \textit{Reject Filter} + \textit{ \textit{HierVoting}}). As shown in ~\autoref{tab:appendix_complete_main_results_ablation} and ~\autoref{tab:appendix_complete_main_results_ablation_qwen3_32B}, \textit{SelfStepConf} is marked with \ding{55} for basic inference and \ding{51} for SSC inference. \textit{GMM Filter} is marked with \ding{51} for using the \textit{GMM Filter} and \ding{55} for using the naive top50 filter, and ``--'' indicates no filtering, using all trajectories for voting. \textit{Reject Filter} is marked with \ding{55} and \ding{51} for performing and not performing the \textit{Reject Filter}, respectively.  \textit{HierVoting} is marked with \ding{55} and \ding{51} for using Weighted-SC Voting and \textit{HierVoting}, respectively. From the results, it is evident that SSC inference consistently outperforms basic inference under any setting, indicating that SSC provides a consistent improvement in voting performance. Additionally, among the three components of \textit{DistriVoting}, the \textit{GMM Filter} is crucial in affecting performance, as its presence significantly impacts voting results. The benefit of the \textit{Reject Filter} on voting performance relies on the foundation of the \textit{GMM Filter}, meaning that only after a good distribution split can the positive and negative results of the distribution be reliably rejected. In contrast,  \textit{HierVoting} improves voting performance even without the \textit{GMM Filter}, but its impact is not significant with the addition of the \textit{GMM Filter} and \textit{Reject Filter}. This is primarily due to the diminishing marginal returns of performance enhancement, where the advantages of more complex voting strategies are mainly evident on a weaker filtering basis, aligning with our analysis of \autoref{fig:appendix_complete_top50_analysis} in \autoref{sec:appendix_parameters_analysis_Nc}.
    
    \begin{table}[ht]
    \vskip 0.3in
    \caption{Complete ablation experiments on the method components in the main experiments using \textbf{DeepSeek-R1-8B}, sampling 128 trajectories/query (64 repeats). The naive version of the \textit{GMM Filter} is the Top50 Filter, the naive version of \textit{HierVoting} is Weighted-SC, and the naive version of the \textit{Reject Filter} is no filtering. -- indicates no filtering at all, i.e., directly using all trajectories for voting.}
    \label{tab:appendix_complete_main_results_ablation}
    \begin{center}
    \resizebox{0.98\textwidth}{!}{
    \begin{tabular}{cccccccccc}
        \toprule
        \multirow{2}{*}{SelfStepConf} & \multicolumn{3}{c}{DistriVoting} & \multicolumn{6}{c}{Voting Acc (\%)}  \\
        \cmidrule(lr){2-4} \cmidrule(lr){5-10}
        & \textit{GMM Filter} & \textit{Reject Filter} & \textit{HierVoting} & HMMT2025 & GPQA-D & AIME2024 & AIME2025 & BRUMO2025 & Avg.\\
        \midrule
        \multirow{11}{*}{\ding{55}} 
        & {\ding{55}} & {\ding{55}} & {\ding{55}} & 73.80$\pm$0.44 & 68.58$\pm$0.04 & 90.00$\pm$0.18 & 82.55$\pm$0.34 & 93.33$\pm$0.00 & 74.75$\pm$0.07 \\
        & {\ding{51}} & {\ding{55}} & {\ding{55}} & 82.50$\pm$0.49 & 69.58$\pm$0.13 & 93.13$\pm$0.15 & 83.91$\pm$0.99 & 93.59$\pm$0.08 & 76.64$\pm$0.11 \\
        & {\ding{55}} & {\ding{51}} & {\ding{55}} & 73.91$\pm$0.39 & 68.47$\pm$0.15 & 89.90$\pm$0.08 & 82.86$\pm$0.31 & 93.33$\pm$0.00 & 74.71$\pm$0.11 \\
        & {\ding{55}} & {\ding{55}} & {\ding{51}} & 79.95$\pm$0.23 & 69.36$\pm$0.07 & 93.07$\pm$0.05 & 84.38$\pm$0.16 & 93.44$\pm$0.03 & 76.28$\pm$0.08 \\
        & {\ding{51}} & {\ding{51}} & {\ding{55}} & 83.07$\pm$0.26 & 69.31$\pm$0.23 & 93.28$\pm$0.05 & 86.77$\pm$0.29 & 93.75$\pm$0.13 & 76.82$\pm$0.14 \\
        & {\ding{51}} & {\ding{55}} & {\ding{51}} & 82.03$\pm$0.68 & 69.67$\pm$0.13 & 93.28$\pm$0.03 & 85.89$\pm$0.31 & 93.85$\pm$0.08 & 76.87$\pm$0.18 \\
        & {\ding{55}} & {\ding{51}} & {\ding{51}} & 79.27$\pm$0.65 & 69.52$\pm$0.19 & 93.18$\pm$0.13 & 84.43$\pm$0.23 & 93.28$\pm$0.18 & 76.32$\pm$0.05 \\
        & {\ding{51}} & {\ding{51}} & {\ding{51}} & 82.55$\pm$0.31 & 69.82$\pm$0.09 & 93.13$\pm$0.18 & 85.52$\pm$0.60 & 93.70$\pm$0.10 & 76.95$\pm$0.10 \\
        \cmidrule{2-10}
        & --          & --          & {\ding{55}} & 69.69$\pm$0.18 & 67.65$\pm$0.04 & 86.67$\pm$0.00 & 80.78$\pm$0.13 & 93.33$\pm$0.00 & 73.30$\pm$0.03 \\
        & --          & --          & {\ding{51}} & 76.67$\pm$0.24 & 68.26$\pm$0.07 & 92.92$\pm$0.13 & 84.58$\pm$0.45 & 93.33$\pm$0.05 & 75.28$\pm$0.05 \\
        \midrule
        \midrule
        \multirow{11}{*}{\ding{51}} 
        & {\ding{55}} & {\ding{55}} & {\ding{55}} & 75.31$\pm$0.00 & 69.11$\pm$0.28 & 89.69$\pm$0.19 & 80.00$\pm$0.10 & 93.33$\pm$0.00 & 74.95$\pm$0.16 \\
        & {\ding{51}} & {\ding{55}} & {\ding{55}} & 84.17$\pm$0.54 & 70.11$\pm$0.07 & 93.33$\pm$0.36 & 84.38$\pm$0.59 & 94.17$\pm$0.44 & 77.24$\pm$0.14 \\
        & {\ding{55}} & {\ding{51}} & {\ding{55}} & 75.36$\pm$0.28 & 69.07$\pm$0.15 & 89.69$\pm$0.08 & 80.05$\pm$0.18 & 93.33$\pm$0.00 & 74.94$\pm$0.13 \\
        & {\ding{55}} & {\ding{55}} & {\ding{51}} & 81.30$\pm$0.50 & 70.08$\pm$0.12 & 93.13$\pm$0.24 & 82.76$\pm$0.21 & 93.39$\pm$0.03 & 76.71$\pm$0.11 \\
        & {\ding{51}} & {\ding{51}} & {\ding{55}} & 85.21$\pm$0.52 & 70.16$\pm$0.04 & 93.18$\pm$0.05 & 84.90$\pm$0.23 & 94.74$\pm$0.16 & 77.46$\pm$0.07 \\
        & {\ding{51}} & {\ding{55}} & {\ding{51}} & 83.75$\pm$0.60 & 70.11$\pm$0.48 & 93.13$\pm$0.03 & 84.74$\pm$0.31 & 94.64$\pm$0.11 & 77.26$\pm$0.29 \\
        & {\ding{55}} & {\ding{51}} & {\ding{51}} & 81.20$\pm$0.52 & 70.13$\pm$0.17 & 93.13$\pm$0.24 & 82.71$\pm$0.21 & 93.33$\pm$0.03 & 76.72$\pm$0.13 \\
        & {\ding{51}} & {\ding{51}} & {\ding{51}} & 84.95$\pm$0.86 & 70.63$\pm$0.17 & 93.23$\pm$0.05 & 86.46$\pm$0.55 & 94.27$\pm$0.17 & 77.84$\pm$0.28 \\
        \cmidrule{2-10}
        & --          & --          & {\ding{55}} & 70.57$\pm$0.13 & 67.76$\pm$0.02 & 86.67$\pm$0.00 & 79.32$\pm$0.21 & 93.18$\pm$0.16 & 73.30$\pm$0.03 \\
        & --          & --          & {\ding{51}} & 78.96$\pm$0.52 & 68.81$\pm$0.11 & 92.76$\pm$0.16 & 82.86$\pm$0.21 & 93.23$\pm$0.03 & 75.66$\pm$0.08 \\
      \bottomrule
    \end{tabular}
    }
    \end{center}
    \vskip 0.1in
\end{table}

\begin{table}[H]
  \caption{Complete ablation experiments on the method components in the main experiments using \textbf{Qwen3-32B}, sampling 128 trajectories/query (64 repeats). The naive version of the \textit{GMM Filter} is the Top50 Filter, the naive version of \textit{HierVoting} is Weighted-SC, and the naive version of the \textit{Reject Filter} is no filtering. -- indicates no filtering at all, i.e., directly using all trajectories for voting.}
  \label{tab:appendix_complete_main_results_ablation_qwen3_32B}
  \begin{center}
    \resizebox{0.98\textwidth}{!}{
    \begin{tabular}{cccccccccc}
    \toprule
        \multirow{2}{*}{SelfStepConf} & \multicolumn{3}{c}{DistriVoting} & \multicolumn{6}{c}{Voting Acc (\%)}  \\
        \cmidrule(lr){2-4} \cmidrule(lr){5-10}
        & \textit{GMM Filter} & \textit{Reject Filter} & \textit{HierVoting} & HMMT2025 & GPQA-D & AIME2024 & AIME2025 & BRUMO2025 & Avg.\\
        \midrule
        \multirow{11}{*}{\ding{55}} 
        & {\ding{55}} & {\ding{55}} & {\ding{55}} & 63.96$\pm$0.29 & 72.03$\pm$0.17 & 88.28$\pm$0.31 & 76.93$\pm$0.26 & 92.71$\pm$0.00 & 75.22$\pm$0.12 \\
        & {\ding{51}} & {\ding{55}} & {\ding{55}} & 64.84$\pm$0.23 & 72.42$\pm$0.11 & 88.65$\pm$0.31 & 79.22$\pm$0.34 & 92.71$\pm$0.03 & 75.79$\pm$0.10 \\
        & {\ding{55}} & {\ding{51}} & {\ding{55}} & 63.85$\pm$0.23 & 72.01$\pm$0.15 & 88.28$\pm$0.36 & 76.98$\pm$0.29 & 93.33$\pm$0.00 & 75.26$\pm$0.10 \\
        & {\ding{55}} & {\ding{55}} & {\ding{51}} & 64.79$\pm$0.36 & 72.29$\pm$0.25 & 88.70$\pm$0.49 & 79.11$\pm$0.31 & 93.28$\pm$0.05 & 75.76$\pm$0.18 \\
        & {\ding{51}} & {\ding{51}} & {\ding{55}} & 65.00$\pm$0.21 & 72.51$\pm$0.16 & 88.54$\pm$0.44 & 78.96$\pm$0.55 & 93.33$\pm$0.00 & 75.88$\pm$0.09 \\
        & {\ding{51}} & {\ding{55}} & {\ding{51}} & 64.48$\pm$0.26 & 72.75$\pm$0.27 & 89.11$\pm$0.36 & 78.75$\pm$0.18 & 93.23$\pm$0.03 & 76.01$\pm$0.15 \\
        & {\ding{55}} & {\ding{51}} & {\ding{51}} & 64.95$\pm$0.24 & 72.33$\pm$0.33 & 88.70$\pm$0.52 & 79.06$\pm$0.34 & 93.28$\pm$0.03 & 75.79$\pm$0.20 \\
        & {\ding{51}} & {\ding{51}} & {\ding{51}} & 64.43$\pm$0.21 & 72.74$\pm$0.25 & 89.01$\pm$0.39 & 78.70$\pm$0.23 & 93.23$\pm$0.08 & 75.99$\pm$0.09 \\
        \cmidrule{2-10}
        & --          & --          & {\ding{55}} & 62.24$\pm$0.39 & 70.55$\pm$0.14 & 86.88$\pm$0.67 & 77.08$\pm$0.10 & 93.33$\pm$0.00 & 74.07$\pm$0.16 \\
        & --          & --          & {\ding{51}} & 62.66$\pm$0.42 & 70.34$\pm$0.15 & 89.32$\pm$0.16 & 80.42$\pm$0.18 & 93.18$\pm$0.08 & 74.51$\pm$0.12 \\
        \midrule
        \midrule
        \multirow{11}{*}{\ding{51}} 
        & {\ding{55}} & {\ding{55}} & {\ding{55}} & 65.63$\pm$0.49 & 72.53$\pm$0.12 & 90.42$\pm$0.36 & 77.86$\pm$0.33 & 93.33$\pm$0.00 & 76.03$\pm$0.12 \\
        & {\ding{51}} & {\ding{55}} & {\ding{55}} & 65.21$\pm$0.57 & 72.43$\pm$0.25 & 90.16$\pm$0.21 & 80.00$\pm$0.42 & 93.28$\pm$0.18 & 76.10$\pm$0.18 \\
        & {\ding{55}} & {\ding{51}} & {\ding{55}} & 65.63$\pm$0.55 & 72.53$\pm$0.09 & 90.42$\pm$0.34 & 77.76$\pm$0.21 & 93.33$\pm$0.00 & 76.02$\pm$0.09 \\
        & {\ding{55}} & {\ding{55}} & {\ding{51}} & 64.74$\pm$0.42 & 72.99$\pm$0.16 & 89.27$\pm$0.47 & 79.38$\pm$0.55 & 93.33$\pm$0.15 & 76.27$\pm$0.16 \\
        & {\ding{51}} & {\ding{51}} & {\ding{55}} & 65.16$\pm$0.55 & 72.48$\pm$0.28 & 90.05$\pm$0.26 & 79.90$\pm$0.60 & 92.71$\pm$0.15 & 76.05$\pm$0.21 \\
        & {\ding{51}} & {\ding{55}} & {\ding{51}} & 63.23$\pm$0.47 & 73.13$\pm$0.06 & 89.22$\pm$0.57 & 80.31$\pm$0.60 & 92.66$\pm$0.06 & 76.23$\pm$0.09 \\
        & {\ding{55}} & {\ding{51}} & {\ding{51}} & 64.64$\pm$0.29 & 73.13$\pm$0.15 & 89.38$\pm$0.52 & 79.17$\pm$0.44 & 93.28$\pm$0.05 & 76.33$\pm$0.17 \\
        & {\ding{51}} & {\ding{51}} & {\ding{51}} & 65.73$\pm$0.05 & 73.18$\pm$0.02 & 89.11$\pm$0.50 & 80.05$\pm$0.44 & 93.33$\pm$0.08 & 76.53$\pm$0.08 \\
        \cmidrule{2-10}
        & --          & --          & {\ding{55}} & 64.11$\pm$0.18 & 71.43$\pm$0.09 & 87.45$\pm$0.52 & 77.55$\pm$0.52 & 93.33$\pm$0.00 & 74.90$\pm$0.03 \\
        & --          & --          & {\ding{51}} & 65.00$\pm$0.31 & 71.97$\pm$0.19 & 88.39$\pm$0.39 & 79.38$\pm$0.47 & 93.18$\pm$0.08 & 75.56$\pm$0.09 \\
      \bottomrule
    \end{tabular}
    }
  \end{center}
\end{table}

\newpage
\subsection{Detailed Ablation Results of Distribution Splitting Methods}
\label{sec:appendix_complete_gmm_ablation}
    For the average ablation experiment results of different distribution splitting methods on five benchmarks in \autoref{tab:ablation_gmm} of \autoref{sec:experiment_ablation}, we supplement the detailed results for each benchmark here. As shown in \autoref{tab:appendix_complete_gmm_ablation}, Acc and WAcc are defined in \autoref{sec:experiment_ablation}, and the closer the AUROC is to 1, the higher the quality of confidence in the positive interval after distribution splitting. Voting Acc is calculated based on \textit{DistriVoting}, Predict Acc represents the binary classification accuracy of the distribution splitting method, and Predict Time indicates the average runtime of different clustering methods per trajectory.

    \begin{table}[ht]
    \vskip 0.2in
    \caption{Complete ablation study results of different correctness prediction methods. Using DeepSeek-R1-8B to generate 128 trajectories for each question across 5 benchmarks with \textit{DistriVoting}, repeated 64 times.}
    \label{tab:appendix_complete_gmm_ablation}
    \begin{center}
    \resizebox{0.9\textwidth}{!}{
    \begin{tabular}{llcccccc}
      \toprule
      Method    & Metric & HMMT2025 & GPQA-D & AIME2024 & AIME2025 & BRUMO2025 & Avg. \\
      \midrule
      \multirow{7}{*}{Top50}
      & Acc (\%)                & 71.94  & 68.65  & 91.24  & 84.40  & 88.63  & 74.47  \\
      & WAcc (\%)               & 72.39  & 68.71  & 91.41  & 84.57  & 89.02  & 74.61  \\
      & AUROC ($\uparrow$)      & 0.5234 & 0.4921 & 0.5358 & 0.5129 & 0.6037 & 0.5117 \\
      \cmidrule{2-8}
      & Voting Acc (\%)         & 78.28  & 67.85  & 93.18  & 83.44  & 93.33  & 75.10  \\
      & Predict Acc (\%)        & 56.77  & 52.66  & 53.62  & 53.98  & 56.68  & 53.64  \\
      & Predict Time (ms/it)    & 0.0300 & 0.1110 & 0.0961 & 0.0884 & 0.0443 & 0.0935 \\
      \midrule
      \midrule
      \multirow{7}{*}{K-Means}
      & Acc (\%)                & 76.59  & 68.07  & 94.00  & 87.50  & 90.76  & 75.29  \\
      & WAcc (\%)               & 76.64  & 68.07  & 94.04  & 87.49  & 90.88  & 75.32  \\
      & AUROC ($\uparrow$)      & 0.5306 & 0.5047 & 0.5478 & 0.4827	& 0.6242 & 0.5204 \\
      \cmidrule{2-8}
      & Voting Acc (\%)         & 82.03  & 67.12  & 93.33  & 85.16  & 93.54  & 75.19  \\
      & Predict Acc (\%)        & 71.12  & 51.00  & 62.79  & 62.51  & 62.66  & 56.19  \\
      & Predict Time (ms/it)    & 0.6490 & 0.6080 & 0.5560 & 0.5690 & 0.5880 & 0.6014 \\
      \midrule
      \midrule
      \multirow{7}{*}{MeanShift} 
      & Acc (\%)                & 76.93  & 69.82  & 94.40  & 88.03  & 91.33  & 76.55  \\
      & WAcc (\%)               & 76.95  & 69.82  & 94.41  & 88.00  & 91.43  & 76.57  \\
      & AUROC ($\uparrow$)      & 0.5125 & 0.5047 & 0.5383 & 0.4806 & 0.6051 & 0.5158 \\
      \cmidrule{2-8}
      & Voting Acc (\%)         & 82.71  & 67.40  & 93.28  & 85.47  & 93.96  & 75.50  \\
      & Predict Acc (\%)        & 76.08  & 53.01  & 68.53  & 68.44  & 66.08  & 59.34  \\
      & Predict Time (ms/it)    & 1.8210 & 1.8950 & 1.8030 & 1.7700 & 1.7000 & 1.8492 \\ 
      \midrule
      \midrule
      \multirow{7}{*}{GMM}
      & Acc (\%)                & 76.71  & 71.69  & 93.90  & 87.79  & 91.06  & 77.60  \\
      & WAcc (\%)               & 76.92  & 71.72  & 94.00  & 87.86  & 91.24  & 77.68  \\
      & AUROC ($\uparrow$)      & 0.6032 & 0.5605 & 0.6441 & 0.5569 & 0.6776 & 0.5831 \\
      \cmidrule{2-8}
      & Voting Acc (\%)         & 82.55  & 69.82 & 93.13  & 85.52  & 93.70   & 76.95  \\
      & Predict Acc (\%)        & 72.33  & 56.38 & 66.20  & 65.11  & 65.19   & 60.46  \\
      & Predict Time (ms/it)    & 0.2730 & 0.354 & 0.3300 & 0.3350 & 0.2970  & 0.3369 \\ 
      \bottomrule
    \end{tabular}
    }
    \end{center}
\end{table}

\clearpage
\newpage
\subsection{Detailed Ablation Results of Voting Budget}
\label{sec:appendix_complete_budget_ablation}
    For the average ablation results under different budgets on five benchmarks in \autoref{tab:ablation_budget} of \autoref{sec:experiment_ablation} in the main text, we provide detailed results for each benchmark here, as shown in \autoref{tab:appendix_complete_budget_ablation}.
    \begin{table}[ht]
    \vskip 0.2in
    \caption{Complete ablation study results of different Budget. Using DeepSeek-R1-8B to generate $\text{Budget}$ trajectories for each question across 5 benchmarks with \textit{DistriVoting}, repeated 64 times. * indicates answers generated using the \textit{SelfStepConf} approach.}
    \label{tab:appendix_complete_budget_ablation}
    \begin{center}
    \resizebox{0.88\textwidth}{!}{
    \begin{tabular}{llcccccc}
    \toprule
      Budget & Method & HMMT2025 & GPQA-D & AIME2024 & AIME2025 & BRUMO2025 & Avg. \\
      \midrule
      \multirow{7}{*}{8}
      & WSC-Top50 & 71.09$\pm$1.30 & 67.37$\pm$0.87 & 89.06$\pm$0.34 & 80.52$\pm$0.16 & 90.31$\pm$0.34 & 73.17$\pm$0.51\\
      & WSC-GMM   & 72.71$\pm$1.98 & 66.99$\pm$0.26 & 89.53$\pm$0.52 & 81.25$\pm$0.49 & 90.10$\pm$0.52 & 73.18$\pm$0.25\\
      & WSC-GMM*  & 73.07$\pm$0.65 & 67.18$\pm$0.89 & 89.38$\pm$0.42 & 80.31$\pm$0.42 & 90.00$\pm$0.49 & 73.22$\pm$0.56\\
      \cmidrule{2-8}
      & DIS-Top50 & 70.78$\pm$1.30 & 67.14$\pm$0.69 & 88.75$\pm$0.39 & 80.47$\pm$0.00 & 90.16$\pm$0.34 & 72.95$\pm$0.31\\
      & DIS-GMM   & 72.71$\pm$2.11 & 66.94$\pm$0.41 & 89.27$\pm$0.57 & 81.20$\pm$0.37 & 90.05$\pm$0.42 & 73.12$\pm$0.24\\
      & DIS-GMM*  & 72.97$\pm$0.60 & 67.16$\pm$1.22 & 89.32$\pm$0.44 & 80.21$\pm$0.34 & 89.90$\pm$0.50 & 73.18$\pm$0.78\\
      \midrule
      \midrule
      \multirow{7}{*}{16}
      & WSC-Top50 & 72.55$\pm$0.70 & 67.86$\pm$0.10 & 89.48$\pm$0.13 & 81.25$\pm$0.76 & 91.77$\pm$0.13 & 73.86$\pm$0.09\\
      & WSC-GMM   & 75.78$\pm$1.20 & 67.89$\pm$0.20 & 90.42$\pm$0.03 & 83.23$\pm$0.86 & 92.55$\pm$0.44 & 74.53$\pm$0.17\\
      & WSC-GMM*  & 76.72$\pm$0.86 & 68.05$\pm$0.01 & 91.25$\pm$0.52 & 82.14$\pm$0.24 & 92.34$\pm$0.15 & 74.68$\pm$0.07\\
      \cmidrule{2-8}
      & DIS-Top50 & 72.08$\pm$1.22 & 67.83$\pm$0.26 & 89.22$\pm$0.08 & 80.94$\pm$0.96 & 91.51$\pm$0.42 & 73.72$\pm$0.16\\
      & DIS-GMM   & 74.58$\pm$0.68 & 68.35$\pm$0.67 & 91.09$\pm$0.28 & 82.97$\pm$0.83 & 92.40$\pm$0.55 & 74.73$\pm$0.39\\
      & DIS-GMM*  & 75.94$\pm$0.52 & 68.19$\pm$0.04 & 90.78$\pm$0.08 & 82.45$\pm$0.47 & 93.07$\pm$1.32 & 74.74$\pm$0.13\\
      \midrule
      \midrule
      \multirow{7}{*}{32}
      & WSC-Top50 & 74.01$\pm$0.83 & 68.49$\pm$0.28 & 89.69$\pm$0.55 & 81.51$\pm$0.34 & 93.07$\pm$0.23 & 74.56$\pm$0.24\\
      & WSC-GMM   & 79.95$\pm$1.56 & 68.68$\pm$0.28 & 92.40$\pm$0.55 & 84.53$\pm$0.68 & 93.65$\pm$0.29 & 75.83$\pm$1.94\\
      & WSC-GMM*  & 78.23$\pm$0.26 & 69.13$\pm$0.31 & 92.03$\pm$0.21 & 83.49$\pm$1.09 & 93.85$\pm$0.13 & 75.84$\pm$2.22\\
      \cmidrule{2-8}
      & DIS-Top50 & 73.39$\pm$0.89 & 68.47$\pm$0.23 & 90.89$\pm$0.42 & 82.03$\pm$1.04 & 92.92$\pm$0.34 & 74.64$\pm$0.16\\
      & DIS-GMM   & 79.43$\pm$1.59 & 68.69$\pm$0.47 & 92.34$\pm$0.49 & 83.80$\pm$0.47 & 93.59$\pm$0.34 & 75.71$\pm$0.53\\
      & DIS-GMM*  & 80.10$\pm$1.43 & 68.96$\pm$0.27 & 91.88$\pm$0.11 & 83.65$\pm$1.31 & 93.85$\pm$0.38 & 75.90$\pm$0.23\\
      \midrule
      \midrule
      \multirow{7}{*}{64}
      & WSC-Top50 & 74.48$\pm$0.63 & 68.43$\pm$0.16 & 89.64$\pm$0.26 & 81.98$\pm$0.42 & 93.28$\pm$0.08 & 74.62$\pm$0.13\\
      & WSC-GMM   & 81.20$\pm$0.52 & 69.02$\pm$0.15 & 92.97$\pm$0.21 & 85.31$\pm$0.68 & 93.75$\pm$0.26 & 76.30$\pm$0.15\\
      & WSC-GMM*  & 82.03$\pm$0.65 & 69.82$\pm$0.12 & 92.76$\pm$0.39 & 83.96$\pm$0.26 & 94.32$\pm$0.36 & 76.78$\pm$0.05\\
      \cmidrule{2-8}
      & DIS-Top50 & 77.45$\pm$0.78 & 69.06$\pm$0.09 & 92.50$\pm$0.44 & 83.54$\pm$0.44 & 93.23$\pm$0.03 & 75.71$\pm$0.16\\
      & DIS-GMM   & 80.94$\pm$1.46 & 69.14$\pm$0.44 & 92.97$\pm$0.16 & 85.21$\pm$0.94 & 93.75$\pm$0.36 & 76.34$\pm$0.39\\
      & DIS-GMM*  & 81.30$\pm$0.49 & 70.13$\pm$0.50 & 92.76$\pm$0.21 & 83.70$\pm$0.13 & 94.17$\pm$0.16 & 76.87$\pm$0.18\\
      \midrule
      \midrule
      \multirow{7}{*}{128}
      & WSC-Top50 & 73.80$\pm$0.44 & 68.58$\pm$0.04 & 90.00$\pm$0.18 & 82.55$\pm$0.34 & 93.33$\pm$0.00 & 74.75$\pm$0.07 \\
      & WSC-GMM   & 82.50$\pm$0.49 & 69.58$\pm$0.13 & 93.13$\pm$0.15 & 83.91$\pm$0.99 & 93.59$\pm$0.08 & 76.64$\pm$0.11 \\
      & WSC-GMM*  & 84.17$\pm$0.54 & 70.11$\pm$0.07 & 93.33$\pm$0.36 & 84.38$\pm$0.59 & 94.17$\pm$0.44 & 77.24$\pm$0.14 \\
      \cmidrule{2-8}
      & DIS-Top50 & 79.27$\pm$0.65 & 69.52$\pm$0.19 & 93.18$\pm$0.13 & 84.43$\pm$0.23 & 93.28$\pm$0.18 & 76.32$\pm$0.05 \\
      & DIS-GMM   & 82.55$\pm$0.31 & 69.82$\pm$0.09 & 93.13$\pm$0.18 & 85.52$\pm$0.60 & 93.70$\pm$0.10 & 76.95$\pm$0.10 \\
      & DIS-GMM*  & 84.95$\pm$0.86 & 70.63$\pm$0.17 & 93.23$\pm$0.05 & 86.46$\pm$0.55 & 94.27$\pm$0.17 & 77.84$\pm$0.28 \\
      \midrule
      \midrule
      \multirow{7}{*}{256}
      & WSC-Top50 & 73.96$\pm$0.34 & 68.49$\pm$0.11 & 89.79$\pm$0.00 & 83.65$\pm$0.78 & 93.33$\pm$0.00 & 74.79$\pm$0.04\\
      & WSC-GMM   & 83.65$\pm$0.34 & 69.59$\pm$0.25 & 93.33$\pm$0.00 & 86.82$\pm$0.57 & 93.54$\pm$0.21 & 77.04$\pm$0.06\\
      & WSC-GMM*  & 86.25$\pm$0.23 & 70.24$\pm$0.02 & 93.23$\pm$0.16 & 84.32$\pm$0.36 & 94.74$\pm$0.78 & 77.56$\pm$0.00\\
      \cmidrule{2-8}
      & DIS-Top50 & 81.30$\pm$0.73 & 69.80$\pm$0.19 & 93.33$\pm$0.00 & 84.79$\pm$0.39 & 93.49$\pm$0.03 & 76.75$\pm$0.22\\
      & DIS-GMM   & 83.91$\pm$0.26 & 70.21$\pm$0.20 & 93.33$\pm$0.00 & 86.77$\pm$0.16 & 94.43$\pm$0.55 & 77.53$\pm$0.11\\
      & DIS-GMM*  & 85.42$\pm$0.70 & 71.15$\pm$0.08 & 93.33$\pm$0.00 & 84.90$\pm$0.40 & 95.47$\pm$0.50 & 78.18$\pm$0.05\\
      \bottomrule
    \end{tabular}
    }
    \end{center}
    \vskip -0.1in
\end{table}

\newpage
\subsection{Detailed Results on the Impact of GMM Filter and Reject Filter on Voting Information Quality}
\label{sec:appendix_complete_filter_analysis}
    For the analysis of the impact of key components \textit{GMM Filter} and \textit{Reject Filter} of \textit{DistriVoting} on the confidence quality of trajectories involved in voting in the main text \autoref{tab:analysis_distribution_voting} of \autoref{sec:analysis_distribution_voting}, we provide detailed results for each benchmark here. In addition to DeepSeek-R1-8B, we also provide results for Qwen3-8B, Qwen3-14B, Qwen3-14B-NonThinking, and Qwen3-32B, as shown in \autoref{tab:appendix_complete_filter_analysis}.
    \begin{table}[ht]
    \vskip 0.2in
    \caption{Complete results of the proportion of correct samples across different stages of the voting process. Different models generate Budget=128 responses for each query during voting, with experiments repeated 64 times and results averaged.}
    \label{tab:appendix_complete_filter_analysis}
    \begin{center}
    \small
    \resizebox{0.95\textwidth}{!}{
    \begin{tabular}{llccccccc}
      \toprule
      Model & Metric & Stage & HMMT2025 & GPQA-D & AIME2024 & AIME2025 & BRUMO2025 & Avg. \\
      \midrule
      \multirow{7}{*}{DeepSeek-R1-8B}
      & \multirow{3}{*}{Acc}
        & \uppercase\expandafter{\romannumeral1} & 60.36 & 64.54 & 86.83 & 79.92 & 81.22 & 69.27 \\
      & & \uppercase\expandafter{\romannumeral2} & 76.71 & 71.69 & 93.90 & 87.79 & 91.06 & 77.60 \\
      & & \uppercase\expandafter{\romannumeral3} & 77.43 & 75.86 & 94.08 & 88.75 & 91.34 & 80.41 \\
      \cmidrule{2-9}
      & \multirow{3}{*}{WAcc}
        & \uppercase\expandafter{\romannumeral1} & 61.59 & 64.73 & 87.40 & 80.70 & 82.31 & 69.74 \\
      & & \uppercase\expandafter{\romannumeral2} & 76.92 & 71.72 & 94.00 & 87.86 & 91.24 & 77.68 \\
      & & \uppercase\expandafter{\romannumeral3} & 77.63 & 75.89 & 94.17 & 88.83 & 91.52 & 80.48 \\
      \midrule
      \midrule
      \multirow{7}{*}{Qwen3-8B}
      & \multirow{3}{*}{Acc}
        & \uppercase\expandafter{\romannumeral1} & 58.75 & 65.16 & 82.66 & 73.89 & 73.85 & 67.85 \\
      & & \uppercase\expandafter{\romannumeral2} & 73.25 & 71.78 & 88.69 & 80.94 & 81.19 & 75.26 \\
      & & \uppercase\expandafter{\romannumeral3} & 76.10 & 78.89 & 89.00 & 82.69 & 82.16 & 80.25 \\
      \cmidrule{2-9}
      & \multirow{3}{*}{WAcc}
        & \uppercase\expandafter{\romannumeral1} & 59.66 & 65.32 & 83.08 & 74.43 & 74.35 & 68.17 \\
      & & \uppercase\expandafter{\romannumeral2} & 73.34 & 71.80 & 88.74 & 80.93 & 81.26 & 75.30 \\
      & & \uppercase\expandafter{\romannumeral3} & 76.18 & 78.91 & 89.05 & 82.69 & 82.22 & 80.27 \\
      \midrule
      \midrule
      \multirow{7}{*}{Qwen3-14B}
      & \multirow{3}{*}{Acc}
        & \uppercase\expandafter{\romannumeral1} & 60.98 & 68.56 & 85.70 & 76.73 & 79.70 & 71.28 \\
      & & \uppercase\expandafter{\romannumeral2} & 76.48 & 75.40 & 90.38 & 86.40 & 85.46 & 78.90 \\
      & & \uppercase\expandafter{\romannumeral3} & 78.32 & 82.20 & 90.76 & 87.40 & 86.73 & 83.56 \\
      \cmidrule{2-9}
      & \multirow{3}{*}{WAcc}
        & \uppercase\expandafter{\romannumeral1} & 61.70 & 68.70 & 86.02 & 77.32 & 80.16 & 71.57 \\
      & & \uppercase\expandafter{\romannumeral2} & 76.49 & 75.43 & 90.44 & 86.42 & 85.55 & 78.94 \\
      & & \uppercase\expandafter{\romannumeral3} & 78.33 & 82.22 & 90.82 & 87.42 & 86.82 & 83.59 \\
      \midrule
      \midrule
      \multirow{7}{*}{Qwen3-14B*}
      & \multirow{3}{*}{Acc}
        & \uppercase\expandafter{\romannumeral1} & 24.42 & 56.21 & 37.93 & 37.95 & 43.06 & 48.52 \\
      & & \uppercase\expandafter{\romannumeral2} & 28.67 & 62.13 & 42.14 & 42.42 & 45.52 & 53.66 \\
      & & \uppercase\expandafter{\romannumeral3} & 29.81 & 69.34 & 43.90 & 43.29 & 47.13 & 58.66 \\
      \cmidrule{2-9}
      & \multirow{3}{*}{WAcc}
        & \uppercase\expandafter{\romannumeral1} & 25.20 & 56.56 & 38.97 & 38.55 & 43.55 & 49.01 \\
      & & \uppercase\expandafter{\romannumeral2} & 28.93 & 62.22 & 42.53 & 42.60 & 45.69 & 53.81 \\
      & & \uppercase\expandafter{\romannumeral3} & 30.08 & 69.41 & 44.28 & 43.45 & 47.28 & 58.79 \\
      \midrule
      \midrule
      \multirow{7}{*}{Qwen3-32B}
      & \multirow{3}{*}{Acc}
        & \uppercase\expandafter{\romannumeral1} & 59.29 & 72.72 & 87.65 & 73.33 & 81.82 & 73.78 \\
      & & \uppercase\expandafter{\romannumeral2} & 61.27 & 79.30 & 89.27 & 73.95 & 84.00 & 78.48 \\
      & & \uppercase\expandafter{\romannumeral3} & 63.44 & 85.51 & 89.45 & 76.60 & 85.07 & 82.91 \\
      \cmidrule{2-9}
      & \multirow{3}{*}{WAcc}
        & \uppercase\expandafter{\romannumeral1} & 59.94 & 72.91 & 87.98 & 73.57 & 82.48 & 74.07 \\
      & & \uppercase\expandafter{\romannumeral2} & 61.51 & 79.28 & 89.36 & 74.03 & 84.20 & 78.52 \\
      & & \uppercase\expandafter{\romannumeral3} & 63.67 & 85.48 & 89.53 & 76.68 & 85.25 & 82.96 \\   
      \bottomrule
    \end{tabular}
    }
    \end{center}
    \vskip 0.3in
\end{table}

\newpage
\subsection{Detailed Results on the Impact of \textit{SelfStepConf} on Confidence Distribution}
\label{sec:appendix_complete_ssc_distribution}
    For the impact of \textit{SelfStepConf} on confidence distribution under DeepSeek-R1-8B shown in \autoref{fig:analysis_ssc_separation_comparison} of \autoref{sec:analysis_ssc_separation_comparison} in the main text, we provide additional quantitative experiments under more models (Qwen3-8B, Qwen3-14B, Qwen3-14B-NonThinking, and Qwen3-32B) here. As shown in \autoref{tab:appendix_complete_ssc_distribution}, we primarily use two evaluation metrics: AUROC and WAcc. It can be observed that under different models, the confidence quality (AUROC) of SSC's generated results is higher compared to the BasicInference, and the average correct confidence (WAcc) is also higher.

\begin{table}[ht]
    \vskip 0.1in
    \caption{Complete results of the confidence separation of the content generated by BasicInference and \textit{SelfStepConf} on different models.}
    \label{tab:appendix_complete_ssc_distribution}
    \begin{center}
    \small
    \begin{tabular}{llccccccc}
      \toprule
      \multirow{2}{*}{Model} & \multirow{2}{*}{Benchmark} & \multicolumn{2}{c}{BasicInference} & \multicolumn{2}{c}{SelfStepConf} & \multicolumn{2}{c}{$\Delta$ (\%)} \\
      \cmidrule(lr){3-4} \cmidrule(lr){5-6} \cmidrule(lr){7-8}
      & & AUROC & WAcc & AUROC & WAcc & AUROC & WAcc \\
      \midrule
      \multirow{6}{*}{DeepSeek-R1-8B}
      & HMMT2025 & 0.9002 & 67.28 & 0.8957 & 67.72 & -0.50 & +0.65 \\
      & GPQA-D & 0.7167 & 60.59 & 0.7699 & 61.29 & +7.42 & +1.16 \\
      & AIME2024 & 0.8833 & 86.56 & 0.8881 & 86.96 & +0.54 & +0.46 \\
      & AIME2025 & 0.8294 & 82.62 & 0.8478 & 82.87 & +2.22 & +0.30 \\
      & BRUMo2025 & 0.8198 & 84.88 & 0.8304 & 85.40 & +1.29 & +0.61 \\
      \cmidrule{2-8}
      & Avg. & 0.7701 & 68.04 & 0.8060 & 68.63 & +4.66 & +0.87 \\
      \midrule
      \midrule
      \multirow{6}{*}{Qwen3-8B}
      & HMMT2025 & 0.8424 & 48.39 & 0.9001 & 50.24 & +6.85 & +3.83 \\
      & GPQA-D & 0.6518 & 56.06 & 0.7505 & 57.60 & +15.14 & +2.75 \\
      & AIME2024 & 0.8012 & 80.06 & 0.9099 & 80.25 & +13.57 & +0.24 \\
      & AIME2025 & 0.7390 & 66.43 & 0.8942 & 67.09 & +21.00 & +0.99 \\
      & BRUMo2025 & 0.7600 & 75.99 & 0.7992 & 75.82 & +5.16 & -0.22 \\
      \cmidrule{2-8}
      & Avg. & 0.7023 & 60.46 & 0.7978 & 61.66 & +13.60 & +1.98 \\
      \midrule
      \midrule
      \multirow{6}{*}{Qwen3-14B}
      & HMMT2025 & 0.8071 & 53.46 & 0.8979 & 53.90 & +11.25 & +0.82 \\
      & GPQA-D & 0.6660 & 58.29 & 0.7447 & 60.12 & +11.82 & +3.14 \\
      & AIME2024 & 0.7745 & 83.93 & 0.9235 & 83.15 & +19.24 & -0.93 \\
      & AIME2025 & 0.6932 & 71.08 & 0.8849 & 70.48 & +27.65 & -0.84 \\
      & BRUMo2025 & 0.7566 & 79.38 & 0.7961 & 79.38 & +5.22 & +0.00 \\
      \cmidrule{2-8}
      & Avg. & 0.7007 & 63.45 & 0.7941 & 64.50 & +13.33 & +1.65 \\
      \midrule
      \midrule
      \multirow{6}{*}{Qwen3-14B*}
      & HMMT2025 & 0.7499 & 19.40 & 0.8471 & 20.40 & +12.96 & +5.15 \\
      & GPQA-D & 0.6229 & 25.96 & 0.6570 & 26.49 & +5.47 & +2.04 \\
      & AIME2024 & 0.7724 & 38.53 & 0.7945 & 43.42 & +2.87 & +12.69 \\
      & AIME2025 & 0.8535 & 32.53 & 0.9144 & 35.67 & +7.14 & +9.65 \\
      & BRUMo2025 & 0.6821 & 41.79 & 0.7131 & 45.99 & +4.54 & +10.05 \\
      \cmidrule{2-8}
      & Avg. & 0.6763 & 28.64 & 0.7175 & 30.22 & +6.09 & +5.52 \\
      \midrule
      \midrule
      \multirow{6}{*}{Qwen3-32B}
      & HMMT2025 & 0.7976 & 56.84 & 0.8868 & 57.76 & +11.18 & +1.62 \\
      & GPQA-D & 0.6673 & 63.27 & 0.7763 & 66.04 & +16.33 & +4.38 \\
      & AIME2024 & 0.9538 & 83.93 & 0.8713 & 82.83 & -8.65 & -1.31 \\
      & AIME2025 & 0.6743 & 71.67 & 0.8371 & 71.88 & +24.14 & +0.29 \\
      & BRUMo2025 & 0.7273 & 83.29 & 0.7692 & 82.82 & +5.76 & -0.56 \\
      \cmidrule{2-8}
      & Avg. & 0.7129 & 67.29 & 0.8008 & 68.98 & +12.33 & +2.51 \\
      \bottomrule
    \end{tabular}
    \end{center}
\end{table}

\newpage
\subsection{Top-Threshold Sensitivity Analysis on More Models and Benchmarks}
\label{sec:appendix_complete_top50_analysis}
    For the results shown in \autoref{fig:analysis_top50_selection} of \autoref{sec:analysis_top50_selection} in the main text, which display the optimal benchmark-level top-threshold applicable under different benchmarks using Qwen3-8B, we present here from both benchmark and model perspectives. As shown in \autoref{fig:appendix_complete_top50_analysis}, we separately demonstrate that the applicable top-threshold varies for different models on the same benchmark, and also varies for different benchmarks on the same model. This further supports the rationale for using top50 to represent the fixed threshold filter method.
    
    \begin{figure}[ht]
    \vskip 0.1in
        \begin{center}
            \begin{subfigure}[b]{0.48\textwidth}
                \centering
                \includegraphics[width=\textwidth]{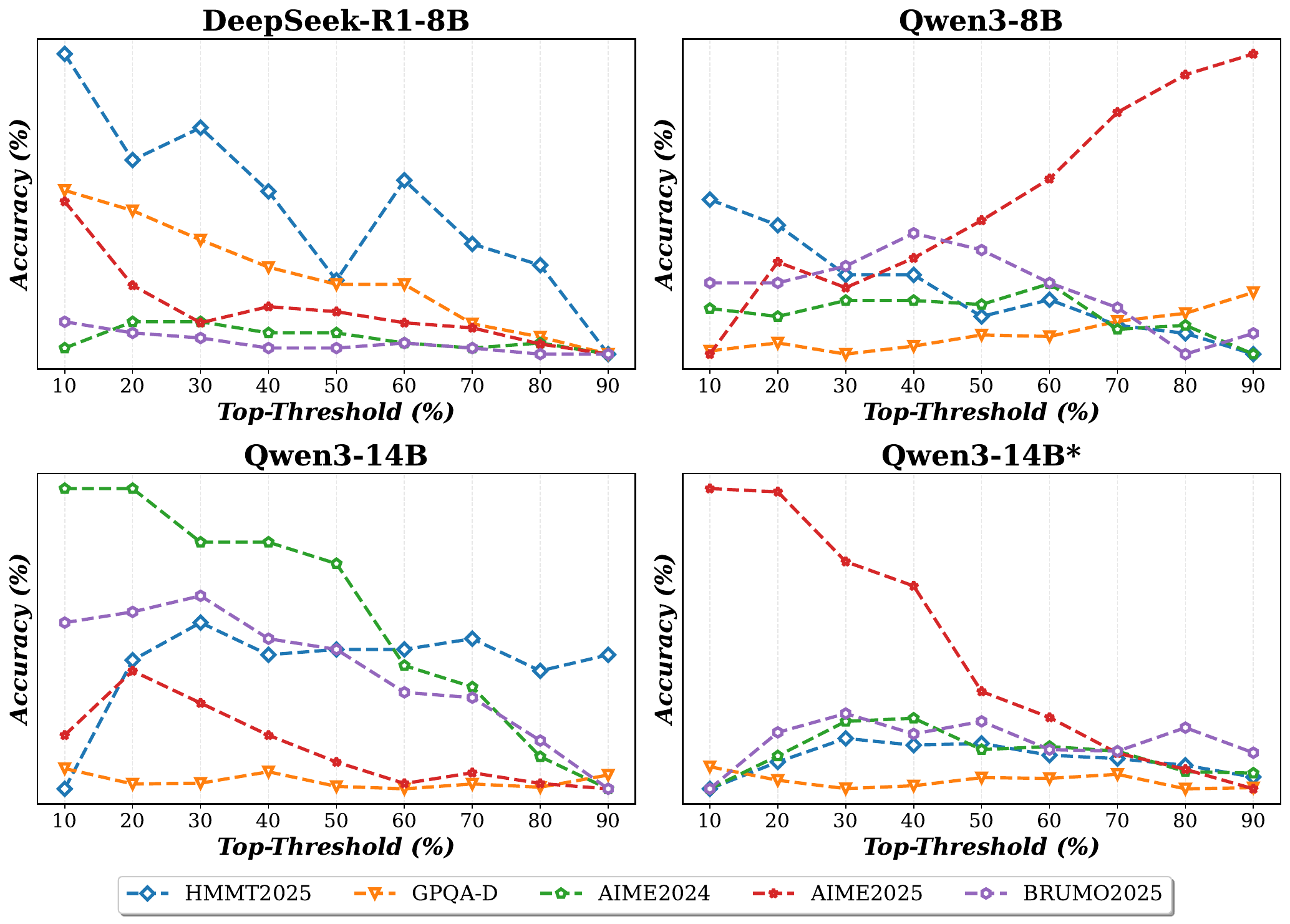}
                \caption{Top-Threshold sensitivity on different models}
            \end{subfigure}
            \hfill
            \begin{subfigure}[b]{0.48\textwidth}
                \centering
                \includegraphics[width=\textwidth]{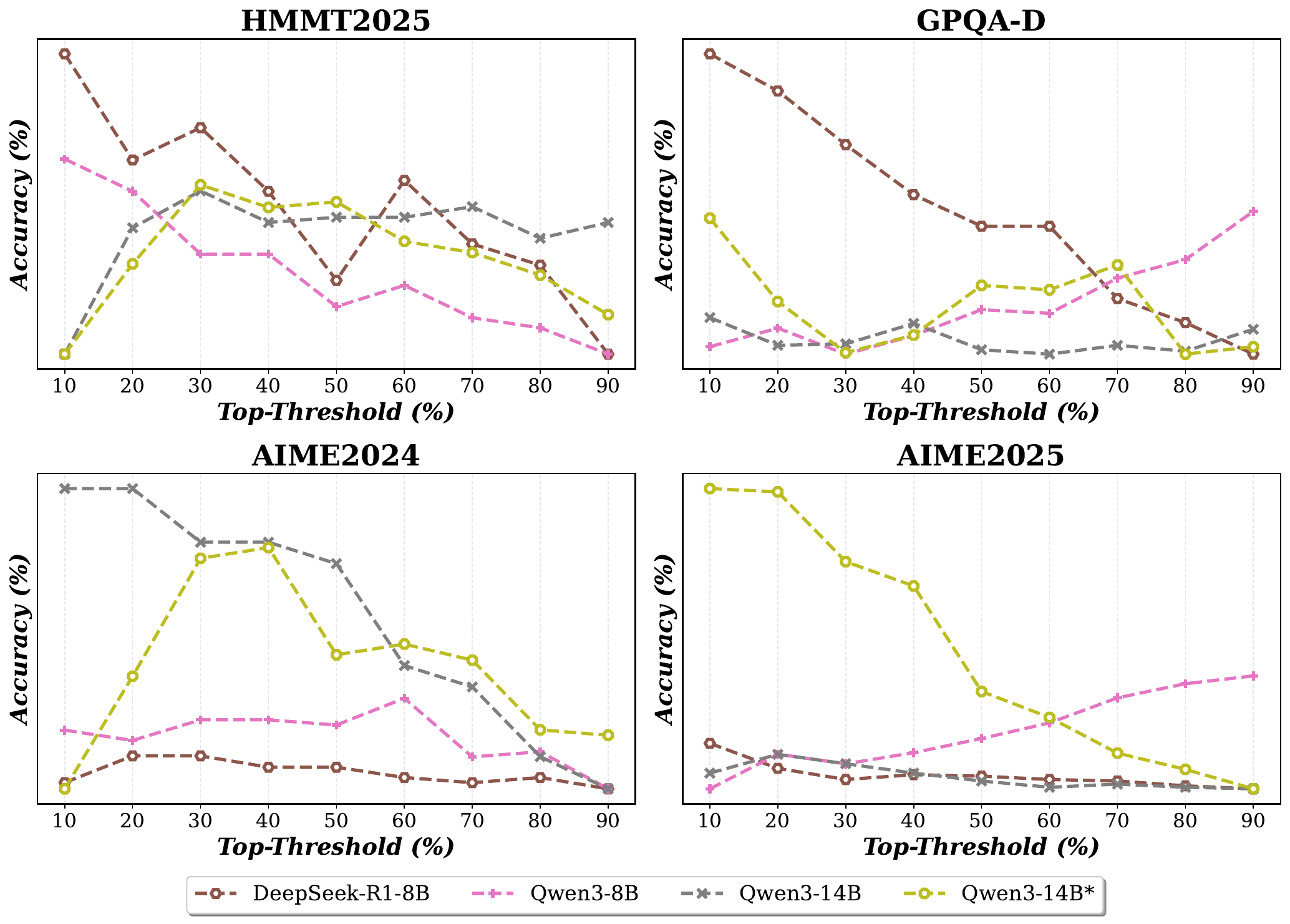}
                \caption{Top-Threshold sensitivity on different benchmark}
            \end{subfigure}
        \caption{
          Optimal fixed top-threshold traversal results for filter voting across different benchmarks. Using different models to generate 256 responses for each query, repeated 64 times. \textbf{Accuracy denotes WSC-TopK.} Complete results are provided in \autoref{tab:appendix_complete_top50_analysis}.
          }
        \label{fig:appendix_complete_top50_analysis}
        \end{center}
    \vskip 0.1in
    \end{figure}

    \begin{table}[ht]
  \vskip 0.1in
  \caption{Complete WSC-TopK results of optimal fixed top-threshold traversal results for filter voting across different benchmarks. Using different models to generate 256 responses for each query, repeated 64 times.}
  \label{tab:appendix_complete_top50_analysis}
  \small
  \begin{center}
  \resizebox{0.95\textwidth}{!}{
    \begin{tabular}{llcccccccccc}
      \toprule
      \multirow{2}{*}{Model} & \multirow{2}{*}{Benchmark} & \multicolumn{9}{c}{Top-Threshold} \\
      \cmidrule(lr){3-11}
      & & 10 & 20 & 30 & 40 & 50 & 60 & 70 & 80 & 90 & \\
      \midrule
        \multirow{5}{*}{DeepSeek-R1-8B}
        & HMMT2025 & 81.88 & 80.83 & 81.15 & 80.52 & 79.64 & 80.63 & 80.00 & 79.79 & 78.91 \\
        & GPQA-D & 70.24 & 70.04 & 69.75 & 69.48 & 69.31 & 69.31 & 68.92 & 68.79 & 68.62 \\
        & AIME2024 & 92.92 & 93.18 & 93.18 & 93.07 & 93.07 & 92.97 & 92.92 & 92.97 & 92.86 \\
        & AIME2025 & 85.47 & 84.64 & 84.27 & 84.43 & 84.38 & 84.27 & 84.22 & 84.06 & 83.96 \\
        & BRUMO2025 & 93.65 & 93.54 & 93.49 & 93.39 & 93.39 & 93.44 & 93.39 & 93.33 & 93.33 \\
        \midrule
        \multirow{5}{*}{Qwen3-8B}
        & HMMT2025 & 62.66 & 62.34 & 61.72 & 61.72 & 61.20 & 61.41 & 61.09 & 60.99 & 60.73 \\
        & GPQA-D & 64.13 & 64.23 & 64.09 & 64.19 & 64.33 & 64.31 & 64.50 & 64.60 & 64.86 \\
        & AIME2024 & 86.98 & 86.88 & 87.08 & 87.08 & 87.03 & 87.29 & 86.72 & 86.77 & 86.41 \\
        & AIME2025 & 72.03 & 73.18 & 72.86 & 73.23 & 73.70 & 74.22 & 75.05 & 75.52 & 75.78 \\
        & BRUMO2025 & 82.45 & 82.45 & 82.66 & 83.07 & 82.86 & 82.45 & 82.14 & 81.56 & 81.82 \\
        \midrule
        \multirow{5}{*}{Qwen3-14B}
        & HMMT2025 & 62.71 & 63.96 & 64.32 & 64.01 & 64.06 & 64.06 & 64.17 & 63.85 & 64.01 \\
        & GPQA-D & 67.10 & 66.95 & 66.96 & 67.07 & 66.93 & 66.90 & 66.95 & 66.92 & 67.04 \\
        & AIME2024 & 89.06 & 89.06 & 88.54 & 88.54 & 88.33 & 87.34 & 87.14 & 86.46 & 86.15 \\
        & AIME2025 & 77.55 & 78.18 & 77.86 & 77.55 & 77.29 & 77.08 & 77.19 & 77.08 & 77.03 \\
        & BRUMO2025 & 86.61 & 86.72 & 86.88 & 86.46 & 86.35 & 85.94 & 85.89 & 85.47 & 85.00 \\
        \midrule
        \multirow{5}{*}{Qwen3-14B*}
        & HMMT2025 & 24.22 & 25.11 & 25.89 & 25.67 & 25.73 & 25.33 & 25.22 & 25.00 & 24.61 \\
        & GPQA-D & 56.96 & 56.51 & 56.23 & 56.33 & 56.60 & 56.57 & 56.71 & 56.23 & 56.27 \\
        & AIME2024 & 48.39 & 49.48 & 50.63 & 50.73 & 49.69 & 49.79 & 49.64 & 48.96 & 48.91 \\
        & AIME2025 & 45.91 & 45.80 & 43.48 & 42.67 & 39.17 & 38.31 & 37.12 & 36.58 & 35.94 \\
        & BRUMO2025 & 48.91 & 50.78 & 51.41 & 50.73 & 51.15 & 50.21 & 50.16 & 50.94 & 50.10 \\
        \bottomrule
    \end{tabular}
  }
  \end{center}
  \vskip -0.5in
\end{table}

\newpage
\subsection{Detailed Clustering Results of Answer Categories via GMM}
\label{sec:appendix_complete_gmm_components_analysis}
    In \autoref{sec:analysis_gmm_components} (\autoref{fig:analysis_gmm_components}) of the main text, we presented aggregated visualization results of the GMM distributions, where the number of answer types was used to determine the optimal number of GMM components. Here, we provide separate visualizations for the frequency distributions of selected top-frequent answers. As shown in \autoref{fig:appendix_complete_gmm_components_analysis}, each answer's distribution approximately follows a Gaussian pattern, which aligns consistently with our theoretical analysis in main text.

    \begin{figure}[H]
        \begin{center}
        \centerline{\includegraphics[width=0.98\columnwidth]{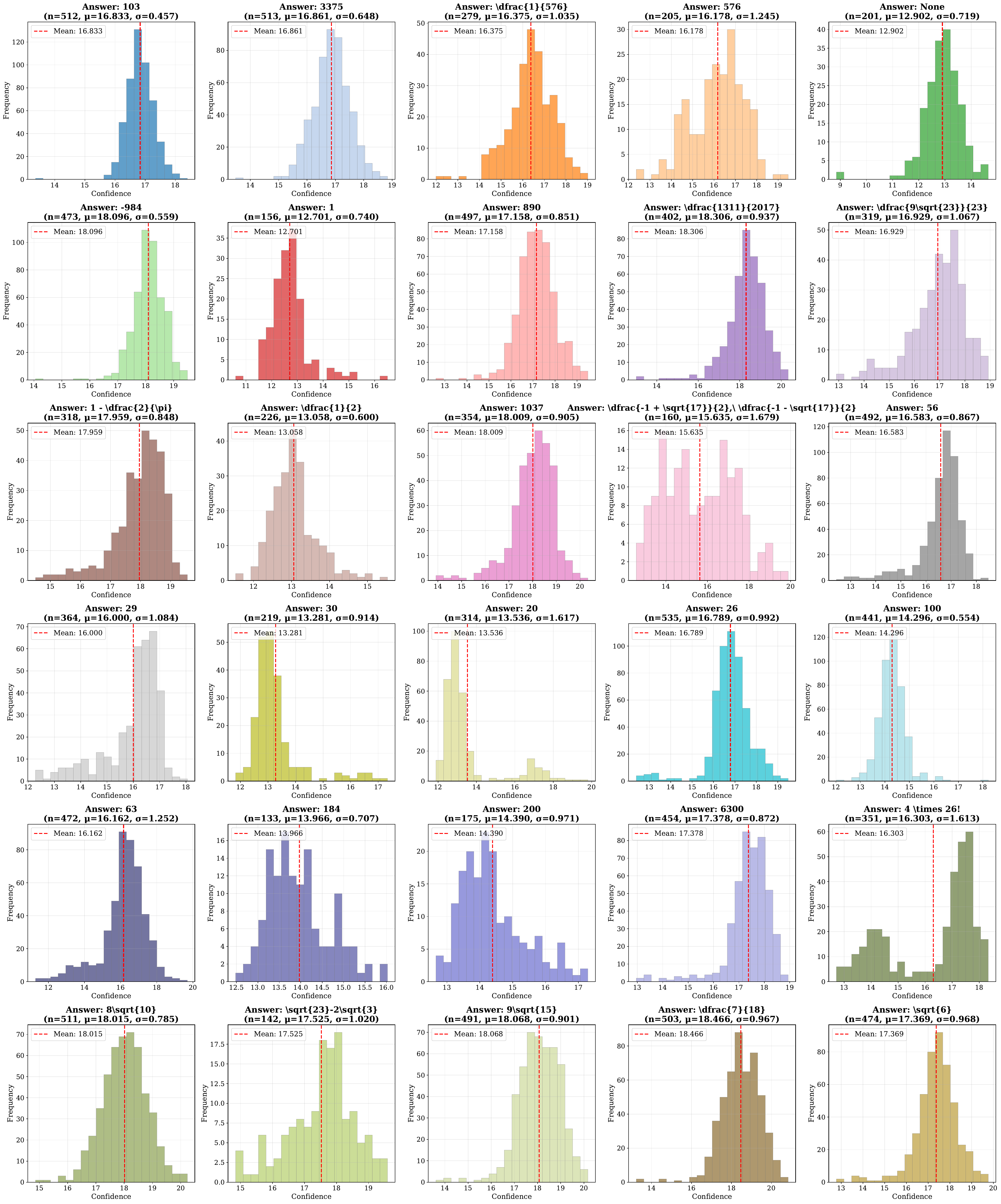}}
        \caption{
            Distribution visualization of answers as Gaussian components in GMM using DeepSeek-R1-8B with 512 repeated sampling for each query on HMMT2025. Individual visualization of the top 30 most frequent answers.
        }
        \label{fig:appendix_complete_gmm_components_analysis}
        \end{center}
    \end{figure}

\subsection{Additional Results on the Impact of SelfStepConf at the Trajectory-Level}
\label{sec:appendix_complete_trajlevel_conf_analysis}
    For the visualization results in \autoref{fig:analysis_ssc_pass@k_confidence} of \autoref{sec:analysis_ssc_pass@k} in the main text, which show the impact of \textit{SelfStepConf} on trajectory-level confidence compared to the BasicInference using DeepSeek-R1-8B, we additionally provide results for Qwen3-8B, Qwen3-14B, Qwen3-14B-NonThinking, and Qwen3-32B here. As shown in \autoref{fig:appendix_complete_trajlevel_conf_analysis}, it can be intuitively observed that our analysis conclusions remain consistent across different models.
    
    \begin{figure}[ht]
    \vskip 0.2in
        \centering
        \begin{subfigure}[b]{0.48\textwidth}
            \centering
            \includegraphics[width=\textwidth]{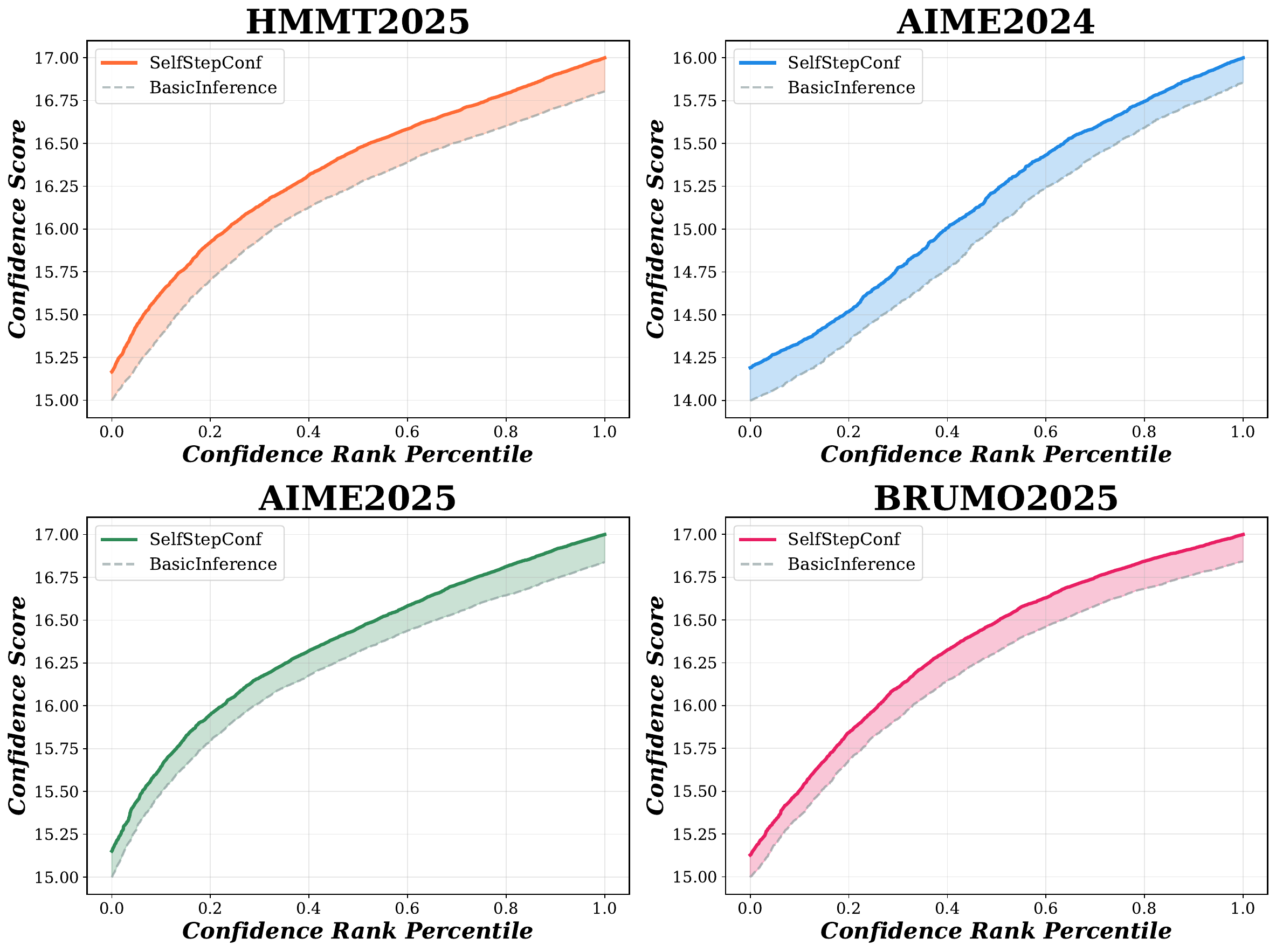}
            \caption{DeepSeek-R1-8B}
        \end{subfigure}
        \hfill
        \begin{subfigure}[b]{0.48\textwidth}
            \centering
            \includegraphics[width=\textwidth]{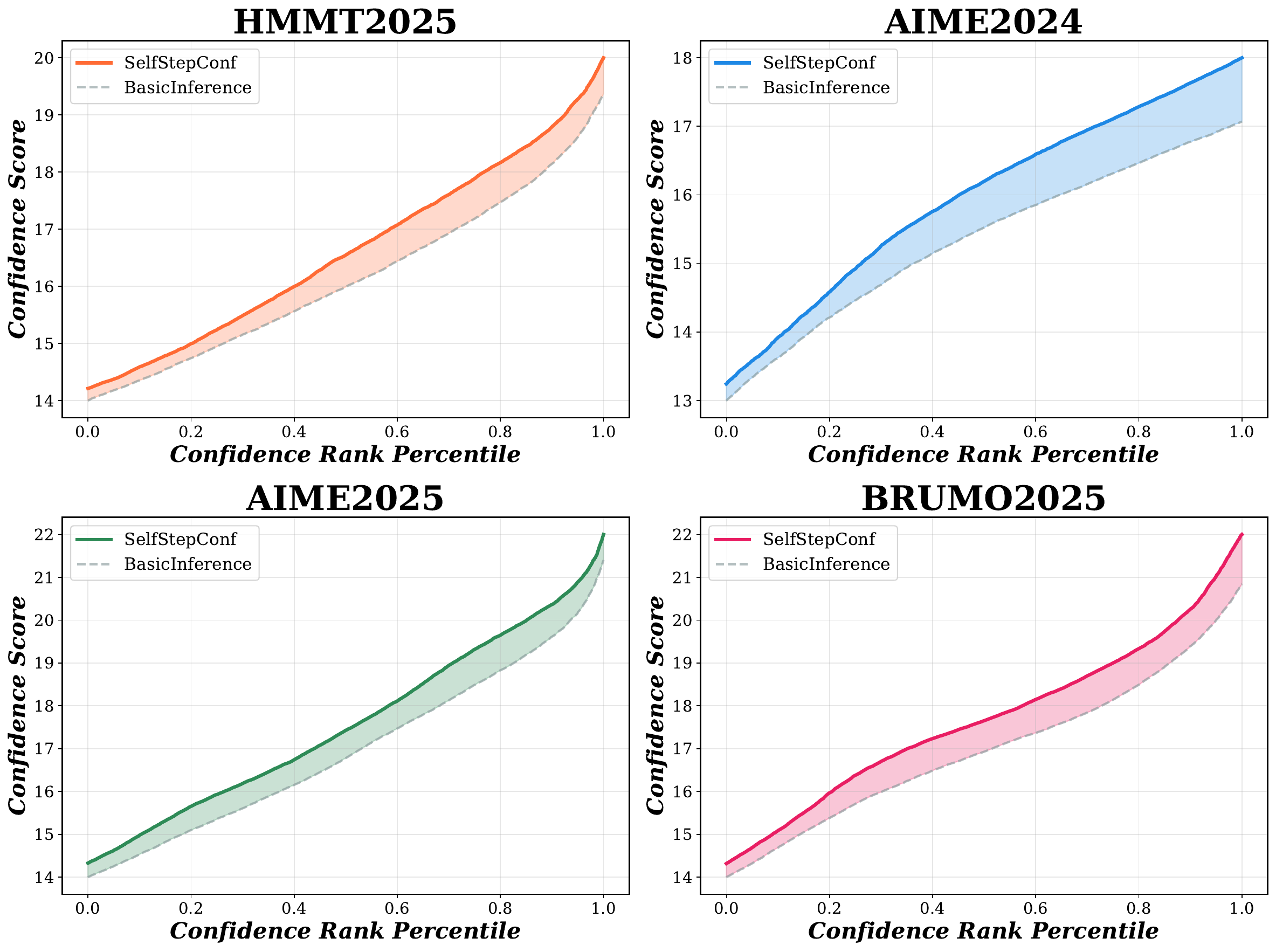}
            \caption{Qwen3-8B}
        \end{subfigure}
        
        \vspace{0.5cm}
        
        \begin{subfigure}[b]{0.48\textwidth}
            \centering
            \includegraphics[width=\textwidth]{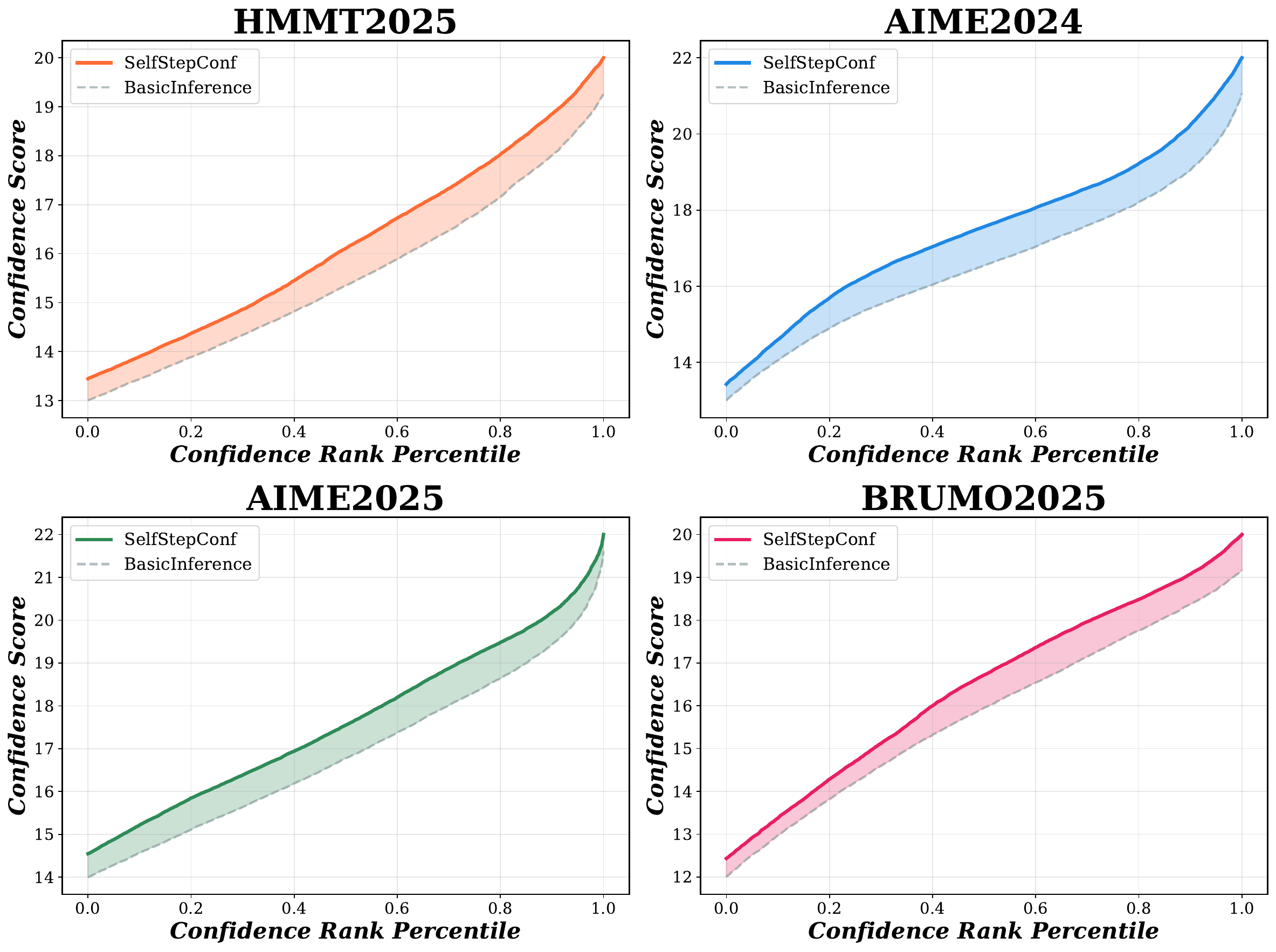}
            \caption{Qwen3-14B}
        \end{subfigure}
        \hfill
        \begin{subfigure}[b]{0.48\textwidth}
            \centering
            \includegraphics[width=\textwidth]{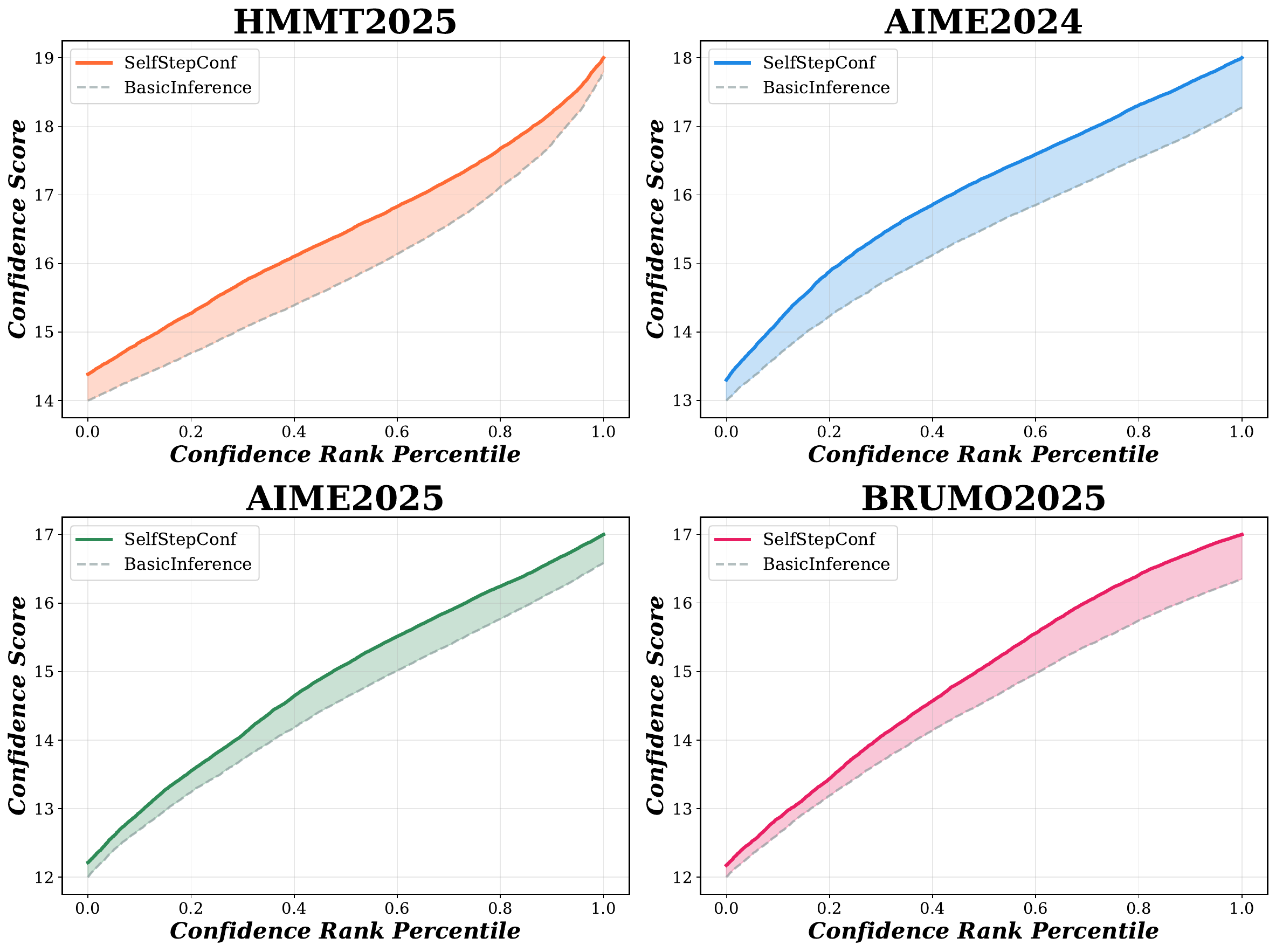}
            \caption{Qwen3-32B}
        \end{subfigure}
        
        \caption{Complete results of trajectory-level confidence comparison between SSC and BasicInference across different benchmarks. Using different models to generate 512 responses for each query, and all responses across the entire benchmark are sorted and visualized by confidence scores.}
        \label{fig:appendix_complete_trajlevel_conf_analysis}
    \end{figure}

\newpage
\subsection{Pass@K Analysis of SelfStepConf on More Benchmarks and Models}
\label{sec:appendix_complete_pass@k}
    For the analysis results shown in \autoref{fig:analysis_ssc_pass@k_pass@k} of \autoref{sec:analysis_ssc_pass@k} in the main text, which compare the pass@K performance of \textit{SelfStepConf} to the BasicInference on GPQA-D under different models, we additionally provide results for four benchmarks: HMMT2025, AIME2024, AIME2025, and BRUMO2025. As shown in \autoref{fig:appendix_complete_pass@k}, it can be intuitively observed that our analysis remains consistent across different benchmarks.

    \begin{figure}[ht]
        \centering
        \begin{subfigure}[b]{0.48\textwidth}
            \centering
            \includegraphics[width=\textwidth]{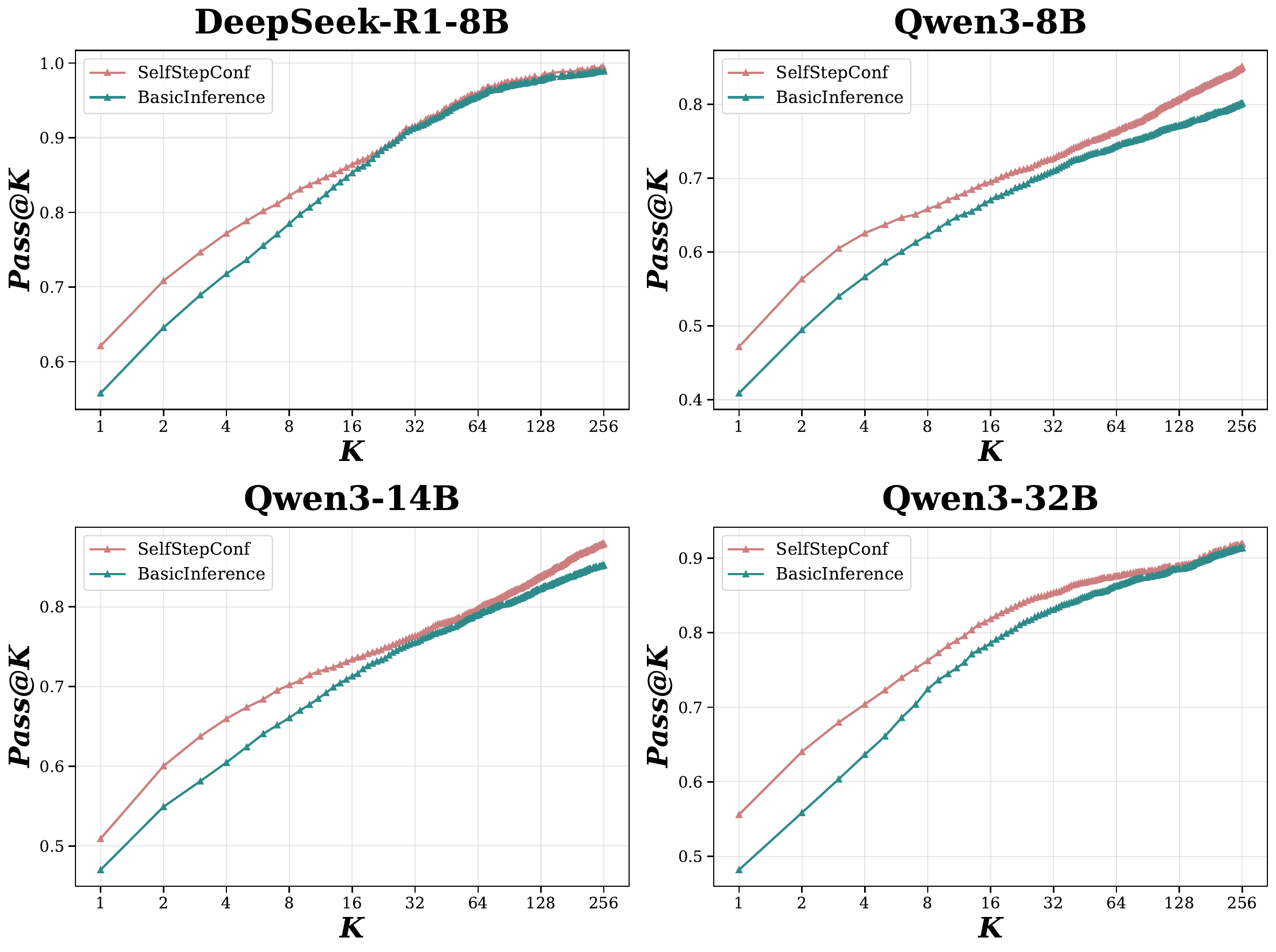}
            \caption{HMMT2025}
        \end{subfigure}
        \hfill
        \begin{subfigure}[b]{0.48\textwidth}
            \centering
            \includegraphics[width=\textwidth]{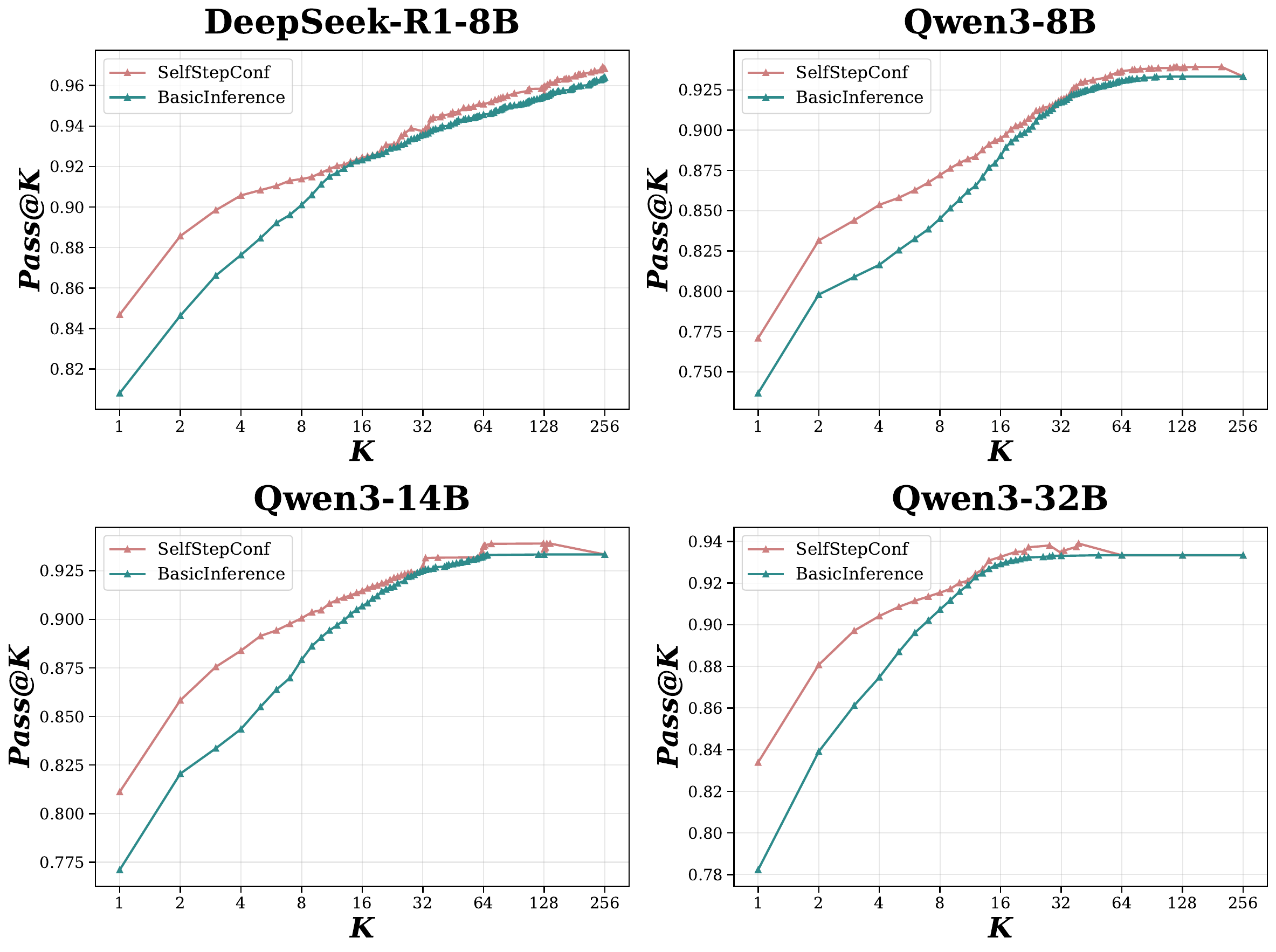}
            \caption{AIME2024}
        \end{subfigure}
        
        \vspace{0.5cm}
        
        \begin{subfigure}[b]{0.48\textwidth}
            \centering
            \includegraphics[width=\textwidth]{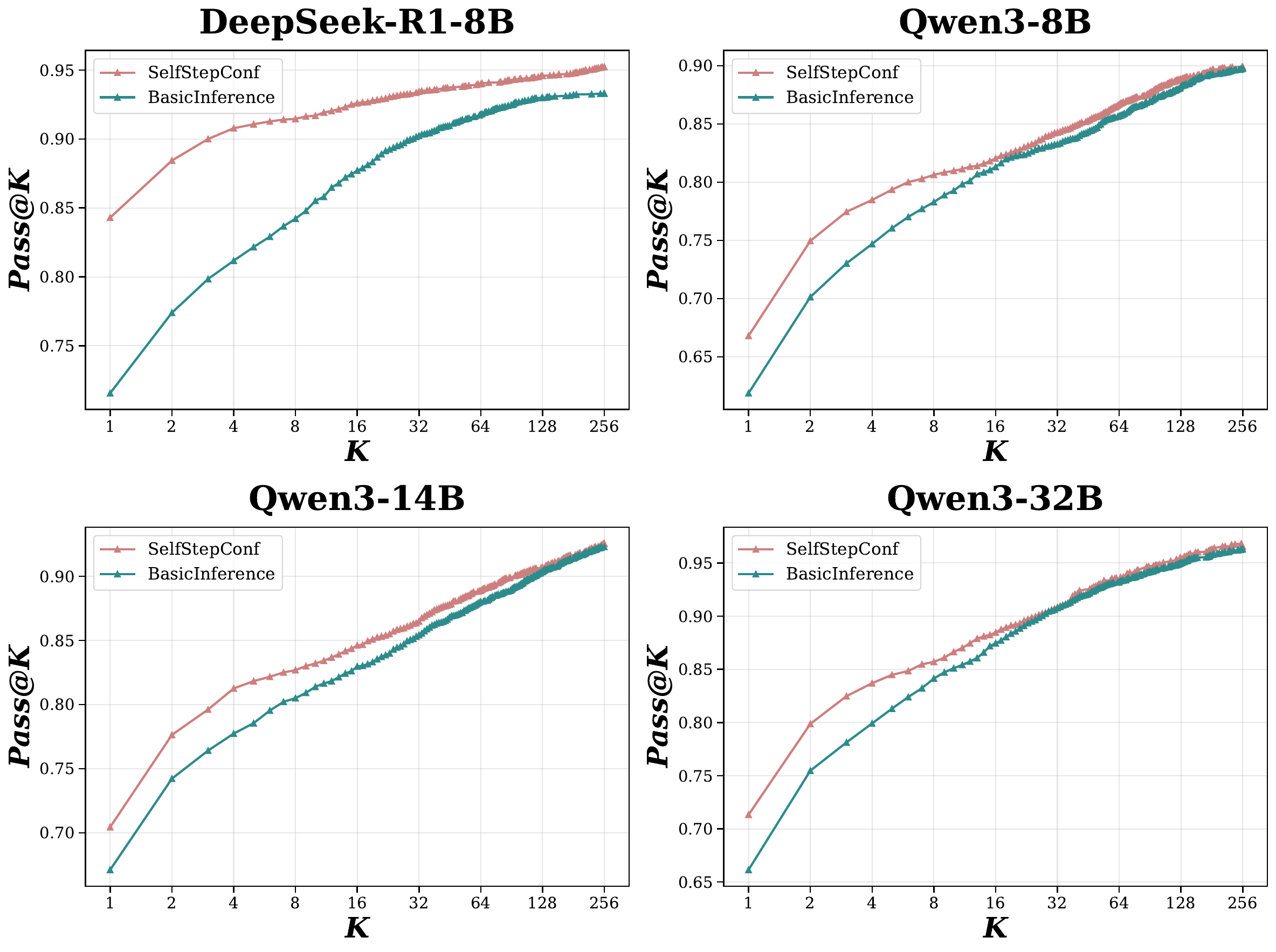}
            \caption{AIME2025}
        \end{subfigure}
        \hfill
        \begin{subfigure}[b]{0.48\textwidth}
            \centering
            \includegraphics[width=\textwidth]{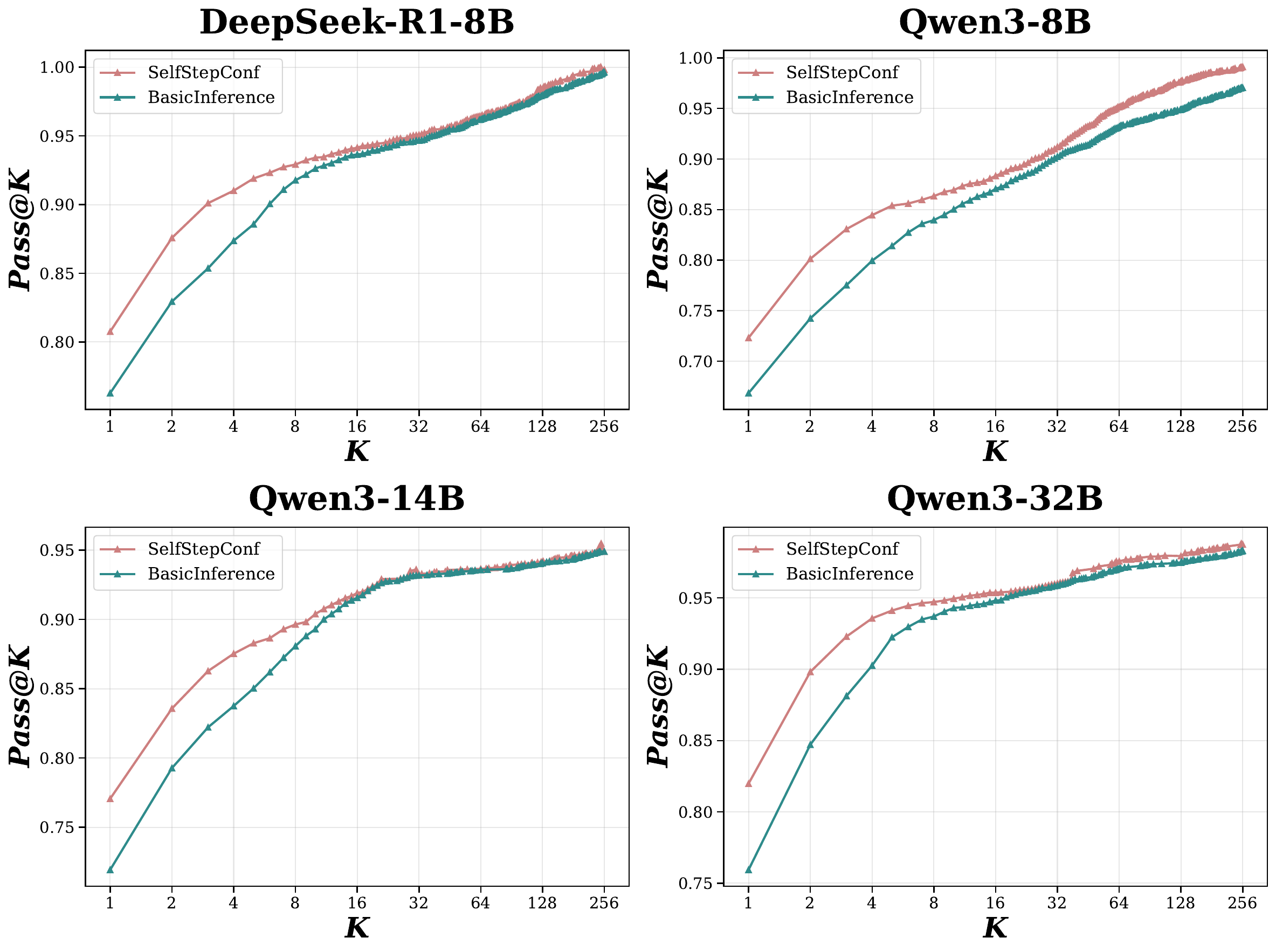}
            \caption{BRUMO2025}
        \end{subfigure}
        
        \caption{Complete results of Pass@K comparison between SSC and BasicInference under different sampling numbers. Different models generate 256 responses on all Benchmark, repeated 64 times.}
        \label{fig:appendix_complete_pass@k}
    \end{figure}

\newpage
\subsection{Detailed Pass@1 Results of SelfStepConf and BasicInference}
\label{sec:appendix_complete_pass@1}
    For the average pass@1 results of \textit{SelfStepConf} and BasicInference under different models shown in \autoref{fig:analysis_ssc_pass@k_pass@1} of \autoref{sec:analysis_ssc_pass@k} in the main text, we provide detailed data specific to each benchmark here, as shown in \autoref{tab:appendix_complete_pass@1}.

    \begin{table}[ht]
  \caption{Complete results of Pass@1 comparison between SSC and BasicInference across different models and benchmarks. Results averaged over 64 independent sampling per query.}
  \label{tab:appendix_complete_pass@1}
  \small
  \begin{center}
  \resizebox{0.95\textwidth}{!}{
    \begin{tabular}{llccccccc}
      \toprule
      Method & Model & HMMT2025 & GPQA-D & AIME2024 & AIME2025 & BRUMO2025 & Avg.\\
      \midrule
        \multirow{16}{*}{BaseModel}
        & Qwen2.5-Math-7B          & 0.00  & 12.89 & 1.56  & 0.47  & 2.03  & 8.41  \\
        & Qwen3-0.6B*              & 0.97  & 23.14 & 2.66  & 2.45  & 8.12  & 15.75 \\
        & Qwen3-0.6B               & 8.89  & 21.31 & 10.73 & 16.78 & 16.93 & 18.30 \\
        & Llama3.1-8B-Instruct     & 0.10  & 28.54 & 5.31  & 0.73  & 3.44  & 18.67 \\
        & DeepSeek-R1-7B           & 20.31 & 12.47 & 41.04 & 30.01 & 40.07 & 20.16 \\
        & Qwen3-1.7B*              & 5.68  & 28.23 & 12.92 & 9.32  & 17.40 & 21.85 \\
        & Qwen3-4B*                & 11.67 & 41.26 & 22.45 & 18.39 & 27.60 & 33.25 \\
        & Qwen3-1.7B               & 22.29 & 34.91 & 47.60 & 34.48 & 47.60 & 36.07 \\
        & Qwen3-8B*                & 10.36 & 46.32 & 28.23 & 20.16 & 28.02 & 37.03 \\
        & Qwen3-14B*               & 9.81  & 52.38 & 27.45 & 21.09 & 31.67 & 41.11 \\
        & Qwen3-32B*               & 12.18 & 51.55 & 30.05 & 24.95 & 35.73 & 41.81 \\
        & Qwen3-4B                 & 42.34 & 52.23 & 71.30 & 62.24 & 62.60 & 55.02 \\
        & Qwen3-8B                 & 41.46 & 54.70 & 73.02 & 62.55 & 67.19 & 57.10 \\
        & Qwen3-14B                & 49.32 & 63.68 & 78.59 & 68.91 & 75.16 & 65.31 \\
        & DeepSeek-R1-8B           & 57.71 & 61.79 & 82.76 & 73.28 & 77.71 & 65.97 \\
        & Qwen3-32B                & 52.34 & 69.11 & 81.20 & 69.64 & 79.53 & 69.70 \\
        \midrule
        \multirow{16}{*}{SelfStepConf}
        & Qwen2.5-Math-7B          & 0.00  & 12.97 & 2.11  & 0.22  & 1.33  & 8.42  \\
        & Qwen3-0.6B*              & 1.19  & 22.92 & 2.81  & 2.66  & 9.53  & 15.80 \\
        & Qwen3-0.6B               & 8.81  & 21.25 & 11.00 & 16.42 & 18.68 & 18.41 \\
        & Llama3.1-8B-Instruct     & 0.10  & 28.78 & 4.89  & 0.59  & 3.39  & 18.77 \\
        & DeepSeek-R1-7B           & 19.31 & 12.39 & 42.12 & 30.63 & 44.26 & 20.57 \\
        & Qwen3-1.7B*              & 6.46  & 28.13 & 15.05 & 11.20 & 18.00 & 22.30 \\
        & Qwen3-4B*                & 11.93 & 42.20 & 24.17 & 19.06 & 28.33 & 34.15 \\
        & Qwen3-1.7B               & 23.17 & 35.82 & 48.47 & 34.39 & 49.21 & 36.95 \\
        & Qwen3-8B*                & 11.41 & 47.44 & 28.44 & 20.42 & 30.94 & 38.14 \\
        & Qwen3-14B*               & 12.76 & 54.24 & 30.05 & 21.64 & 34.02 & 42.14 \\
        & Qwen3-32B*               & 9.80  & 54.24 & 30.05 & 21.64 & 34.02 & 42.78 \\
        & Qwen3-4B                 & 43.83 & 53.19 & 75.25 & 63.44 & 66.28 & 56.59 \\
        & Qwen3-8B                 & 44.53 & 57.34 & 74.90 & 62.71 & 70.73 & 59.56 \\
        & Qwen3-14B                & 50.41 & 64.86 & 81.07 & 70.63 & 76.50 & 66.67 \\
        & DeepSeek-R1-8B           & 59.22 & 62.71 & 82.86 & 75.07 & 79.84 & 67.06 \\
        & Qwen3-32B                & 53.15 & 69.62 & 81.97 & 70.15 & 79.56 & 70.22 \\
      \bottomrule
    \end{tabular}
  }
  \end{center}
\end{table}

\subsection{Detailed Results on the Dynamic Impact of SelfStepConf on the Inference Process}
\label{sec:appendix_complete_token_analysis}
    In the main text \autoref{tab:analysis_ssc_token} of \autoref{sec:analysis_ssc_token}, we briefly analyzed the impact of \textit{SelfStepConf} on the inference process in terms of the number of steps, number of tokens, confidence, and generate time. Here, we additionally provide time analysis for each benchmark. In addition to DeepSeek-R1-8B, we also provide results for Qwen3-8B, as shown in \autoref{tab:appendix_complete_token_analysis}.
    \begin{table}[H]
  \caption{Complete results of response length changes between SSC and BasicInference. Results are obtained at \textbf{temperature=0}, generating one response for each query in every benchmark, with values averaged across all queries within each benchmark (Time: ms/it).}
  \label{tab:appendix_complete_token_analysis}
  \begin{center}
    \resizebox{0.9\textwidth}{!}{
    \begin{tabular}{llcccccccc}
      \toprule
      \multirow{2}{*}{Model} & \multirow{2}{*}{Benchmark} & \multicolumn{4}{c}{BasicInference} & \multicolumn{4}{c}{SelfStepConf} \\
      \cmidrule(lr){3-6} \cmidrule(lr){7-10}
                &           & Step & Token & Confs. & Time & Step & Token & Confs. & Time \\
      \midrule
      \multirow{7}{*}{DeepSeek-R1-8B}
      & HMMT2025  & 154.00 & 28266.73 & 17.03 & 396.07 & 154.40 & 28604.80 & 17.20 & 417.11 \\
      & GPQA-D    & 30.80  & 9560.65  & 13.31 & 124.27 & 29.27  & 9411.71  & 13.33 & 127.17 \\
      & AIME2024  & 88.63  & 21239.20 & 17.96 & 289.66 & 83.33  & 20733.97 & 17.92 & 295.24 \\
      & AIME2025  & 123.23 & 26673.87 & 17.87 & 374.35 & 128.10 & 26280.73 & 17.91 & 382.20 \\
      & BRUMO2025 & 135.40 & 23137.50 & 17.44 & 321.32 & 124.50 & 22291.60 & 17.45 & 318.73 \\
      \cmidrule{2-10}
      & Avg.      & 66.47  & 15322.42 & 14.92 & 207.70 & 64.48  & 15097.02 & 14.95 & 212.51 \\
      \midrule
      \midrule
      \multirow{7}{*}{Qwen3-8B}
      & HMMT2025  & 50.03  & 22114.77 & 15.13 & 302.55 & 622.63 & 24983.03 & 15.08 & 357.86 \\
      & GPQA-D    & 18.68  & 9757.86  & 13.97 & 130.35 & 60.47  & 5243.37  & 14.89 & 76.72 \\
      & AIME2024  & 39.60  & 15649.53 & 16.59 & 230.83 & 78.73  & 19586.87 & 16.24 & 279.85 \\
      & AIME2025  & 58.60  & 19786.97 & 16.08 & 270.24 & 97.63  & 20736.83 & 15.93 & 294.75 \\
      & BRUMO2025 & 54.77  & 16939.90 & 16.50 & 233.05 & 183.00 & 20881.97 & 15.90 & 297.39 \\
      \cmidrule{2-10}
      & Avg.      & 30.78  & 13103.12 & 14.76 & 178.96 & 130.29 & 11395.75 & 15.23 & 163.79 \\
      \bottomrule
    \end{tabular}
    }
  \end{center}
  \vskip -0.3in
\end{table}

\newpage
\subsection{Detailed Results of Sensitivity Analysis for Parameter $\alpha$}
\label{sec:appendix_complete_parameters_alpha_analysis}
    For the sensitivity analysis results of the parameter $\alpha$ shown in  of \autoref{sec:appendix_parameters_analysis_alpha}, we provide the complete experimental data here. This mainly includes the voting results of Pass@1, WSC-GMM* and DIS-GMM* using DeepSeek-R1-8B with $\alpha$ ranging from 0.1 to 0.9, as shown in \autoref{tab:appendix_complete_parameters_alpha_analysis}.
    \begin{table}[H]
    \caption{Complete of sensitivity analysis of $\alpha$ using DeepSeek-R1-8B, sampling 128 trajectories/query (64 repeats), setting $\delta=0.8$.}
    \label{tab:appendix_complete_parameters_alpha_analysis}
    \begin{center}
    \resizebox{0.98\textwidth}{!}{
    \begin{tabular}{llccccccccc}
      \toprule
        \multirow{2}{*}{Metric} & \multirow{2}{*}{Benchmark} & \multicolumn{9}{c}{$\delta$} \\
        \cmidrule(lr){3-11}
        & & 0.1 & 0.2 & 0.3 & 0.4 & 0.5 & 0.6 & 0.7 & 0.8 & 0.9 \\
        \midrule
        \midrule
        \multicolumn{11}{c}{\textbf{\textit{Accuracy Metric}}} \\
        \midrule
        \midrule
        \multirow{7}{*}{Pass@1}
        & HMMT2025   & 58.65 & 58.44 & 58.39 & 58.49 & 58.53 & 58.32 & 58.69 & 59.22 & 57.92 \\
        & GPQA-D    & 61.87 & 61.90 & 61.96 & 62.00 & 62.03 & 61.79 & 60.54 & 62.71 & 61.70 \\
        & AIME2024    & 83.65 & 83.39 & 83.07 & 83.47 & 83.45 & 82.66 & 83.29 & 82.86 & 82.34 \\
        & AIME2025    & 73.13 & 73.54 & 73.49 & 73.39 & 71.65 & 74.61 & 74.23 & 75.07 & 72.86 \\
        & BRUMO2025    & 78.54 & 78.80 & 78.80 & 78.70 & 79.12 & 78.12 & 79.12 & 79.84 & 78.70 \\
        \cmidrule{2-11}
        & Avg.    & 66.25 & 66.29 & 66.29 & 66.35 & 66.24 & 66.18 & 65.58 & 67.06 & 65.95 \\
        \midrule
        \multirow{7}{*}{WSC-GMM*}
        & HMMT2025    & 79.90 & 77.34 & 80.31 & 80.05 & 84.53 & 78.47 & 79.08 & 84.17 & 82.40 \\
        & GPQA-D    & 69.38 & 69.55 & 69.18 & 68.28 & 70.00 & 67.26 & 70.75 & 70.11 & 69.52 \\
        & AIME2024    & 93.33 & 93.33 & 93.33 & 93.33 & 93.32 & 93.32 & 93.32 & 93.33 & 93.28 \\
        & AIME2025    & 80.00 & 79.90 & 80.94 & 80.00 & 78.57 & 80.39 & 83.64 & 84.38 & 80.00 \\
        & BRUMO2025    & 93.33 & 93.33 & 93.33 & 93.33 & 96.29 & 91.05 & 93.07 & 94.17 & 93.33 \\
        \cmidrule{2-11}
        & Avg.    & 75.89 & 75.75 & 75.90 & 75.22 & 76.84 & 74.27 & 77.00 & 77.24 & 76.21 \\
        \midrule
        \multirow{7}{*}{DIS-GMM*}
        & HMMT2025    & 82.97 & 76.77 & 82.60 & 80.05 & 86.09 & 77.85 & 79.61 & 84.95 & 79.74 \\
        & GPQA-D    & 69.40 & 68.70 & 69.60 & 68.81 & 70.35 & 67.95 & 69.85 & 70.63 & 69.14 \\
        & AIME2024    & 93.33 & 93.33 & 93.33 & 93.33 & 93.30 & 93.30 & 93.30 & 93.23 & 93.23 \\
        & AIME2025    & 80.00 & 79.90 & 83.02 & 80.00 & 79.99 & 81.95 & 82.75 & 86.46 & 80.00 \\
        & BRUMO2025    & 93.33 & 93.33 & 93.33 & 93.33 & 95.16 & 92.63 & 92.68 & 94.27 & 93.33 \\
        \cmidrule{2-11}
        & Avg.    & 76.19 & 75.17 & 76.57 & 75.55 & 77.23 & 74.94 & 76.36 & 77.84 & 75.72 \\
        \midrule
        \midrule
        \multicolumn{11}{c}{\textbf{\textit{Analysis Metric}}} \\
        \midrule
        \midrule
        \multirow{7}{*}{Steps Count}
        & HMMT2025    & 151.20 & 150.11 & 148.40 & 149.17 & 147.18 & 150.51 & 146.20 & 140.14 & 148.77 \\
        & GPQA-D    & 31.33 & 31.38 & 31.46 & 31.21 & 31.65 & 31.14 & 30.98 & 31.13 & 31.04 \\
        & AIME2024    & 109.74 & 109.63 & 108.43 & 108.38 & 107.26 & 110.16 & 109.68 & 109.60 & 108.42 \\
        & AIME2025    & 131.01 & 130.16 & 129.28 & 129.89 & 132.58 & 123.98 & 132.33 & 127.08 & 130.38 \\
        & BRUMO2025    & 139.42 & 139.39 & 137.71 & 136.89 & 137.66 & 137.69 & 138.50 & 134.88 & 139.41 \\
        \cmidrule{2-11}
        & Avg.    & 69.64 & 69.47 & 69.01 & 68.90 & 69.24 & 68.70 & 68.92 & 67.66 & 69.04 \\
        \midrule
        \multirow{7}{*}{Tokens Count}
        & HMMT2025    & 29442.61 & 29314.99 & 29366.06 & 29318.86 & 29319.90 & 29541.56 & 29137.55 & 29368.05 & 29187.45 \\
        & GPQA-D    & 9745.97 & 9751.10 & 9741.49 & 9735.64 & 9722.62 & 9815.61 & 9689.64 & 9721.65 & 9759.88 \\
        & AIME2024    & 23005.22 & 23026.91 & 22870.45 & 22939.62 & 22745.89 & 23280.70 & 23119.20 & 23438.78 & 23010.60 \\
        & AIME2025    & 26444.03 & 26225.54 & 26356.69 & 26201.99 & 26843.64 & 25884.50 & 26175.51 & 26367.72 & 26211.33 \\
        & BRUMO2025    & 23355.66 & 23281.21 & 23119.51 & 23116.11 & 23081.49 & 23330.81 & 23262.30 & 23270.82 & 23205.88 \\
        \cmidrule{2-11}
        & Avg.    & 15714.24 & 15679.80 & 15660.99 & 15644.51 & 15671.50 & 15745.95 & 15622.83 & 15717.76 & 15663.25 \\
        \midrule
        \multirow{7}{*}{Confidence}
        & HMMT2025    & 15.77 & 15.79 & 15.76 & 15.77 & 15.78 & 15.76 & 15.82 & 15.83 & 15.78 \\
        & GPQA-D    & 12.43 & 12.43 & 12.43 & 12.44 & 12.43 & 12.42 & 12.43 & 12.42 & 12.43 \\
        & AIME2024    & 16.45 & 16.46 & 16.45 & 16.49 & 16.52 & 16.43 & 16.43 & 16.45 & 16.45 \\
        & AIME2025    & 16.49 & 16.51 & 16.50 & 16.53 & 16.46 & 16.62 & 16.49 & 16.58 & 16.54 \\
        & BRUMO2025    & 16.42 & 16.45 & 16.48 & 16.48 & 16.50 & 16.43 & 16.47 & 16.51 & 16.45 \\
        \cmidrule{2-11}
        & Avg.    & 13.88 & 13.89 & 13.89 & 13.90 & 13.90 & 13.89 & 13.89 & 13.90 & 13.89 \\
        \midrule
        \multirow{7}{*}{Time (ms/it)}
        & HMMT2025    & 20.15 & 20.27 & 20.11 & 19.93 & 19.03 & 19.50 & 18.81 & 14.31 & 19.90 \\
        & GPQA-D    & 5.87 & 5.89 & 5.87 & 5.79 & 5.54 & 5.66 & 5.53 & 4.17 & 5.86 \\
        & AIME2024    & 15.31 & 15.53 & 15.36 & 15.40 & 14.34 & 15.15 & 14.75 & 11.47 & 15.40 \\
        & AIME2025    & 18.01 & 17.94 & 17.83 & 17.42 & 17.37 & 16.52 & 16.82 & 12.73 & 17.50 \\
        & BRUMO2025    & 15.61 & 15.64 & 15.51 & 15.24 & 14.65 & 14.95 & 14.78 & 11.39 & 15.38 \\
        \cmidrule{2-11}
        & Avg.    & 10.17 & 10.21 & 10.15 & 10.02 & 9.62 & 9.76 & 9.59 & 7.30 & 10.08 \\
      \bottomrule
    \end{tabular}
    }
    \end{center}
\end{table}

\newpage
\subsection{Detailed Results of Sensitivity Analysis for Parameter $\delta$}
\label{sec:appendix_complete_parameters_delta_analysis}
    For the sensitivity analysis results of the parameter $\delta$ shown in  of \autoref{sec:appendix_parameters_analysis_delta}, we provide the complete experimental data here. This mainly includes the voting results of Pass@1, WSC-GMM* and DIS-GMM* using DeepSeek-R1-8B with $\delta$ ranging from 0.1 to 0.9, as shown in \autoref{tab:appendix_complete_parameters_delta_analysis}.
    
    \begin{table}[H]
    \caption{Complete of sensitivity analysis of $\delta$ using DeepSeek-R1-8B, sampling 128 trajectories/query(64 repeats), setting $\alpha=0.8$.}
    \label{tab:appendix_complete_parameters_delta_analysis}
    \begin{center}
    \resizebox{0.95\textwidth}{!}{
\begin{tabular}{llccccccccc}
      \toprule
        \multirow{2}{*}{Metric} & \multirow{2}{*}{Benchmark} & \multicolumn{9}{c}{$\delta$} \\
        \cmidrule(lr){3-11}
        & & 0.1 & 0.2 & 0.3 & 0.4 & 0.5 & 0.6 & 0.7 & 0.8 & 0.9 \\
        \midrule
        \midrule
        \multicolumn{11}{c}{\textbf{\textit{Accuracy Metric}}} \\
        \midrule
        \midrule
        \multirow{7}{*}{Pass@1}
        & HMMT2025   & 58.96 & 58.96 & 58.96 & 59.13 & 59.17 & 57.92 & 57.86 & 59.22 & 58.28 \\
        & GPQA-D    & 61.77 & 61.64 & 61.67 & 61.73 & 61.65 & 61.56 & 61.71 & 62.71 & 61.88 \\
        & AIME2024    & 82.66 & 82.66 & 82.66 & 82.71 & 82.45 & 82.71 & 82.86 & 82.86 & 82.97 \\
        & AIME2025    & 73.02 & 73.02 & 73.02 & 73.02 & 73.13 & 73.80 & 72.24 & 75.07 & 73.80 \\
        & BRUMO2025    & 81.11 & 81.11 & 81.11 & 81.28 & 81.28 & 78.59 & 78.07 & 79.84 & 79.38 \\
        \cmidrule{2-11}
        & Avg.    & 66.36 & 66.28 & 66.30 & 66.38 & 66.31 & 65.97 & 65.88 & 67.06 & 66.31 \\
        \midrule
        \multirow{7}{*}{WSC-GMM*}
        & HMMT2025    & 82.03 & 82.03 & 82.03 & 78.70 & 80.42 & 77.24 & 79.90 & 84.17 & 80.16 \\
        & GPQA-D    & 69.32 & 68.59 & 68.59 & 68.57 & 69.86 & 68.84 & 69.83 & 70.11 & 71.09 \\
        & AIME2024    & 93.07 & 93.07 & 93.07 & 93.07 & 93.07 & 93.33 & 93.33 & 93.33 & 93.33 \\
        & AIME2025    & 86.67 & 86.67 & 86.67 & 86.67 & 86.67 & 80.05 & 80.00 & 84.38 & 82.29 \\
        & BRUMO2025    & 96.61 & 96.61 & 96.61 & 96.61 & 96.61 & 96.04 & 93.33 & 94.17 & 93.33 \\
        \cmidrule{2-11}
        & Avg.    & 76.97 & 76.52 & 76.52 & 76.19 & 77.15 & 75.57 & 76.17 & 77.24 & 77.20 \\
        \midrule
        \multirow{7}{*}{DIS-GMM*}
        & HMMT2025    & 82.08 & 82.08 & 82.08 & 78.75 & 80.21 & 77.45 & 79.95 & 84.95 & 80.78 \\
        & GPQA-D    & 68.92 & 69.71 & 69.71 & 69.77 & 70.39 & 70.19 & 70.79 & 70.63 & 71.27 \\
        & AIME2024    & 93.23 & 93.23 & 93.23 & 93.23 & 93.23 & 93.33 & 93.28 & 93.23 & 93.33 \\
        & AIME2025    & 86.67 & 86.67 & 86.67 & 86.72 & 86.56 & 80.05 & 80.00 & 86.46 & 82.19 \\
        & BRUMO2025    & 96.61 & 96.61 & 96.61 & 96.61 & 96.61 & 95.42 & 93.33 & 94.27 & 93.33 \\
        \cmidrule{2-11}
        & Avg.    & 76.74 & 77.24 & 77.24 & 76.96 & 77.47 & 76.37 & 76.77 & 77.84 & 77.36 \\
        \midrule
        \midrule
        \multicolumn{11}{c}{\textbf{\textit{Analysis Metric}}} \\
        \midrule
        \midrule
        \multirow{7}{*}{Steps Count}
        & HMMT2025    & 149.39 & 149.39 & 149.33 & 153.73 & 149.22 & 150.52 & 149.19 & 140.14 & 159.92 \\
        & GPQA-D    & 31.04 & 32.94 & 33.05 & 32.83 & 33.05 & 32.42 & 31.93 & 31.13 & 33.24 \\
        & AIME2024    & 114.58 & 114.58 & 114.58 & 114.57 & 115.12 & 108.98 & 108.58 & 109.60 & 113.88 \\
        & AIME2025    & 128.66 & 128.66 & 128.66 & 129.03 & 128.94 & 132.75 & 133.61 & 127.08 & 135.34 \\
        & BRUMO2025    & 140.20 & 140.20 & 140.20 & 139.84 & 139.77 & 138.18 & 137.54 & 134.88 & 142.81 \\
        \cmidrule{2-11}
        & Avg.    & 69.59 & 70.78 & 70.84 & 71.12 & 70.87 & 70.23 & 69.78 & 67.66 & 72.77 \\
        \midrule
        \multirow{7}{*}{Tokens Count}
        & HMMT2025    & 29410.30 & 29410.30 & 29407.64 & 29480.44 & 29403.09 & 29389.67 & 29401.74 & 29368.05 & 30271.32 \\
        & GPQA-D    & 9640.65 & 9710.41 & 9719.70 & 9698.47 & 9715.40 & 9704.60 & 9718.37 & 9721.65 & 9972.02 \\
        & AIME2024    & 23414.50 & 23414.50 & 23414.50 & 23406.22 & 23467.06 & 22798.41 & 22774.72 & 23438.78 & 23779.66 \\
        & AIME2025    & 26129.52 & 26129.52 & 26129.52 & 26151.06 & 26183.19 & 26215.12 & 26387.48 & 26367.72 & 27142.56 \\
        & BRUMO2025    & 23290.87 & 23290.87 & 23290.87 & 23260.09 & 23226.01 & 23043.78 & 23038.24 & 23270.82 & 24036.34 \\
        \cmidrule{2-11}
        & Avg.    & 15648.44 & 15691.88 & 15697.41 & 15689.41 & 15698.21 & 15612.96 & 15636.17 & 15717.76 & 16136.34 \\
        \midrule
        \multirow{7}{*}{Confidence}
        & HMMT2025    & 15.91 & 15.91 & 15.91 & 15.95 & 15.90 & 15.91 & 15.90 & 15.83 & 15.02 \\
        & GPQA-D    & 12.48 & 12.48 & 12.48 & 12.48 & 12.48 & 12.48 & 12.47 & 12.42 & 12.29 \\
        & AIME2024    & 16.54 & 16.54 & 16.54 & 16.54 & 16.53 & 16.59 & 16.58 & 16.45 & 15.80 \\
        & AIME2025    & 16.70 & 16.70 & 16.70 & 16.70 & 16.69 & 16.70 & 16.65 & 16.58 & 15.85 \\
        & BRUMO2025    & 16.60 & 16.60 & 16.60 & 16.61 & 16.60 & 16.64 & 16.63 & 16.51 & 15.78 \\
        \cmidrule{2-11}
        & Avg.    & 13.97 & 13.97 & 13.97 & 13.98 & 13.97 & 13.98 & 13.97 & 13.90 & 13.54 \\
        \midrule
        \multirow{7}{*}{Time (ms/it)}
        & HMMT2025    & 24.77 & 24.66 & 24.76 & 24.93 & 24.78 & 20.46 & 20.22 & 14.31 & 21.02 \\
        & GPQA-D    & 6.03 & 5.86 & 5.80 & 5.87 & 5.77 & 5.85 & 5.82 & 4.17 & 5.98 \\
        & AIME2024    & 18.61 & 18.58 & 18.63 & 18.62 & 18.69 & 15.14 & 15.24 & 11.47 & 16.18 \\
        & AIME2025    & 21.74 & 21.69 & 21.68 & 21.78 & 21.82 & 17.67 & 17.90 & 12.73 & 18.46 \\
        & BRUMO2025    & 18.79 & 18.78 & 18.78 & 18.80 & 18.79 & 15.60 & 15.51 & 11.39 & 16.18 \\
        \cmidrule{2-11}
        & Avg.    & 11.67 & 11.55 & 11.52 & 11.59 & 11.52 & 10.14 & 10.12 & 7.30 & 10.50 \\
      \bottomrule
    \end{tabular}
    }
    \end{center}
    \vskip -0.2in
\end{table}

\newpage
\subsection{Detailed Results of Sensitivity Analysis for Parameter $\alpha \times \delta$}
\label{sec:appendix_complete_parameters_alphaDelta_analysis}
    For the sensitivity analysis results of the parameter $\alpha \times \delta$ shown in  of \autoref{sec:appendix_parameters_analysis_delta}, we provide the complete experimental data here. This mainly includes the voting results of Pass@1, WSC-GMM* and DIS-GMM* using DeepSeek-R1-8B on HMMT2025 with $\alpha$ and $\delta$ ranging from 0.1 to 0.9, as shown in \autoref{tab:appendix_complete_parameters_alphaDelta_analysis_1} and \autoref{tab:appendix_complete_parameters_alphaDelta_analysis_2}.

    \begin{table}[H]
  \vskip -0.1in
  \caption{Complete results of parameter sensitivity analysis of $\alpha \times \delta$ on HMMT2025 using DeepSeek-R1-8B, sampling 128 trajectories/query (64 repeats). \textbf{Evaluate using Analysis Metric.}}
  \label{tab:appendix_complete_parameters_alphaDelta_analysis_1}
  \begin{center}
    \resizebox{0.98\textwidth}{!}{
    \begin{tabular}{llccccccccc}
      \toprule
        \multirow{2}{*}{Metric} & \multirow{2}{*}{$\alpha$} & \multicolumn{9}{c}{$\delta$} \\
        \cmidrule(lr){3-11}
        & & 0.1 & 0.2 & 0.3 & 0.4 & 0.5 & 0.6 & 0.7 & 0.8 & 0.9 \\ 
        \midrule
        \multirow{9}{*}{Steps Count}
        & 0.1 & 149.39 & 149.39 & 149.39 & 149.34 & 150.66 & 150.36 & 148.73 & 151.20 & 160.66 \\
        & 0.2 & 149.39 & 149.39 & 149.39 & 149.39 & 150.15 & 150.25 & 149.21 & 150.11 & 158.18 \\
        & 0.3 & 149.39 & 149.39 & 149.39 & 149.39 & 150.21 & 150.61 & 149.34 & 148.40 & 157.68 \\
        & 0.4 & 149.39 & 149.39 & 149.39 & 149.65 & 150.21 & 150.67 & 149.60 & 149.17 & 158.09 \\
        & 0.5 & 149.39 & 149.39 & 149.39 & 149.65 & 150.21 & 151.46 & 151.98 & 148.97 & 158.97 \\
        & 0.6 & 149.39 & 149.39 & 149.39 & 149.65 & 149.73 & 163.15 & 147.19 & 152.34 & 157.30 \\
        & 0.7 & 149.39 & 149.39 & 149.39 & 149.65 & 149.22 & 148.59 & 150.93 & 147.97 & 158.29 \\
        & 0.8 & 149.39 & 149.39 & 149.33 & 149.65 & 149.69 & 150.52 & 149.19 & 140.14 & 159.92 \\
        & 0.9 & 149.39 & 148.07 & 149.39 & 153.73 & 149.69 & 148.81 & 149.64 & 148.77 & 156.52 \\
        \midrule
        \multirow{9}{*}{Tokens Count}
        & 0.1 & 29410.30 & 29410.30 & 29410.30 & 29410.69 & 29496.48 & 29434.60 & 29210.27 & 29442.61 & 30491.65 \\
        & 0.2 & 29410.30 & 29410.30 & 29410.30 & 29410.30 & 29441.36 & 29435.32 & 29317.60 & 29314.99 & 30306.47 \\
        & 0.3 & 29410.30 & 29410.30 & 29410.30 & 29410.30 & 29447.65 & 29462.27 & 29310.82 & 29366.06 & 30401.81 \\
        & 0.4 & 29410.30 & 29410.30 & 29410.30 & 29436.85 & 29447.65 & 29475.31 & 29319.77 & 29318.86 & 30287.35 \\
        & 0.5 & 29410.30 & 29410.30 & 29410.30 & 29436.85 & 29447.65 & 29520.59 & 29497.41 & 29637.60 & 30471.49 \\
        & 0.6 & 29410.30 & 29410.30 & 29410.30 & 29436.85 & 29355.62 & 29693.73 & 29325.25 & 29389.67 & 31158.71 \\
        & 0.7 & 29410.30 & 29410.30 & 29410.30 & 29436.85 & 29403.09 & 29325.25 & 29500.52 & 29453.27 & 30498.60 \\
        & 0.8 & 29410.30 & 29410.30 & 29407.64 & 29480.44 & 29423.84 & 29389.67 & 29401.74 & 29368.05 & 30271.32 \\
        & 0.9 & 29410.30 & 29481.11 & 29410.30 & 29436.85 & 29423.84 & 29395.96 & 29435.42 & 29187.45 & 30187.49 \\
        \midrule
        \multirow{9}{*}{Confidence}
        & 0.1 & 15.91 & 15.91 & 15.91 & 15.91 & 15.89 & 15.93 & 15.96 & 15.77 & 14.96 \\
        & 0.2 & 15.91 & 15.91 & 15.91 & 15.91 & 15.92 & 15.93 & 15.93 & 15.79 & 15.02 \\
        & 0.3 & 15.91 & 15.91 & 15.91 & 15.91 & 15.91 & 15.92 & 15.93 & 15.76 & 14.97 \\
        & 0.4 & 15.91 & 15.91 & 15.91 & 15.90 & 15.91 & 15.93 & 15.93 & 15.77 & 14.98 \\
        & 0.5 & 15.91 & 15.91 & 15.91 & 15.90 & 15.91 & 15.91 & 15.89 & 15.61 & 14.96 \\
        & 0.6 & 15.91 & 15.91 & 15.91 & 15.90 & 15.90 & 15.91 & 15.89 & 15.59 & 14.85 \\
        & 0.7 & 15.91 & 15.91 & 15.91 & 15.90 & 15.90 & 15.95 & 15.86 & 15.65 & 14.98 \\
        & 0.8 & 15.91 & 15.91 & 15.91 & 15.95 & 15.90 & 15.91 & 15.90 & 15.83 & 15.02 \\
        & 0.9 & 15.91 & 15.91 & 15.91 & 15.90 & 15.90 & 15.89 & 15.90 & 15.78 & 14.99 \\
        \midrule
        \multirow{9}{*}{Reflections Count}
        & 0.1 & 0 & 0 & 0.000529 & 0.002092 & 0.010502 & 0.034449 & 0.321665 & 4.01803 & 28.00328 \\
        & 0.2 & 0 & 0 & 0.000529 & 0.002092 & 0.007813 & 0.034449 & 0.302839 & 3.686321 & 26.48138 \\
        & 0.3 & 0 & 0 & 0.000529 & 0.002092 & 0.007292 & 0.033929 & 0.271697 & 3.403557 & 24.61753 \\
        & 0.4 & 0 & 0 & 0.000529 & 0.002092 & 0.007292 & 0.031845 & 0.25119 & 3.227719 & 22.9203 \\
        & 0.5 & 0 & 0 & 0.000529 & 0.002092 & 0.007292 & 0.031845 & 0.225324 & 2.37307 & 24.35578 \\
        & 0.6 & 0 & 0 & 0.000529 & 0.002092 & 0.007292 & 0.035053 & 0.184679 & 2.428287 & 24.59583 \\
        & 0.7 & 0 & 0 & 0.000529 & 0.002612 & 0.008854 & 0.040833 & 0.163306 & 2.306881 & 24.03886 \\
        & 0.8 & 0 & 0 & 0.000529 & 0.002092 & 0.010475 & 0.028646 & 0.195833 & 2.746875 & 26.40324 \\
        & 0.9 & 0 & 0 & 0.000529 & 0.002092 & 0.010995 & 0.039674 & 0.185857 & 3.375634 & 26.08073 \\

        \midrule
        \multirow{9}{*}{Time (ms/it)}
        & 0.1 & 24.67 & 24.73 & 24.67 & 24.70 & 24.79 & 18.61 & 18.45 & 20.15 & 19.45 \\
        & 0.2 & 24.65 & 24.68 & 24.71 & 24.69 & 18.60 & 18.66 & 18.59 & 20.27 & 19.12 \\
        & 0.3 & 24.65 & 24.70 & 24.66 & 24.68 & 18.60 & 18.56 & 18.41 & 20.11 & 25.58 \\
        & 0.4 & 24.70 & 24.74 & 24.72 & 24.70 & 18.56 & 18.61 & 18.57 & 19.93 & 28.05 \\
        & 0.5 & 24.68 & 24.68 & 24.68 & 24.65 & 18.61 & 18.55 & 21.54 & 51.62 & 25.64 \\
        & 0.6 & 24.73 & 24.60 & 24.66 & 24.75 & 18.62 & 25.07 & 16.74 & 52.89 & 30.53 \\
        & 0.7 & 24.63 & 24.67 & 24.61 & 24.76 & 20.12 & 17.78 & 24.72 & 51.04 & 25.59 \\
        & 0.8 & 24.77 & 24.66 & 24.76 & 24.93 & 24.78 & 20.46 & 20.22 & 14.31 & 21.02 \\
        & 0.9 & 24.94 & 27.79 & 24.85 & 24.93 & 24.69 & 19.90 & 16.94 & 19.90 & 25.21 \\
      \bottomrule
    \end{tabular}
    }
  \end{center}
\end{table}

\begin{table}[ht]
  \caption{Complete results of parameter sensitivity analysis of $\alpha \times \delta$ on HMMT2025 using DeepSeek-R1-8B, sampling 128 trajectories/query (64 repeats). \textbf{Evaluate using Accuracy Metric.}}
  \label{tab:appendix_complete_parameters_alphaDelta_analysis_2}
  \begin{center}
    \resizebox{0.9\textwidth}{!}{
    \begin{tabular}{llccccccccc}
      \toprule
        \multirow{2}{*}{Metric} & \multirow{2}{*}{$\alpha$} & \multicolumn{9}{c}{$\delta$} \\
        \cmidrule(lr){3-11}
        & & 0.1 & 0.2 & 0.3 & 0.4 & 0.5 & 0.6 & 0.7 & 0.8 & 0.9 \\ 
        \midrule
        \multirow{9}{*}{Pass@1}
        & 0.1 & 58.96 & 58.96 & 58.96 & 58.96 & 59.17 & 58.54 & 59.22 & 58.65 & 58.13 \\
        & 0.2 & 58.96 & 58.96 & 58.96 & 58.96 & 58.49 & 58.54 & 59.06 & 58.44 & 58.96 \\
        & 0.3 & 58.96 & 58.96 & 58.96 & 58.96 & 58.49 & 58.59 & 59.22 & 58.39 & 57.76 \\
        & 0.4 & 58.96 & 58.96 & 58.96 & 58.91 & 58.49 & 58.59 & 58.18 & 58.49 & 57.24 \\
        & 0.5 & 58.96 & 58.96 & 58.96 & 58.91 & 58.49 & 58.70 & 58.28 & 57.97 & 57.55 \\
        & 0.6 & 58.96 & 58.96 & 58.96 & 58.91 & 58.49 & 58.54 & 58.02 & 57.76 & 56.98 \\
        & 0.7 & 58.96 & 58.96 & 58.96 & 58.91 & 58.49 & 58.54 & 58.75 & 58.13 & 57.34 \\
        & 0.8 & 58.96 & 58.96 & 58.96 & 59.13 & 59.17 & 57.92 & 57.86 & 59.22 & 58.28 \\
        & 0.9 & 58.96 & 58.96 & 58.96 & 58.91 & 58.49 & 58.54 & 58.02 & 57.92 & 58.28 \\
        \midrule
        \multirow{9}{*}{WSC-GMM*}
        & 0.1 & 82.03 & 82.03 & 82.03 & 82.03 & 82.29 & 79.74 & 84.95 & 79.90 & 79.84 \\
        & 0.2 & 82.03 & 82.03 & 82.03 & 82.03 & 79.58 & 79.74 & 79.27 & 77.34 & 79.84 \\
        & 0.3 & 82.03 & 82.03 & 82.03 & 82.03 & 79.58 & 79.74 & 80.05 & 80.31 & 79.90 \\
        & 0.4 & 82.03 & 82.03 & 82.03 & 82.76 & 79.58 & 79.74 & 79.90 & 80.05 & 78.28 \\
        & 0.5 & 82.03 & 82.03 & 82.03 & 82.76 & 79.58 & 79.64 & 76.20 & 79.90 & 79.27 \\
        & 0.6 & 82.03 & 82.03 & 82.03 & 82.76 & 79.58 & 82.19 & 85.05 & 74.17 & 73.28 \\
        & 0.7 & 82.03 & 82.03 & 82.03 & 82.76 & 79.79 & 76.77 & 83.28 & 74.74 & 84.32 \\
        & 0.8 & 82.03 & 82.03 & 82.03 & 78.70 & 80.42 & 77.24 & 79.90 & 84.17 & 80.16 \\
        & 0.9 & 82.03 & 82.03 & 82.03 & 82.03 & 72.92 & 78.88 & 80.42 & 82.40 & 78.91 \\
        \midrule
        \multirow{9}{*}{DIS-GMM*}
        & 0.1 & 82.08 & 82.08 & 82.08 & 82.08 & 81.98 & 79.22 & 84.01 & 82.97 & 76.88 \\
        & 0.2 & 82.08 & 82.08 & 82.08 & 82.08 & 78.85 & 79.22 & 79.01 & 76.77 & 80.73 \\
        & 0.3 & 82.08 & 82.08 & 82.08 & 82.08 & 78.85 & 79.22 & 80.05 & 82.60 & 75.94 \\
        & 0.4 & 82.08 & 82.08 & 82.08 & 82.70 & 78.85 & 79.22 & 80.05 & 80.05 & 79.74 \\
        & 0.5 & 82.08 & 82.08 & 82.08 & 82.70 & 78.85 & 79.22 & 78.49 & 81.67 & 80.21 \\
        & 0.6 & 82.08 & 82.08 & 82.08 & 82.70 & 79.79 & 83.28 & 85.26 & 73.85 & 75.47 \\
        & 0.7 & 82.08 & 82.08 & 82.08 & 82.70 & 80.21 & 74.64 & 83.23 & 75.52 & 83.96 \\
        & 0.8 & 82.08 & 82.08 & 82.08 & 78.75 & 80.21 & 77.45 & 79.95 & 84.95 & 80.78 \\
        & 0.9 & 82.08 & 82.08 & 82.08 & 82.08 & 77.08 & 78.29 & 80.31 & 79.74 & 79.27 \\
      \bottomrule
    \end{tabular}
    }
  \end{center}
  \vskip -0.1in
\end{table}

\newpage
\subsection{Detailed Results of Sensitivity Analysis for Parameter $N_\mathcal{C}$}
\label{sec:appendix_complete_parameters_Nc_analysis}
    For the sensitivity analysis results of the parameter $N_{\mathcal{C}}$ shown in \autoref{fig:appendix_parameters_analysis_Nc} of \autoref{sec:appendix_parameters_analysis_Nc}, we provide the complete experimental data here. This mainly includes the voting results of \textit{DistriVoting} under different filter methods on DeepSeek-R1-8B with $N_{\mathcal{C}}$ ranging from 1 to 20, as shown in \autoref{tab:appendix_complete_parameters_Nc_analysis_1} and \autoref{tab:appendix_complete_parameters_Nc_analysis_2}.

\begin{table}[H]
  \vskip -0.1in
  \caption{Complete results of parameter sensitivity analysis of $N_\mathcal{C}$ under different voting methods. Using DeepSeek-R1-8B to generate 128 trajectories per query for voting, with experiments repeated 64 times ($N_\mathcal{C} \in [1,10]$).}
  \label{tab:appendix_complete_parameters_Nc_analysis_1}
  \begin{center}
    \resizebox{0.9\textwidth}{!}{
    \begin{tabular}{llcccccccccc}
      \toprule
        Method & Benchmark & 1 & 2 & 3 & 4 & 5 & 6 & 7 & 8 & 9 & 10 \\
        \midrule
        \multirow{6}{*}{DIS-Top50}
        & HMMT2025 & 74.17 & 78.39 & 80.63 & 80.31 & 80.36 & 79.95 & 81.09 & 79.53 & 80.89 & 79.64 \\
        & GPQA-D & 68.55 & 70.08 & 69.74 & 69.65 & 69.55 & 69.59 & 69.44 & 69.48 & 69.41 & 69.31 \\
        & AIME2024 & 89.38 & 92.08 & 93.18 & 93.18 & 93.18 & 93.07 & 93.13 & 93.07 & 93.13 & 93.07 \\
        & AIME2025 & 82.45 & 84.27 & 84.27 & 84.06 & 84.43 & 84.79 & 84.38 & 84.22 & 84.38 & 84.38 \\
        & BRUMO2025 & 93.33 & 93.96 & 93.65 & 93.70 & 93.70 & 93.59 & 93.44 & 93.44 & 93.49 & 93.39 \\
        \cmidrule{2-12}
        & Avg. & 74.70 & 76.53 & 76.60 & 76.50 & 76.48 & 76.48 & 76.45 & 76.30 & 76.42 & 76.22 \\
        \midrule
        \midrule
        \multirow{6}{*}{DIS-GMM}
        & HMMT2025 & 83.02 & 80.42 & 81.61 & 81.88 & 82.08 & 82.08 & 82.81 & 82.55 & 82.66 & 82.55 \\
        & GPQA-D & 69.44 & 69.84 & 70.06 & 69.79 & 69.67 & 69.66 & 69.48 & 69.57 & 69.58 & 69.63 \\
        & AIME2024 & 93.28 & 93.23 & 93.23 & 93.33 & 93.28 & 93.28 & 93.28 & 93.28 & 93.33 & 93.28 \\
        & AIME2025 & 86.72 & 84.74 & 85.63 & 85.78 & 86.09 & 85.78 & 85.68 & 85.83 & 86.04 & 85.73 \\
        & BRUMO2025 & 93.70 & 94.06 & 94.01 & 93.96 & 93.91 & 93.85 & 94.01 & 94.11 & 93.96 & 93.85 \\
        \cmidrule{2-12}
        & Avg. & 76.89 & 76.73 & 77.06 & 76.94 & 76.90 & 76.86 & 76.83 & 76.88 & 76.91 & 76.89 \\
        \midrule
        \midrule
        \multirow{6}{*}{DIS-GMM*}
        & HMMT2025 & 85.05 & 82.45 & 82.50 & 82.86 & 83.07 & 83.02 & 83.23 & 83.54 & 83.75 & 83.75 \\
        & GPQA-D & 70.19 & 70.56 & 70.68 & 70.63 & 70.64 & 70.50 & 70.54 & 70.74 & 70.57 & 70.69 \\
        & AIME2024 & 92.86 & 92.97 & 93.02 & 93.07 & 93.13 & 93.23 & 93.18 & 93.18 & 93.18 & 93.18 \\
        & AIME2025 & 85.00 & 83.91 & 84.48 & 84.22 & 84.74 & 84.32 & 84.69 & 85.31 & 84.79 & 84.69 \\
        & BRUMO2025 & 94.74 & 95.68 & 95.57 & 95.21 & 94.84 & 95.05 & 94.79 & 94.74 & 94.69 & 94.64 \\
        \cmidrule{2-12}
        & Avg. & 77.45 & 77.42 & 77.55 & 77.50 & 77.55 & 77.45 & 77.50 & 77.70 & 77.56 & 77.62 \\
      \bottomrule
    \end{tabular}
    }
  \end{center}
  \vskip -0.2in
\end{table}

\begin{table}[ht]
  \caption{Complete results of parameter sensitivity analysis of $N_\mathcal{C}$ under different voting methods. Using DeepSeek-R1-8B to generate 128 trajectories per query for voting, with experiments repeated 64 times ($N_\mathcal{C} \in [11,20]$).}
  \label{tab:appendix_complete_parameters_Nc_analysis_2}
  \begin{center}
    \resizebox{0.9\textwidth}{!}{
    \begin{tabular}{llcccccccccc}
      \toprule
        Method & Benchmark & 11 & 12 & 13 & 14 & 15 & 16 & 17 & 18 & 19 & 20 \\
        \midrule
        \multirow{6}{*}{DIS-Top50}
        & HMMT2025 & 80.05 & 80.42 & 80.31 & 80.63 & 80.36 & 80.31 & 79.64 & 80.31 & 80.05 & 79.64 \\
        & GPQA-D & 69.28 & 69.55 & 69.47 & 69.54 & 69.33 & 69.53 & 69.36 & 69.35 & 69.44 & 69.28 \\
        & AIME2024 & 93.02 & 93.07 & 93.02 & 93.13 & 93.07 & 92.97 & 92.97 & 92.92 & 92.81 & 92.76 \\
        & AIME2025 & 84.32 & 84.22 & 84.43 & 84.11 & 83.91 & 83.59 & 83.96 & 84.01 & 83.54 & 83.39 \\
        & BRUMO2025 & 93.39 & 93.39 & 93.33 & 93.39 & 93.39 & 93.28 & 93.33 & 93.28 & 93.33 & 93.33 \\
        \cmidrule{2-12}
        & Avg. & 76.23 & 76.42 & 76.38 & 76.43 & 76.25 & 76.33 & 76.19 & 76.25 & 76.23 & 76.07 \\
        \midrule
        \midrule
        \multirow{6}{*}{DIS-GMM}
        & HMMT2025 & 83.59 & 83.02 & 83.02 & 83.28 & 83.39 & 83.18 & 83.49 & 83.13 & 83.23 & 82.86 \\
        & GPQA-D & 69.66 & 69.70 & 69.67 & 69.65 & 69.76 & 69.87 & 69.74 & 69.71 & 69.67 & 69.67 \\
        & AIME2024 & 93.28 & 93.28 & 93.33 & 93.28 & 93.28 & 93.28 & 93.28 & 93.28 & 93.28 & 93.28 \\
        & AIME2025 & 86.51 & 86.30 & 85.94 & 86.15 & 85.83 & 85.94 & 85.99 & 85.78 & 85.89 & 85.94 \\
        & BRUMO2025 & 93.91 & 93.91 & 93.91 & 93.80 & 93.80 & 93.80 & 93.85 & 93.80 & 93.80 & 93.70 \\
        \cmidrule{2-12}
        & Avg. & 77.08 & 77.03 & 76.99 & 77.00 & 77.05 & 77.11 & 77.07 & 76.90 & 76.98 & 76.95 \\
        \midrule
        \midrule
        \multirow{6}{*}{DIS-GMM*}
        & HMMT2025 & 84.32 & 84.22 & 84.53 & 84.74 & 84.64 & 84.84 & 84.79 & 84.69 & 84.64 & 84.53 \\
        & GPQA-D & 70.73 & 70.63 & 70.69 & 70.56 & 70.66 & 70.79 & 70.63 & 70.62 & 70.58 & 70.45 \\
        & AIME2024 & 93.18 & 93.13 & 93.18 & 93.18 & 93.13 & 93.13 & 93.18 & 93.18 & 93.18 & 93.18 \\
        & AIME2025 & 84.69 & 84.43 & 84.38 & 84.17 & 84.69 & 84.17 & 84.58 & 84.27 & 84.58 & 84.43 \\
        & BRUMO2025 & 94.69 & 94.74 & 94.69 & 94.58 & 94.58 & 94.58 & 94.58 & 94.48 & 94.48 & 94.58 \\
        \cmidrule{2-12}
        & Avg. & 77.71 & 77.61 & 77.67 & 77.58 & 77.68 & 77.73 & 77.67 & 77.61 & 77.61 & 77.52 \\
      \bottomrule
    \end{tabular}
    }
  \end{center}
  \vskip -0.1in
\end{table}

\subsection{Detailed Results of Step Split Ablation}
\label{sec:appendix_complete_step_split_ablation}
    For the metrics Avg Pass@1, Steps Count, Tokens Count, Confidence, Time, and Reflections Count under different step split methods shown in \autoref{fig:supp_exp_step_split_ablation} of \autoref{sec:supp_exp_step_split_ablation}, we have supplemented the complete experimental data here, as shown in \autoref{tab:ablation_step_split_metric}.

    \begin{table}[ht]
  \caption{Complete ablation study results of different step splitting methods. Using DeepSeek-R1-8B to generate 64 trajectories for each question across 5 benchmarks, computing the \textbf{Analysis Metrics}. HET denotes High-Entropy Token.}
  \label{tab:ablation_step_split_metric}
  \begin{center}
    \resizebox{0.9\textwidth}{!}{
    \begin{tabular}{llcccccccc}
      \toprule
      \multirow{2}{*}{Metric} & \multirow{2}{*}{Benchmark} & \multirow{2}{*}{``\textbackslash n\textbackslash n''} & \multirow{2}{*}{``\textbackslash n''} & \multirow{2}{*}{HET} & \multicolumn{4}{c}{Fixed Window} \\
      \cmidrule{6-9}
      & & & & & 256 & 512 & 1024 & 2048 \\
      \midrule
      \multirow{7}{*}{Steps Count}
      & HMMT2025  & 140.14 & 148.97 & 145.93 & 146.54 & 149.36 & 149.27 & 148.54 \\
      & GPQA-D    & 31.13 & 31.27 & 42.70 & 27.04 & 41.31 & 34.45 & 41.23 \\
      & AIME2024  & 109.60 & 113.68 & 108.24 & 113.79 & 114.13 & 114.28 & 108.59 \\
      & AIME2025  & 127.08 & 129.03 & 123.41 & 127.02 & 128.76 & 130.05 & 123.92 \\
      & BRUMO2025 & 134.88 & 139.24 & 133.23 & 136.11 & 136.73 & 140.19 & 134.80 \\
      \cmidrule(lr){2-9}
      & Avg.      & 67.66 & 69.56 & 74.78 & 66.22 & 75.63 & 71.81 & 74.34 \\
      \midrule
      \multirow{7}{*}{Tokens Count}
      & HMMT2025  & 29368.05 & 29487.89 & 29272.19 & 29448.24 & 28928.73 & 29365.49 & 29333.62 \\
      & GPQA-D    & 9721.65 & 9675.06 & 13493.27 & 9855.01 & 13386.14 & 10356.28 & 13442.01 \\
      & AIME2024  & 23438.78 & 23302.43 & 23356.76 & 23353.05 & 23244.43 & 23333.48 & 23458.67 \\
      & AIME2025  & 26367.72 & 26182.84 & 26782.66 & 26036.69 & 26090.98 & 26258.83 & 26818.51 \\
      & BRUMO2025 & 23270.82 & 23201.30 & 22994.43 & 22975.56 & 23063.84 & 23293.29 & 23064.78 \\
      \cmidrule(lr){2-9}
      & Avg.      & 15717.76 & 15663.19 & 18062.42 & 15741.19 & 17894.01 & 16094.58 & 18055.93 \\
      \midrule
      \multirow{7}{*}{Confidence}
      & HMMT2025  & 15.83 & 15.67 & 15.67 & 15.83 & 15.91 & 15.92 & 15.77 \\
      & GPQA-D    & 12.42 & 12.27 & 11.27 & 12.30 & 11.32 & 12.28 & 11.34 \\
      & AIME2024  & 16.45 & 16.33 & 16.35 & 16.49 & 16.56 & 16.56 & 16.39 \\
      & AIME2025  & 16.58 & 16.47 & 16.02 & 16.63 & 16.66 & 16.68 & 16.09 \\
      & BRUMO2025 & 16.51 & 16.35 & 16.48 & 16.55 & 16.52 & 16.61 & 16.51 \\
      \cmidrule(lr){2-9}
      & Avg.      & 13.90 & 13.75 & 13.10 & 13.84 & 13.24 & 13.85 & 13.17 \\
      \midrule
      \multirow{7}{*}{Time (ms/it)}
      & HMMT2025  & 14.31 & 24.74 & 50.62 & 24.43 & 23.98 & 24.41 & 31.99 \\
      & GPQA-D    & 4.17 & 6.12 & 20.10 & 6.15 & 19.74 & 8.25 & 9.36 \\
      & AIME2024  & 11.47 & 18.32 & 37.40 & 18.29 & 18.14 & 18.27 & 23.73 \\
      & AIME2025  & 12.73 & 21.87 & 46.51 & 21.21 & 21.40 & 21.73 & 27.06 \\
      & BRUMO2025 & 11.39 & 18.70 & 37.21 & 18.28 & 32.93 & 18.67 & 29.21 \\
      \cmidrule(lr){2-9}
      & Avg.      & 7.30 & 11.70 & 28.72 & 11.58 & 21.39 & 12.97 & 16.39 \\
      \midrule
      \multirow{7}{*}{Reflections Count}
      & HMMT2025  & 2.747 & 0.791 & 1.764 & 1.354 & 0.271 & 0.064 & 0.085 \\
      & GPQA-D    & 0.931 & 1.103 & 2.757 & 1.116 & 1.275 & 0.227 & 0.559 \\
      & AIME2024  & 2.393 & 0.597 & 1.576 & 1.105 & 0.196 & 0.040 & 0.078 \\
      & AIME2025  & 2.573 & 0.509 & 2.513 & 1.573 & 0.329 & 0.060 & 0.174 \\
      & BRUMO2025 & 3.197 & 0.772 & 1.862 & 1.584 & 0.284 & 0.062 & 0.017 \\
      \cmidrule(lr){2-9}
      & Avg.      & 1.609 & 0.939 & 2.445 & 1.225 & 0.896 & 0.163 & 0.381 \\
      \bottomrule
    \end{tabular}
    }
  \end{center}
  \vskip -0.5in
\end{table}



\end{document}